\definecolor{codegreen}{rgb}{0,0.6,0}
\definecolor{codegray}{rgb}{0.5,0.5,0.5}
\definecolor{codepurple}{rgb}{0.58,0,0.82}
\definecolor{backcolour}{rgb}{0.95,0.95,0.92}
\definecolor{functionred}{rgb}{0.8,0.1,0.1}
\definecolor{orange1}{RGB}{193, 84, 17}
\definecolor{green1}{RGB}{110, 179, 17}
\lstdefinestyle{mystyle}{
    backgroundcolor=\color{backcolour},
    commentstyle=\color{codegreen},
    keywordstyle=\color{magenta},
    numberstyle=\tiny\color{codegray},
    stringstyle=\color{codepurple},
    basicstyle=\ttfamily\small,
    breakatwhitespace=false,
    breaklines=true,
    captionpos=b,
    keepspaces=true,
    numbers=left,
    numbersep=5pt,
    showspaces=false,
    showstringspaces=false,
    showtabs=false,
    tabsize=2,
    emph={RetrieveFromVectorstore, RetrieveFromDatabase, ViewImage, GenerateAnswer, CalculateExpr}, 
    emphstyle={\bfseries\color{functionred}}, 
}
\definecolor{mybox_title_bg}{RGB}{156, 142, 173}
\definecolor{mybox_content_bg}{RGB}{240, 238, 243}
\newtcolorbox{mybox}[1][]{%
    colback=mybox_content_bg,      
    colframe=mybox_title_bg,     
    colbacktitle=mybox_title_bg, 
    coltitle=white,              
    fonttitle=\bfseries\rmfamily, 
    fontupper=\rmfamily,         
    arc=2mm,                     
    boxrule=1pt,                 
    title={#1}, 
}
\newcolumntype{C}[1]{>{\centering\arraybackslash}p{#1}}
\newcolumntype{T}[1]{>{\ttfamily\raggedright\arraybackslash}p{#1}}
\newcolumntype{L}[1]{>{\raggedright\arraybackslash}p{#1}}
\theoremstyle{plain}
\newtheorem{theorem}{Theorem}[section]
\newtheorem{proposition}[theorem]{Proposition}
\newtheorem{lemma}[theorem]{Lemma}
\theoremstyle{definition}
\newtheorem{definition}[theorem]{Definition}
\theoremstyle{remark}
\icmltitlerunning{PaperGuide: Making Small Language-Model Paper-Reading Agents More Efficient}
\begin{document}

\twocolumn[
  \icmltitle{PaperGuide: Making Small Language-Model Paper-Reading Agents More Efficient}



  \icmlsetsymbol{equal}{*}

  \begin{icmlauthorlist}
    \icmlauthor{Zijian Wang}{equal,1}
    \icmlauthor{Tiancheng Huang}{equal,1}
    \icmlauthor{Hanqi Li}{1,2}
    \icmlauthor{Da Ma}{1}
    \icmlauthor{Lu Chen}{1,2}
    \icmlauthor{Kai Yu}{1,2} 
  \end{icmlauthorlist}

  \icmlaffiliation{1}{X-LANCE Lab, School of Computer Science, MoE Key Lab of Artificial Intelligence, SJTU AI Institute, Shanghai Jiao Tong University, Shanghai, China}
  \icmlaffiliation{2}{Suzhou Laboratory, Suzhou, China}

  \icmlcorrespondingauthor{Lu Chen}{chenlusz@sjtu.edu.cn}
  \icmlcorrespondingauthor{Kai Yu}{kai.yu@sjtu.edu.cn}

  \icmlkeywords{Machine Learning, ICML}

  \vskip 0.3in
]



\printAffiliationsAndNotice{}  

\begin{abstract}
The accelerating growth of the scientific literature makes it increasingly difficult for researchers to track new advances through manual reading alone. Recent progress in large language models (LLMs) has therefore spurred interest in autonomous agents that can read scientific papers and extract task-relevant information. However, most existing approaches rely either on heavily engineered prompting or on a conventional SFT-RL training pipeline, both of which often lead to excessive and low-yield exploration. Drawing inspiration from cognitive science, we propose \textbf{PaperGuide}, a framework that mitigates these issues by separating high-level planning from fine-grained execution. PaperGuide first drafts an explicit plan that outlines the intended sequence of actions, and then performs detailed reasoning to instantiate each step by selecting the parameters for the corresponding function calls. To train such behavior, we introduce \textbf{D}raft-and-\textbf{F}ollow \textbf{P}olicy \textbf{O}ptimization (\textbf{DFPO}), a tailored RL method that jointly optimizes both the draft plan and the final solution. DFPO can be viewed as a lightweight form of hierarchical reinforcement learning, aimed at narrowing the `knowing-doing' gap in LLMs. We provide a theoretical analysis that establishes DFPO’s favorable optimization properties, supporting a stable and reliable training process. Experiments on paper-based question answering (Paper-QA) benchmarks show that PaperGuide improves efficiency over strong baselines without sacrificing performance, achieving results comparable to much larger models.
\end{abstract}

\begin{figure}[h] \label{fig.1}
    \centering
    \includegraphics[width=0.9\linewidth]{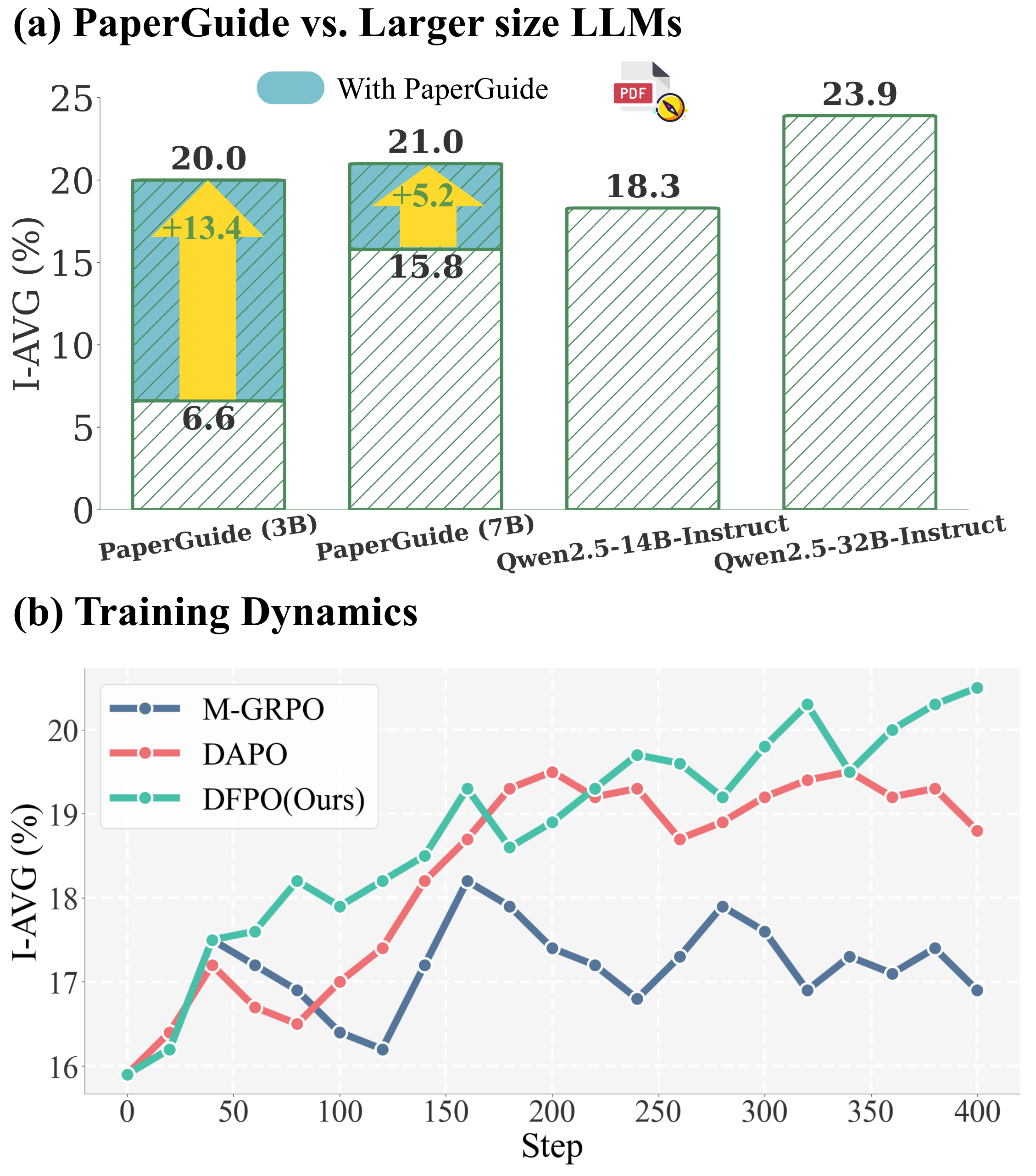}
    \caption{The performance of PaperGuide on AirQA-Real \citep{cao2025neusym}, and the training dynamics of DFPO compared with other RL training algorithms initialized from DTFT (Draft-and-Follow Fine-Tuning), 3B size. \textbf{(a):} Our PaperGuide achieves performance comparable to that of much larger 32B-parameter baseline models. \textbf{(b):} DFPO has demonstrated even more powerful efficiency.}
\end{figure}

\section{Introduction}
Agents built upon the powerful capabilities of Large Language Models (LLMs) offer researchers significant benefits, such as automating literature retrieval \citep{he2025pasa,chen2025academicrag,shi2025sparscholarpaperretrieval}, scientific discovery \citep{lu2024ai,liao2024llms,zhao2024chemdfm,wysocki2024llm}, and even automatic report generation \citep{schmidgall2025agent,ferrag2025llm,team2025novelseek}. Consequently, researchers are increasingly turning to LLMs for assistance. However, a more pressing challenge for many researchers is the efficient extraction of key information from state-of-the-art papers. The rapid growth in the volume of academic literature makes it increasingly difficult for researchers to keep abreast of the latest developments in their fields solely through manual reading.

\begin{figure*}[t] \label{fig.2}
    \centering
    \includegraphics[width=0.9\linewidth]{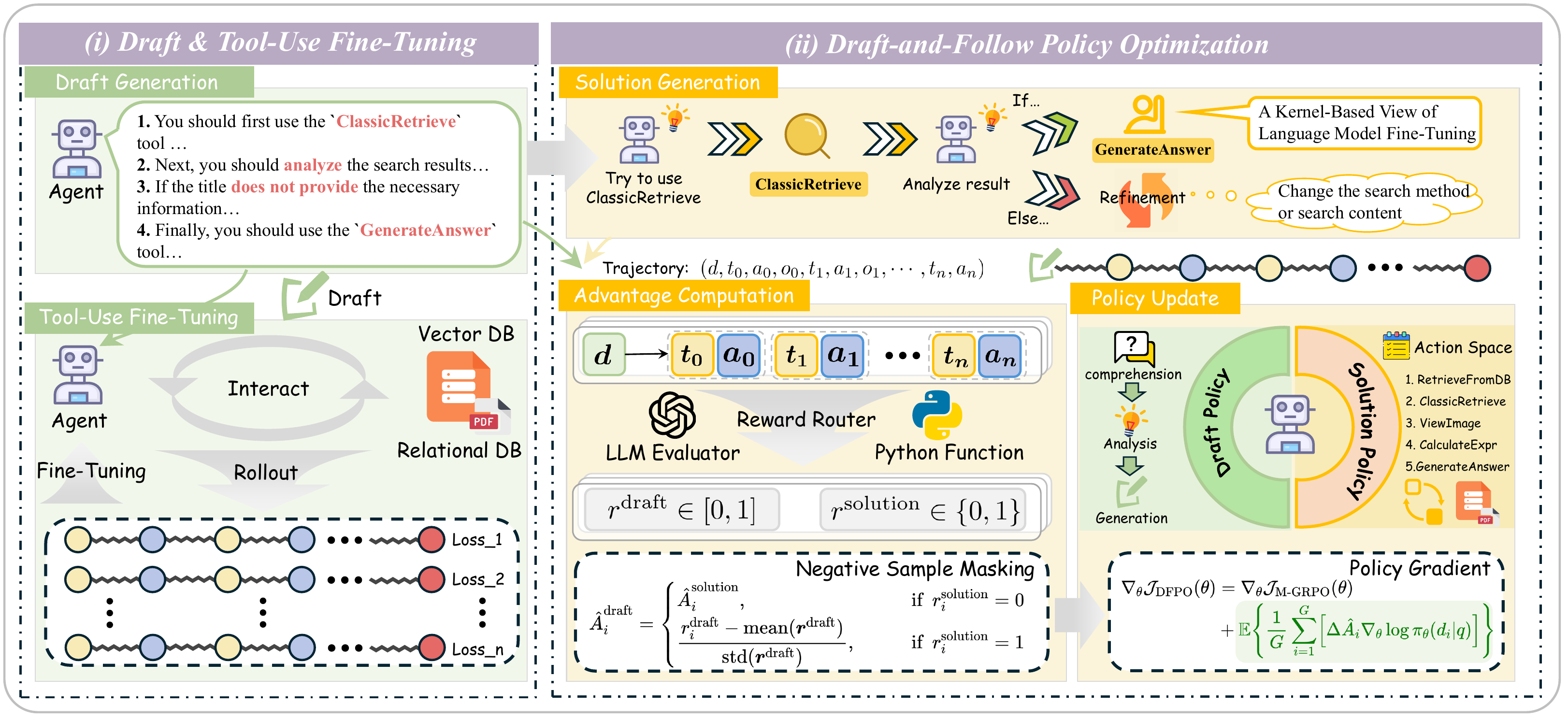}
    \caption{\textbf{PaperGuide}. (i) We use Qwen2.5-32B-Instruct to generate complete trajectories containing draft and solution on synthetic data to fine-tune the agent. This step enables the agent to understand the basic task logic and the tool calling format. (ii) DFPO facilitates the hierarchical optimization of both the initial draft and the subsequent solution, uniquely achieving this bi-level refinement by maximizing a single objective function.}
\end{figure*}

A direct approach is to employ general-purpose LLMs \citep{achiam2023gpt,anthropic2024claude3,guo2025deepseek,comanici2025gemini,yang2025qwen3} within the prevailing paradigm for question answering over large corpora, namely Retrieval-Augmented Generation (RAG) \citep{lewis2020retrieval,guu2020retrieval,biswal2024text2sql,cao2025neusym}. It's well-established that applying advanced RAG methods, such as Agentic RAG, to large size models yields strong performance on certain QA benchmarks. However, the efficacy of these methods often diminishes when applied to smaller-scale models (e.g., those in the 3B class). A prominent failure mode in such cases is the agent entering an unproductive exploratory loop, typically characterized by the repeated execution of an ineffective retrieval strategy \citep{deng2025atom}. This failure mode is consistent with recent findings \citep{paglieri2024balrog,ruoss2024lmact,schmied2025llms} on the `knowing-doing' gap in language models, which we posit as the underlying cause. Specifically, `knowing-doing' gap refers to the phenomenon where, even for correctly computed rationales, the model often selects the greedy action over the optimal one. This discrepancy highlights the shortcomings of the LLM when it comes to `doing' even when `knowing' the decision process. In practice, the `knowing–doing gap` manifests in Paper-QA tasks in very concrete ways. For example, the model may correctly identify that the answer is likely located in a different section, yet repeatedly issues the same retrieval action, or continues searching even when sufficient evidence has already been gathered. This mismatch between “knowing what to do” and “actually doing it” results in unproductive action loops, especially for smaller models. This directly motivates a framework that separates reasoning (`knowing') from execution (`doing') and optimizes them with distinct credit signals. Furthermore, the inherent challenges of the Paper-QA task, such as \textit{Cross-Section Dependencies}, \textit{Dense and Specialized Content}, and \textit{High Factual Precision}, further compound the decision-making difficulties for smaller-scale models. Therefore, a central research question arises: \textit{How can we design training frameworks that enable agents, especially based on smaller-scale models, to both learn essential meta-capabilities and reliably execute them throughout long-horizon decision-making tasks?}

In light of these challenges, we propose \textbf{PaperGuide}\footnote{Our codes will be released soon.}, a novel multi-turn RL framework for training agents for scientific paper querying, with its overall architecture shown in \hyperref[fig.2]{Figure 2}. Inspired by cognitive science \citep{ho2022people} and recent work on LLM Reasoning \citep{zhang2025learning,kang2025distilling}, PaperGuide explicitly bridges the `knowing-doing' gap by requiring the agent to first construct a high-level plan—termed a \textbf{draft}—before executing a sequence of fine-grained actions based on the ReAct framework \citep{yao2023react}. To train this hierarchical process, we introduce \textbf{D}raft-and-\textbf{F}ollow \textbf{P}olicy \textbf{O}ptimization (DFPO), a bespoke RL algorithm that functions as a streamlined implementation of Hierarchical RL. DFPO concurrently optimizes both the quality of the draft (`knowing') and the fidelity of the subsequent solution (`doing') by maximizing a single objective function. Our theoretical analysis shows that DFPO's policy gradient can be decomposed into the M-GRPO \citep{shao2024deepseekmath,wei2025webagent} gradient plus an implicit bias term induced by DFPO’s draft–solution structure, and under certain conditions, is guaranteed to assign an advantage bonus to optimal drafts. Experimentally, we develop a novel efficiency-aware metric, I-Avg, on which our 3B model achieves performance comparable to a 32B model, as partly shown in \hyperref[fig.1]{Figure 1}. Furthermore, DFPO significantly outperforms other RL methods on standard metrics.

\section{Preliminaries and Background} \label{sec.2}

\textbf{Problem Formulation.} We consider adopting an ReAct-Paradigm LLM \citep{yao2023react} as an autonomous agent to answer different questions for scientific papers. Upon receiving a question, the agent performs several iterations of Thought-Action-Observation. In each iteration, the agent first analyzes the current context to determine and execute an action, formulated as a specific tool call. Following this, the agent receives a new observation from the environment, reflecting the outcome of the tool's interaction. Consistent with NeuSym-RAG \citep{cao2025neusym}, we utilize the 5 parameterized actions with arguments that agents can take during interaction, which are detailed in \hyperref[app.a.1]{Appendix A.1}. The iteration terminates when the LLM selects \texttt{GenerateAnswer} as the action. A complete trajectory with $N$ iterations can be defined as follow:
\begin{equation*}
    \mathcal{T}_N = (t_0,a_0,o_0,\cdots,t_{N-1},a_{N-1},o_{N-1},t_N,a_N),
\end{equation*}
where $(t,a,o)$ represents a tuple of Thought-Action-Observation. 

\textbf{Multi-turn GRPO.} GRPO \citep{shao2024deepseekmath} has been widely applied in the field of LLM Reasoning due to its outstanding performance and ingenious design. Unlike single-turn reasoning, trajectories in Agentic-RL incorporate environmental observations. However, these observations are merely intermediate steps and not the primary objective of optimization. WebAgent-R1 \citep{wei2025webagent} formally proposed the Multi-turn GRPO. For each question $q$, the agent first sample a group of trajectories $\{\tau_1,\tau_2,\cdots,\tau_G\}$ and then optimize the policy model $\pi_\theta$ by maximizing the following objective function:
\begin{equation*}
\begin{aligned}
    \mathcal{J}_{\text{M-GRPO}}(\theta) = & \mathbb{E}_{q \sim P(q), \{y_i\}_{i=1}^G \sim \pi_{\text{old}}(\cdot|q)} \\ & \left\{ \frac{1}{G} \sum_{i=1}^G \left( \frac{1}{|y_{i}|} \sum_{t=1}^{|y_{i}|} \left[ \hat{A}_{i,t} -\beta \,\mathbb{D}_{\text{KL}}(\theta) \right] \right) \right\},    
\end{aligned}
\end{equation*}
where $y_{i} = (t_{i,j},a_{i,j})_{j=0}^{N}$ is the generated outputs of LLM and  $\hat{A}_{i,t}$ is the advantage for the $t$-th token of $i$-th trajectory. 
$\hat{A}_{i} = \frac{r_i - \text{mean}({\boldsymbol{r}})}{\text{std}({\boldsymbol{r}})}$ is the group relative advantage, computed using a group of rewards $\boldsymbol{r} = \{ r_1,\cdots,r_G \}$ produced by rule-based reward functions.

\section{PaperGuide} \label{sec.3}

\subsection{Draft-and-Follow: An Analogous Framework to Option}

Inspired by human planning in cognitive science \citep{cambridgecambridge,eysenck2020cognitive,ho2022people} and recent LLM Reasoning research \citep{zhang2025learning, kang2025distilling}, we introduce the Draft-and-Follow framework. It aims to narrow the `knowing-doing' gap \citep{paglieri2024balrog,ruoss2024lmact,schmidgall2025agent}, where LLMs fail to execute optimal plans and instead select greedy actions. Our framework imposes a hierarchical decision-making process: A high-level draft leverages the LLM's instruction-following capabilities to guide the low-level follow execution, thereby preventing deviations into greedy action sequences.

Our framework is adapted from the option framework in Hierarchical RL (HRL) \citep{sutton1999between,bacon2017option}. In HRL, an option $\omega$ is a temporally abstract action defined by a tuple $(\mathcal{I}_{\omega}, \pi_{\omega}, \beta_{\omega})$, representing the initiation set, the intra-option policy, and the termination function, respectively. In our \textbf{Draft-and-Follow} framework, the draft serves as an analogue to the intra-option policy $\pi_{\omega}$. As illustrated in \hyperref[fig.3]{Figure 3}, a high-level draft (e.g., \textit{retrieval} followed by \textit{analysis}) corresponds to a sequence of low-level tool calls and reasoning steps. Crucially, the initiation set $\mathcal{I}_{\omega}$ and termination function $\beta_{\omega}$ are not explicitly defined but are handled dynamically by the LLM's reasoning, which is similar to \cite{feng2024natural}. An option is initiated when the agent determines more information is required and is terminated once sufficient information is gathered to form a solution. 

\begin{figure}[t] \label{fig.3}
    \centering
    \includegraphics[width=0.9\linewidth]{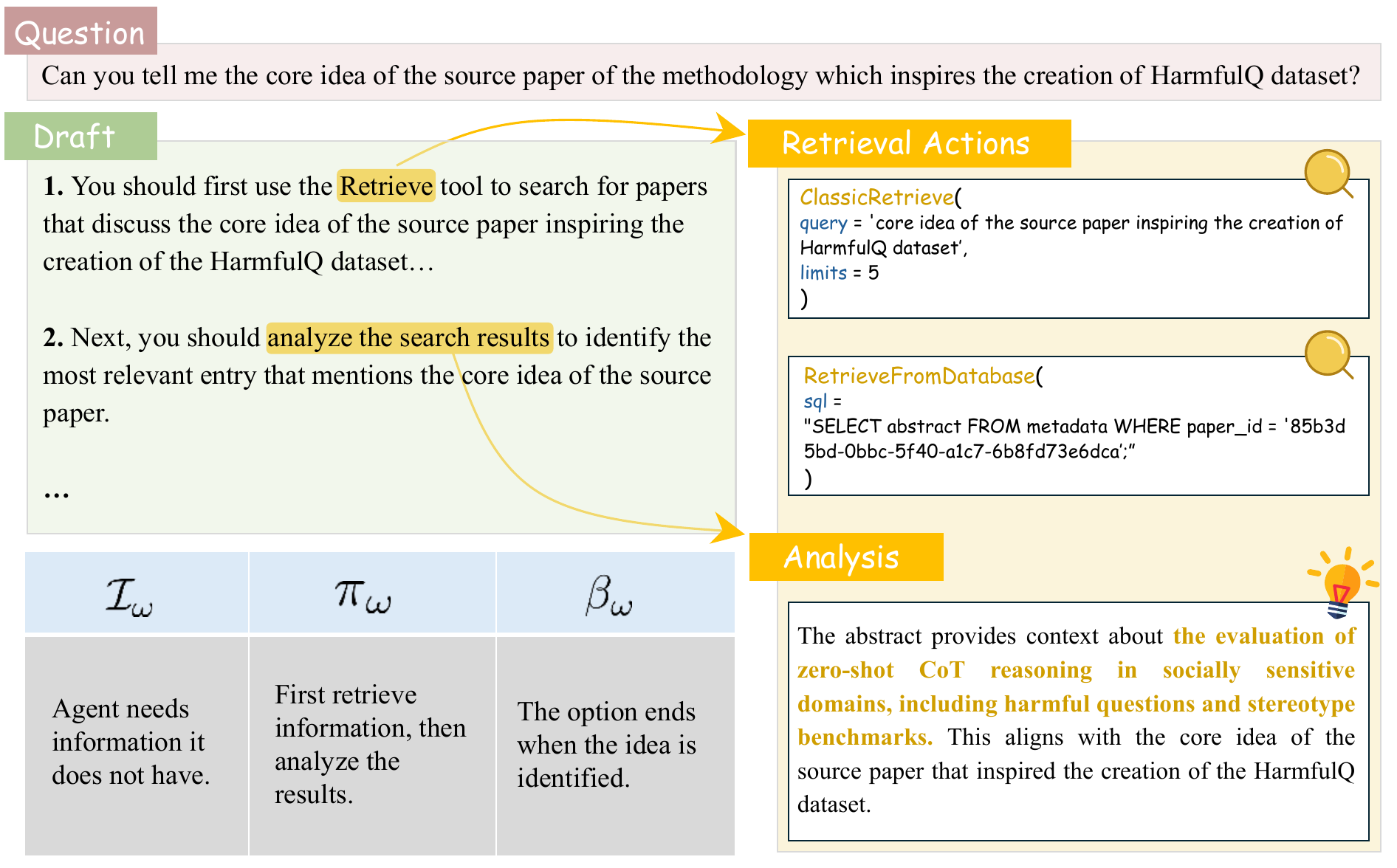}
    \caption{The conceptual relationship between the option framework and Draft-and-Follow framework.}
\end{figure}

Notably, a key distinction of our proposed Draft-and-Follow framework is that it integrates two functional roles within a single agent, in contrast to the classic option framework which often requires distinct sub-agents. This is possible due to the inherent versatility of LLMs compared to traditional RL agents, which typically execute a singular policy. Consequently, our single LLM-based agent can dynamically switch between roles: First acting as a high-level planner to generate the draft, and subsequently as a low-level executor to implement that plan and produce the final solution.

\subsection{Draft \& Tool-Use Fine-Tuning}
The agent must acquire two foundational capabilities. First, it must be capable of generating syntactically correct tool calls. This includes accurately selecting function names and formulating their corresponding parameters from the action space. Second, the agent must learn to generate a coherent draft and then faithfully execute this plan in the subsequent solution. Due to the prohibitive cost of manually annotating expert trajectories, our approach relies on the generation of a synthetic dataset. we construct synthetic expert trajectories following a three-stage pipeline. 

\ding{182}\underline{Explorer} generates high-quality question–answer pairs from 10,000 arXiv AI papers, using CoT prompting and curated templates to ensure diversity and fidelity. \ding{183}\underline{Actor} interacts with the paper database via a ReAct-style LLM agent to produce full action trajectories, including tool calls, intermediate thoughts, and error states. \ding{184}\underline{Tracker} converts these trajectories into structured training examples by selecting appropriate evaluation functions, formatting answers, and packaging long-horizon interactions using a sliding window. Finally, an expert LLM summarizes each trajectory into an abstract, high-level draft plan to establish a one-to-one mapping between drafts and expert solutions. Further details are provided in \hyperref[app.a.2]{Appendix A.2}.

\subsection{Draft-and-Follow Policy Optimization} \label{sec.3.3}
As our previous analysis has shown, the draft is a central component of our framework, providing high-level guidance for the agent's subsequent tool calls. This central role motivates the need for a novel optimization objective. An effective objective must not only optimize the draft and the solution concurrently but also place a distinct emphasis on improving the quality of the draft itself. To meet these requirements, we introduce a new optimization objective. We term the process of optimizing this objective Draft-and-Follow Policy Optimization (DFPO). Formally, the goal is to maximize the following objective function:
\begin{equation}
\begin{aligned}
\mathcal{J}_{\text{DFPO}}(\theta)
= \mathbb{E}\Biggl\{
    \frac{1}{G} \sum_{i=1}^{G}
   & \Biggl[
        \frac{1}{|d_i|+|y_i|}
        \Biggl(
            \sum_{t=1}^{|d_i|} \hat{A}^{\text{draft}}_{i,t}
            \\ &+ \sum_{t=|d_i|+1}^{|d_i| + |y_i|} \hat{A}^{\text{solution}}_{i,t}
        \Biggr)
    \Biggr]
\Biggr\}.
\end{aligned}
\end{equation}
Where $d$ is the draft while $y$ is the solution, which is also corresponds to the generated outputs of LLM mentioned in \hyperref[sec.2]{Section 2}. We have abandoned the KL divergence constraint and generated trajectories entirely using the current policy (fully on-policy). These two points have been proven by recent studies to be able to effectively improve the performance of the agent \citep{lanchantin2025bridging,yu2025dapo}. For a given QA instance, we let $o = d \circ y$ denotes the complete output sequence from the agent. By definition, the draft is a prefix of the full response $o$. This structural property allows for the formulation of the policy gradient with respect to the draft as follows:
\begin{proposition}[Gradient Decomposition] \label{pro.1}
    In a fully on-policy setting\footnote{An analysis of DFPO's performance using off-policy rollouts is presented in \hyperref[app.b.1]{Appendix B.1}} without a KL divergence constraint, the DFPO policy gradient can be written as the sum of the M-GRPO policy gradient and an additional term arising from the draft–solution structure of the DFPO objective,
    \begin{equation}
    \begin{aligned}
         \nabla_{\theta}\mathcal{J}_{\text{DFPO}}(\theta) &= \nabla_{\theta}\mathcal{J}_{\text{M-GRPO}}(\theta) \\& \quad + \mathbb{E} \left\{ \frac{1}{G} \sum_{i=1}^{G} \left[\Delta \hat{A}_i \textcolor{teal}{\nabla_{\theta} \, \text{log} \, \pi_{\theta}(d_{i}|q)} \right] \right\},   
    \end{aligned}
    \end{equation}
    where $\Delta \hat{A}_i = c\cdot(\hat{A}^{\text{draft}}_i - \hat{A}^{\text{solution}}_i)$ is a productive of a scaling constant $c$ and a relative advantage term.
\end{proposition}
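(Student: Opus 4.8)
The plan is to differentiate both surrogate objectives in the stated regime --- fully on-policy and with no KL term --- and compare them block by block. First I would make the gradient source explicit: each per-token contribution in $\mathcal{J}_{\text{DFPO}}$ and $\mathcal{J}_{\text{M-GRPO}}$ is really a surrogate of the form $\rho_{i,t}(\theta)\,\hat{A}_{i,t}$ with importance ratio $\rho_{i,t}(\theta) = \pi_\theta(o_{i,t}\mid o_{i,<t},q)/\pi_{\theta_{\text{old}}}(o_{i,t}\mid o_{i,<t},q)$, and the advantages ($\hat{A}^{\text{draft}}_{i,t}$, $\hat{A}^{\text{solution}}_{i,t}$, $\hat{A}_{i,t}$) are stop-gradient constants that are uniform across the block of tokens they score, with $\hat{A}^{\text{solution}}_{i,t}=\hat{A}_{i,t}=\hat{A}_i$ the group-relative advantage of Section~\ref{sec.2}. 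Because rollouts come from the current policy, evaluating at $\theta=\theta_{\text{old}}$ gives $\rho_{i,t}=1$ and $\nabla_\theta\rho_{i,t} = \nabla_\theta\log\pi_\theta(o_{i,t}\mid o_{i,<t},q)$, so both objectives collapse to ordinary token-level policy gradients with $\pi_{\theta_{\text{old}}}$ held fixed; I would state this convention explicitly so that no score-function term from the sampling distribution enters.

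Next I would exploit the prefix structure $o_i = d_i\circ y_i$ via the autoregressive chain rule: summing $\nabla_\theta\log\pi_\theta(o_{i,t}\mid o_{i,<t},q)$ over the first $|d_i|$ positions telescopes to $\nabla_\theta\log\pi_\theta(d_i\mid q)$, and over the remaining $|y_i|$ positions to $\nabla_\theta\log\pi_\theta(y_i\mid d_i,q)$. Carrying the block-constant advantages through these two partial sums turns $\nabla_\theta\mathcal{J}_{\text{DFPO}}$ into the expectation of $\tfrac{1}{G}\sum_i \tfrac{1}{|d_i|+|y_i|}\bigl[\hat{A}^{\text{draft}}_i\,\nabla_\theta\log\pi_\theta(d_i\mid q) + \hat{A}^{\text{solution}}_i\,\nabla_\theta\log\pi_\theta(y_i\mid d_i,q)\bigr]$, and the same computation applied to M-GRPO --- viewed on the concatenated output $o_i$ with its own $1/|o_i|$ normalization and the single advantage $\hat{A}_i=\hat{A}^{\text{solution}}_i$ replicated over all tokens --- gives the identical expression but with $\hat{A}^{\text{solution}}_i$ in place of $\hat{A}^{\text{draft}}_i$ in the first bracket.

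Subtracting, the solution-block terms cancel exactly because they carry the same advantage $\hat{A}^{\text{solution}}_i$, and only the draft-block discrepancy survives:
\begin{equation*}
\nabla_\theta\mathcal{J}_{\text{DFPO}}(\theta) - \nabla_\theta\mathcal{J}_{\text{M-GRPO}}(\theta) = \mathbb{E}\Bigl\{\textstyle \tfrac{1}{G}\sum_{i=1}^{G}\tfrac{\hat{A}^{\text{draft}}_i - \hat{A}^{\text{solution}}_i}{|d_i|+|y_i|}\,\nabla_\theta\log\pi_\theta(d_i\mid q)\Bigr\}.
\end{equation*}
Absorbing the length normalization $1/(|d_i|+|y_i|)$ into the scalar $c$ (a genuine constant once sequence lengths are treated as fixed, otherwise a uniform bound) and setting $\Delta\hat{A}_i = c\,(\hat{A}^{\text{draft}}_i-\hat{A}^{\text{solution}}_i)$ yields exactly the claimed decomposition.

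The main obstacle is less conceptual than bookkeeping: one must fix and justify the convention under which the M-GRPO reference gradient is taken --- namely on the full draft-plus-solution sequence $o_i$ with the $1/|o_i|$ normalization, and with $\hat{A}^{\text{solution}}$ identified with the M-GRPO advantage --- since under any other alignment of normalizations or advantages a residual term proportional to $\nabla_\theta\log\pi_\theta(y_i\mid d_i,q)$ fails to cancel and the difference is no longer purely along $\nabla_\theta\log\pi_\theta(d_i\mid q)$. A related point worth spelling out is the treatment of PPO-style clipping: in the fully on-policy regime every ratio equals one, so the clip is inactive at the point of differentiation and may be dropped without changing the gradient, which is what legitimizes the surrogate-to-policy-gradient reduction used throughout.
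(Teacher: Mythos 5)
Your proposal is correct and follows essentially the same route as the paper's proof: both expand the on-policy policy gradient with block-constant advantages over the draft and solution segments, identify the M-GRPO gradient with the full-sequence term weighted by $\hat{A}^{\text{solution}}_i$, and isolate the residual $\tfrac{1}{|d_i|+|y_i|}(\hat{A}^{\text{draft}}_i-\hat{A}^{\text{solution}}_i)\nabla_\theta\log\pi_\theta(d_i\mid q)$ (the paper via an add-and-subtract of $\hat{A}^{\text{solution}}_i\nabla_\theta\log\pi_\theta(d_i\mid q)$, you via direct subtraction of the two gradients, which is the same algebra). Your added care about the normalization convention and the $i$-dependence of the ``constant'' $c$ is a fair observation rather than a deviation.
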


\begin{mybox}[Remark on Proposition 3.1]
\hyperref[pro.1]{Proposition 3.1} (see proof in \hyperref[app.b.1]{Appendix B.1}) provides an interpretive decomposition of the DFPO gradient: It separates the standard M-GRPO update from an implicit bias term induced by DFPO's design—namely, (i) jointly normalizing the draft and solution, and (ii) defining the draft as a prefix of the full response.
\end{mybox}

This leads to a key design question: \textit{how to formulate distinct reward functions for the draft and the solution such that the DFPO algorithm can improve draft quality while simultaneously optimizing for overall task success?} Specifically, we formulate separate reward signals for both the draft and the solution:
\begin{equation*}
    \begin{aligned}
        & r^{\text{draft}}_i = \rho(d_i) \cdot \mathbb{I}(\texttt{Extract}(y_i) = \text{Answer}) \\
        & r^{\text{solution}}_i = \mathbb{I}(\texttt{Extract}(y_i) = \text{Answer})   \\
    \end{aligned}
\end{equation*}

Here, $\rho(d_i) \in [0, 1]$ is a non-negative dense metric to evaluate the draft. We use \texttt{Extract} to denote the extraction function to derive predicted answer. In our practical implementation, to account for diverse question types, we introduce a reward router (detailed in the \hyperref[app.c]{Appendix C}) that provides a more accurate evaluation of the agent's output quality, thus yielding higher-fidelity gradient signals. Based on the preceding definitions, we now state the following theorem regarding relative advantage. This theorem establishes key theoretical properties of our DFPO algorithm:
\begin{theorem}[Relative Advantage] \label{thm.1}
    If $r^{\text{solution}}$ is a binary value, and $r^{\text{draft}}$ is a product of $r^{\text{solution}}$ and a non-negative dense value, there are:
    \begin{itemize}
        \item Let $i$ be an index such that $r^{\text{solution}}_i = 0$. The advantage $\hat{A}^{\text{solution}}_{i}$ and $\hat{A}^{\text{draft}}_{i}$ satisfies the inequality:
        $$ \hat{A}^{\text{solution}}_{i} \le \hat{A}^{\text{draft}}_{i}. $$

        \item Let $i$ be an index such that $r^{\text{solution}}_i = 1$. Given that $r^{\text{draft}}_i \leq \bar{r}^{\text{draft}}$, the advantage satisfies the inequality:
        \begin{equation*}
            \hat{A}^{\text{draft}}_{i} \le \hat{A}^{\text{solution}}_{i}.
        \end{equation*}

        \item Let $i_{\max}$ be an index such that $r^{\text{draft}}_{i_{\max}} = \max_{1 \le i \le n} \{r^{\text{draft}}_i\}$. Given that $r^{\text{solution}}_{i_{\max}}=1$, the advantage satisfies the inequality:
        $$ \hat{A}^{\text{solution}}_{i_{\max}} \le \hat{A}^{\text{draft}}_{i_{\max}}. $$
    \end{itemize}
\end{theorem}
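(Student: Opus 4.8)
The plan is to reduce all three bullets to elementary inequalities about the reward group, exploiting the binary structure of $r^{\text{solution}}$ and the multiplicative link $r^{\text{draft}}_i = \rho_i\, r^{\text{solution}}_i$ with $\rho_i := \rho(d_i) \ge 0$. Write $n=G$ for the group size, let $k = |\{i : r^{\text{solution}}_i = 1\}|$, and set $P = \sum_i r^{\text{draft}}_i = \sum_{i:\, r^{\text{solution}}_i = 1}\rho_i$ and $Q = \sum_i (r^{\text{draft}}_i)^2 = \sum_{i:\, r^{\text{solution}}_i = 1}\rho_i^2$. Since $\boldsymbol r^{\text{solution}}$ has a Bernoulli-type profile, $\mathrm{mean}(\boldsymbol r^{\text{solution}}) = k/n$ and (population) $\mathrm{std}(\boldsymbol r^{\text{solution}})^2 = k(n-k)/n^2$, so a successful trajectory has $\hat{A}^{\text{solution}}_{i} = \sqrt{(n-k)/k}$ and a failed one has $\hat{A}^{\text{solution}}_{i} = -\sqrt{k/(n-k)}$; likewise $\mathrm{mean}(\boldsymbol r^{\text{draft}}) = P/n$ and $\mathrm{std}(\boldsymbol r^{\text{draft}})^2 = (nQ - P^2)/n^2$, so $\hat{A}^{\text{draft}}_{i} = (n r^{\text{draft}}_i - P)/\sqrt{nQ-P^2}$. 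I would first dispatch the degenerate cases ($k \in \{0,n\}$, or all draft rewards equal) via the standard convention that a zero-variance group yields zero advantages, and thereafter assume $0 < k < n$ and $\mathrm{std}(\boldsymbol r^{\text{draft}}) > 0$ (equivalently $P>0$). The $1/n$ versus $1/(n-1)$ normalization is irrelevant, as it rescales draft and solution advantages by the same positive factor.

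For the first bullet ($r^{\text{solution}}_i = 0$), the key point is that then $r^{\text{draft}}_i = 0$ as well, so $\hat{A}^{\text{solution}}_{i} = -\mathrm{mean}(\boldsymbol r^{\text{solution}})/\mathrm{std}(\boldsymbol r^{\text{solution}})$ and $\hat{A}^{\text{draft}}_{i} = -\mathrm{mean}(\boldsymbol r^{\text{draft}})/\mathrm{std}(\boldsymbol r^{\text{draft}})$. Both means are non-negative, so $\hat{A}^{\text{solution}}_{i} \le \hat{A}^{\text{draft}}_{i}$ is equivalent to comparing squared signal-to-noise ratios, i.e. to $P^2/(nQ - P^2) \le k/(n-k)$; since both denominators are positive, cross-multiplying collapses this to $P^2 \le kQ$, which is exactly Cauchy--Schwarz, $\big(\sum_{i:\, r^{\text{solution}}_i=1}\rho_i\big)^2 \le k \sum_{i:\, r^{\text{solution}}_i=1}\rho_i^2$, applied to the $k$ nonzero draft rewards. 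The second bullet is then immediate: reading $\bar r^{\text{draft}}$ as the group mean $\mathrm{mean}(\boldsymbol r^{\text{draft}})$, the hypothesis $r^{\text{draft}}_i \le \bar r^{\text{draft}}$ forces $\hat{A}^{\text{draft}}_{i} \le 0$, while $\hat{A}^{\text{solution}}_{i} = (1 - k/n)/\mathrm{std}(\boldsymbol r^{\text{solution}}) \ge 0$ because $k \le n$, so $\hat{A}^{\text{draft}}_{i} \le 0 \le \hat{A}^{\text{solution}}_{i}$.

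The third bullet carries the real work. Put $M = r^{\text{draft}}_{i_{\max}} = \max_i r^{\text{draft}}_i$; using $r^{\text{solution}}_{i_{\max}} = 1$, the claim becomes $\sqrt{(n-k)/k} \le (nM - P)/\sqrt{nQ - P^2}$. I would exploit $\rho_i \le M$ over the $k$ successful indices twice: first, $Q = \sum \rho_i^2 \le M\sum \rho_i = MP$, hence $nQ - P^2 \le nMP - P^2 = P(nM-P)$; second, $P = \sum \rho_i \le kM$, which both gives $nM - P \ge (n-k)M \ge 0$ and upgrades the bound to $nQ - P^2 \le kM(nM - P)$. Substituting,
\begin{equation*}
(nM - P)^2 - \tfrac{n-k}{k}(nQ - P^2) \;\ge\; (nM - P)^2 - (n-k)M(nM-P) \;=\; (nM - P)(kM - P) \;\ge\; 0 ,
\end{equation*}
so $(nM-P)^2 \ge \tfrac{n-k}{k}(nQ-P^2)$, and since $nM - P \ge 0$ and $nQ - P^2 > 0$, taking square roots yields $\hat{A}^{\text{solution}}_{i_{\max}} = \sqrt{(n-k)/k} \le (nM-P)/\sqrt{nQ-P^2} = \hat{A}^{\text{draft}}_{i_{\max}}$.

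I expect this last bullet to be the main obstacle: it is not deep but needs the right pair of bounds from $\rho_i \le M$ and a clean factorization, and the bookkeeping around non-degeneracy and the direction of the square-root step is delicate. I would also flag honestly the edge case in which every draft reward vanishes ($P=0$): there the third bullet fails unless the zero-variance group is excluded (as in DAPO-style filtering) or $\rho$ is assumed strictly positive on at least the best trajectory. Finally, I would note that the argument uses only $\rho_i \ge 0$, never the sharper $\rho_i \in [0,1]$, and confirm that $\bar r^{\text{draft}}$ is intended as the group-mean draft reward, which is what makes the second bullet a one-liner.
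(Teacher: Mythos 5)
Your proposal is correct and is essentially the paper's own argument in different notation: your Cauchy--Schwarz step $P^2 \le kQ$ is exactly the paper's Lemma B.2 (the coefficient-of-variation inequality, proved there by noting the within-$I_1$ variance is non-negative), your bound $Q \le MP$ is precisely the Bhatia--Davis variance bound of Lemma B.3, and your $P \le kM$ is the paper's closing ``max $\ge$ mean over $I_1$'' observation, so all three parts reduce to the same ingredients. Your explicit flagging of the degenerate case $P=0$ (and of the fact that only $\rho_i \ge 0$ is used) is a fair point that the paper leaves implicit in its standing assumption $n_1>0$, $n_0>0$ and its division by $\mu_{d1}^2$.
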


\begin{mybox}[Remark on Theorem 3.2]
\hyperref[thm.1]{Theorem 3.2} (see proof in \hyperref[app.b.2]{Appendix B.2}) reveals that, when a solution fails, this mechanism provides a non-negative update to reinforce the draft. Conversely, for successful solutions that follow below-average drafts, a non-positive update suppresses these ineffective plans. Finally, when a high-quality draft leads to a successful outcome, a non-negative update reinforces these effective patterns, consolidating the model's successful policies.
\end{mybox}

In our implementation of PaperGuide, we utilize trajectory-level entropy \citep{agarwal2025unreasonable} as the metric for draft quality. In this way, the non-negative dense metric is:
\begin{equation}
    \rho(d_i) = \frac{1}{|d_i|} \sum_{t=1}^{|d_i|} \text{log} \, \pi_{\theta}(d_{i,t}|q,d_{i,<t}).
\end{equation}

A potential concern is that: In \hyperref[thm.1]{Theorem 3.2}, if an incorrect solution stems from a low-quality draft, the standard DFPO objective might inadvertently reinforce this suboptimal plan. To mitigate this issue, we addtionally introduce a technique we term \textbf{negative sample masking}, which modifies the draft advantage calculation as follows:
\begin{equation}
     \hat{A}^{\text{draft}}_i = \left\{
        \begin{aligned}
            & \hat{A}^{\text{solution}}_i, & \quad \text{if} \ 
            \ r^{\text{solution}}_i=0 \\
            & \frac{r^{\text{draft}}_i - \text{mean}(\boldsymbol{r^{\text{draft}}})}{\text{std}(\boldsymbol{r^{\text{draft}}})}, & \quad \text{if} \ \  r^{\text{solution}}_i=1
        \end{aligned}
        \right.
\end{equation}
Negative sample masking also provides further justification for employing trajectory-level entropy as a quality metric (further discussion is detailed in \hyperref[app.d]{Appendix D}). This aligns with related research suggesting that the output logits of a sufficiently pre-trained LLM can implicitly represent a Q-function \citep{wulfmeier2024imitating,wang2025implicit,li2025generalist}.

\section{Experiments} \label{sec.4}
Our evaluation of PaperGuide on two challenging benchmarks is guided by the following research questions:
\begin{itemize}
    \item \textbf{RQ1:} To what extent does PaperGuide improve interaction efficiency without compromising task performance? (\hyperref[sec.4.2]{Section 4.2}, \hyperref[sec.4.3]{Section 4.3}, \hyperref[app.a.4.1]{Appendix A.4.1}, \hyperref[app.a.4.2]{Appendix A.4.2})

    \item \textbf{RQ2:} What is the magnitude of the efficiency improvement that DFPO provides over RL baselines like M-GRPO and DAPO? What are the individual contributions of DFPO's key components? (\hyperref[sec.4.4]{Section 4.4}, \hyperref[app.a.4.3]{Appendix A.4.3})

    \item \textbf{RQ3:} How does the `Draft' mechanism contribute to the overall efficacy of the PaperGuide framework? (\hyperref[sec.4.5]{Section 4.5})

    \item \textbf{RQ4:} Is our DTFT a critical component, or does a standard SFT approach provide a sufficient foundation for the subsequent RL training stage? (\hyperref[app.a.4.4]{Appendix A.4.4})
\end{itemize}

\subsection{Experiment Setup}
\textbf{Benchmarks.} To comprehensively evaluate the effectiveness of our PaperGuide, we select two prominent benchmarks for scientific Paper-QA: AirQA-Real \citep{cao2025neusym}, and SciDQA \citep{singh2024scidqa}. Detailed descriptions of these benchmarks are provided in \hyperref[app.e]{Appendix E}.

\textbf{Baselines.} We consider the following baselines. \ding{182}\textit{Prompting Method}: To establish strong baselines, we augment general-purpose LLMs (e.g, Qwen2.5, GPT-4, DeepSeek-R1) with Classic-RAG \citep{lewis2020retrieval}. \ding{183}\textit{Fine-Tuning Method}: We instantiate two versions of our PaperGuide using the Qwen2.5-3B-Instruct and Qwen2.5-7B-Instruct\footnote{We augment Qwen2.5-3B-Instruct and Qwen2.5-7B-Instruct with NeuSym-RAG \citep{cao2025neusym}.} as backbones. Both instances are trained following our proposed pipeline, which includes a Draft \& Tool-use FT stage. To evaluate the efficacy of our DFPO algorithm, we also create ablation variants where the DFPO component is replaced by two existing RL methods: M-GRPO \citep{wei2025webagent} and DAPO \citep{yu2025dapo}.

\textbf{Metrics.} Beyond the default metrics provided with each benchmark, we inspired from \citep{liu2025bingo}, introducing an additional evaluation metric, I-Avg, defined as:
    \begin{equation}
        \text{I-Avg} = \text{Avg} \cdot \sqrt{1 - \frac{\bar{I}}{I_{\text{max}}}},
    \end{equation}
where $\bar{I}$ is the average number of interaction turns, and $I_{\text{max}}$ denotes the maximum number of interaction turns. The I-Avg metric is designed to provide a balanced assessment of agent behavior by jointly considering both its performance and efficiency.

\begin{table*}[t] \label{tab.1}
\centering
\caption{Overall performance on two prominent and challenging Paper-QA benchmarks are presented. The top two outcomes in finetuning methods are \textcolor{black}{\textbf{bolded}} and \underline{underlined}.}
\setlength\tabcolsep{3pt}
\renewcommand{\arraystretch}{1}
\fontsize{9.0pt}{9.5pt}\selectfont
\begin{tabular}{p{5cm}cccccccccccc}
\toprule
\multirow{2}[2]{*}{\textbf{Method}} & \multicolumn{7}{c}{\textbf{AirQA-Real}} & \multicolumn{5}{c}{\textbf{SciDQA}}\\
\cmidrule(lr){2-8} \cmidrule(lr){9-13}
    & text & table & image & form. & meta. & \textbf{Avg.} & \textbf{I-Avg.}  & table & image & form. & \textbf{Avg.} & \textbf{I-Avg.} \\
\midrule
\multicolumn{13}{c}{\textit{Prompting Methods with Classic RAG}} \\
\midrule
\textbf{Qwen2.5-3B-Instruct} & 6.8 & 0.0 & 0.0 & 2.8 & 4.5 & 6.1 & / & 35.6 & 37.0 & 33.9 & 35.2 & / \\

\textbf{Qwen2.5-7B-Instruct} & 8.1 & 0.0 & 2.5 & 2.8 & 4.5 & 7.4 & / & 45.1 & 44.9 & 45.4 & 45.1 & / \\

\textbf{Qwen2.5-VL-72B-Instruct} & 9.6 & 5.9 & 11.9 & 11.1 & 13.6 & 10.5 & / & 54.8 & 56.9 & 56.3 & 56.2 & / \\

\textbf{GPT-4o-mini} & 12.3 & 11.9 & 12.5 & 16.7 & 13.6 & 13.4 & / & 59.4 & 60.4 & 59.3 & 59.8 & / \\

\textbf{DeepSeek-R1} & 11.7 & 13.9 & 9.5 & 30.6 & 9.1 & 13.9 & / & 63.9 & 61.3 & 61.7 & 62.4 & / \\

\midrule
\multicolumn{13}{c}{\textit{Fine-Tuning Methods with NeuSym RAG}} \\
\midrule
\textbf{Qwen2.5-3B-Instruct(SFT)} & 24.7 & 2.0 & 7.5 & \underline{5.6} & 4.5 & 22.2 & 16.7 & 44.8 & 44.5 & 38.8 & 43.3 & 28.4 \\
\quad + M-GRPO & 22.3 & 0.0 & 2.5 & 2.8 & 4.5 & 19.7 & 13.3 & 39.5 & 38.6 & 39.2 & 39.1 & 26.0 \\
\quad + DAPO & 24.3 & \textcolor{black}{\textbf{4.0}} & 12.5 & 2.8 & \textcolor{black}{\textbf{31.8}} & 22.4 & 17.1 & 44.6 & 44.7 & 41.7 & 43.9 & 29.3 \\
\rowcolor[RGB]{230,241,219}
\textbf{Qwen2.5-3B-Instruct(DTFT)} & 25.3 & 2.0 & 7.5 & \underline{5.6} & 4.5 & 22.6 & 17.6 & 41.7 & 41.5 & 38.3 & 40.9 & 30.3 \\
\rowcolor[RGB]{230,241,219}
\quad + DFPO(PaperGuide) & 26.6 & \underline{3.0} & \underline{10.0} & \underline{5.6} & 9.1 & 23.7 & 20.0 & 43.6 & 44.2 & 41.9 & 43.5 & \underline{36.1} \\

\textbf{Qwen2.5-7B-Instruct(SFT)} & 26.6 & 1.0 & 7.5 & \underline{5.6} & \textcolor{black}{\textbf{31.8}} & 23.9 & 17.8 & 46.5 & 45.9 & 42.7 & 45.6 & 33.2 \\
\quad + M-GRPO & 24.7 & 1.0 & 0.0 & 0.0 & 9.1 & 21.2 & 16.1 & 46.8 & 47.7 & 41.5 & 46.1 & 32.0 \\
\quad + DAPO & 25.3 & \textcolor{black}{\textbf{4.0}} & \textcolor{black}{\textbf{12.5}} & \textcolor{black}{\textbf{8.3}} & \underline{22.7} & 23.1 & 17.0 & \underline{48.1} & 47.8 & \underline{45.1} & \underline{47.1} & 34.5 \\
\rowcolor[RGB]{230,241,219}
\textbf{Qwen2.5-7B-Instruct(DTFT)} & \textcolor{black}{\textbf{28.5}} & 2.0 & \underline{10.0} & \textcolor{black}{\textbf{8.3}} & \underline{22.7} & \textcolor{teal}{\textbf{25.5}} & \underline{20.2} & 46.9 & \underline{48.4} & 43.3 & 47.0 & 34.9 \\
\rowcolor[RGB]{230,241,219}
\quad + DFPO(PaperGuide) & \underline{28.1} & \textcolor{black}{\textbf{4.0}} & \underline{10.0} & \textcolor{black}{\textbf{8.3}} & 18.2 & \underline{25.3} & \textcolor{teal}{\textbf{21.0}} & \textcolor{black}{\textbf{49.5}} & \textcolor{black}{\textbf{48.8}} & \textcolor{black}{\textbf{45.5}} & \textcolor{teal}{\textbf{48.3}} & \textcolor{teal}{\textbf{37.4}} \\

\bottomrule
\end{tabular}
\end{table*}

\subsection{Main Results} \label{sec.4.2}
The main experimental results are reported in \hyperref[tab.1]{Table 1}, from which we observe that: \ding{182}\textbf{PaperGuide} significantly improves efficiency (I-Avg) without compromising response accuracy (Avg). Specifically, the 3B and 7B versions of our model show an average I-Avg score improvement of 5.5\% and 3.2\%, respectively, across two evaluated benchmarks. \ding{183}\textbf{PaperGuide} demonstrates superior performance in both accuracy and efficiency compared to standard SFT-RL pipelines. For instance, against the DAPO baseline, our 3B and 7B models achieve accuracy gains of 0.5\% and 1.7\% respectively, while also improving efficiency by 4.9\% and 3.5\%. In contrast, the M-GRPO baseline even results in performance degradation compared to the SFT-only model, further highlighting the effectiveness of our approach. \ding{184}\textbf{PaperGuide} demonstrates competitive performance against proprietary LLM baselines. Notably, our 7B fine-tuned model surpasses the much larger GPT-4o-mini on the complex AirQA-Real benchmark. While the proprietary model's superior general capabilities give it an edge on SciDQA, our results strongly indicate that specialized fine-tuning with strong RAG method like NeuSym-RAG and advanced RL algorithms like DFPO is a highly effective strategy for enabling smaller, open-source models to achieve state-of-the-art performance on domain-specific tasks.

\subsection{Efficiency Statistics} \label{sec.4.3}
We next investigate how our Draft-and-Follow architecture and DFPO algorithm contribute to improvements in the I-Avg score. To this end, we benchmark our full approach against an SFT-only baseline, with a detailed comparison of their efficiency-related statistics presented in \hyperref[fig.4]{Figure 4}.

\textbf{Correct Rate.} Our methods, DTFT and DFPO, enhance tool-use efficiency while improving QA correct rate. DTFT yields a 0.3\% gain in correct rate over the baseline, with DFPO contributing an additional 1.2\%. Both methods leverage a high-quality draft to guide the agent, thereby curtailing ineffective exploration. The resulting accuracy improvement with fewer actions suggests a higher proportion of effective exploration, a conclusion quantitatively supported by the following two metrics.

\textbf{Valid Answers.} We measure the valid answers—the proportion of trials concluding within the interaction limit—to quantify the agent's ability to avoid repetitive, non-productive loops. Our methods significantly improve this metric, with DTFT and DFPO yielding increases of 8.8\% and 18.9\%, respectively. This demonstrates that our draft mechanism effectively prevents agent stagnation, an effect particularly pronounced after applying DFPO. 

\textbf{Interaction Turns.} Agent efficiency, measured by the average interaction turns, is substantially improved. Our methods, DTFT and DFPO, reduce tool calls by 13.4\% and 31.3\% from the SFT baseline. This reduction in operational cost, coupled with the aforementioned correct rate gains, confirms a significant enhancement in overall agent performance. We further display the distribution of interaction turns in \hyperref[app.a.3]{Appendix A.3}.

\textbf{Draft-and-Follow and DFPO can narrow the `knowing-doing' gap.} By externalizing its question understanding and planning into a draft, the agent effectively guides its subsequent exploration. Our experiments demonstrate that an agent guided by this draft adopts a more effective tool-use policy. We interpret this result as a narrowing of the `knowing-doing' gap. Specifically, the ability to generate a correct draft establishes a solid foundation for `knowing', while the resulting efficient tool-use demonstrates improved rationality in `doing'. To further quantify the knowing-doing gap, we use A UCB-based diagnostic bandit task \citep{schmied2025llms} in \hyperref[sec.4.6]{Section 4.6}.

\begin{figure}[t] \label{fig.4}
    \centering
    \includegraphics[width=0.9\linewidth]{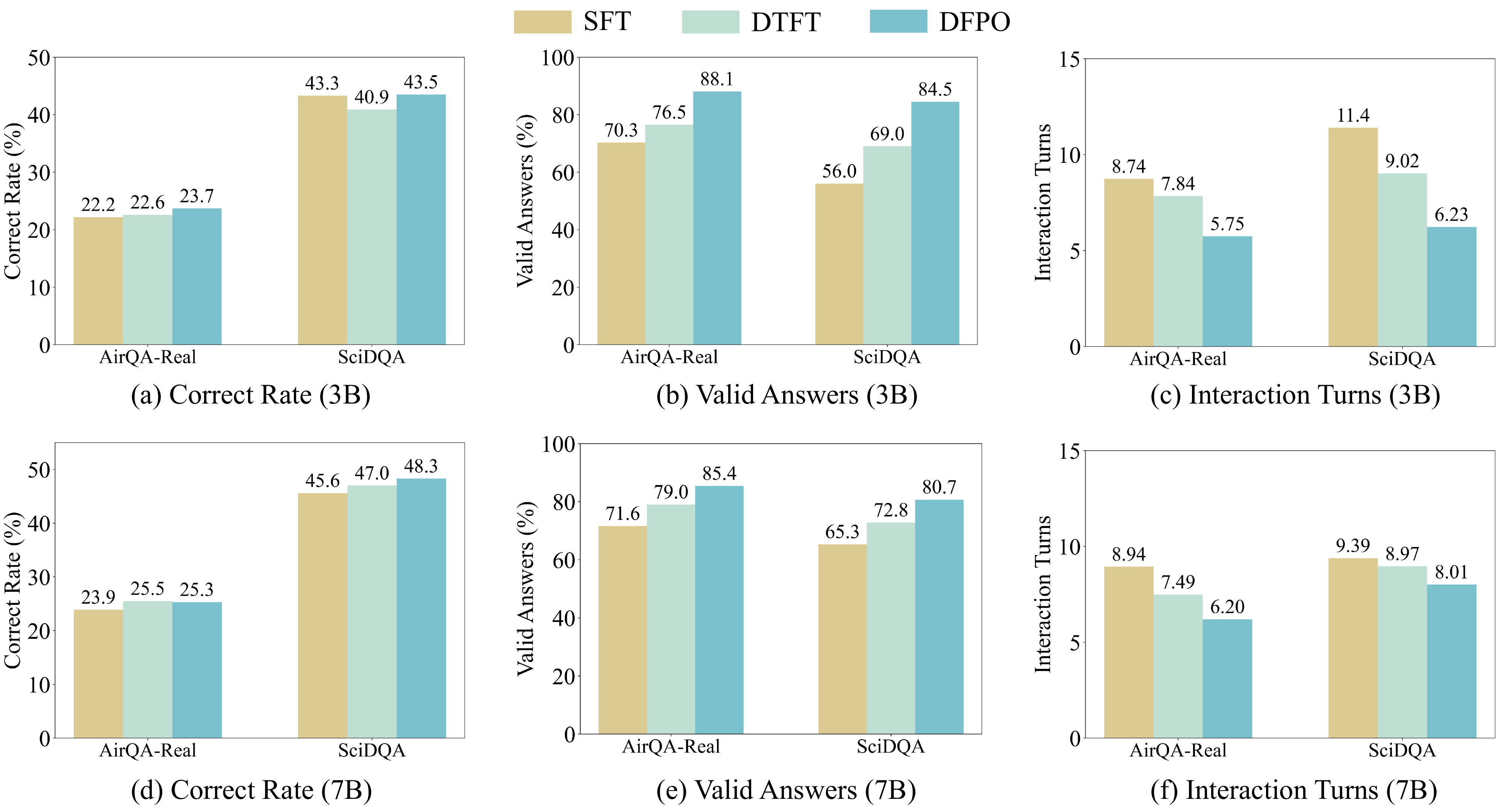} 
    \caption{Efficiency statistics of DTFT (Draft \& Tool-Use Fine-Tuning) and DFPO compared to SFT baselines on AirQA-Real and SciDQA. The combination of our Draft-and-Follow architecture and the DFPO algorithm enables the agent to operate more efficiently, significantly reducing the required number of tool calls without compromising QA accuracy.}
\end{figure}

\subsection{RL Methods Ablation}  \label{sec.4.4}
We present a series of ablation studies designed to validate our key architectural choices. First, we investigate the individual contributions of \textbf{negative sample masking} (NSM) and the \textbf{reward router} (RR) to the performance of PaperGuide. Second, we conduct a direct comparison between our DFPO algorithm and two baseline methods, M-GRPO \citep{wei2025webagent} and DAPO \citep{yu2025dapo}, using the DTFT-trained model as a common starting point.
\begin{table}[t] \label{tab.1}
\centering
\caption{Ablation study on RL method settings. To ensure a fair comparison, all RL method settings are initialized from the same DTFT-trained Qwen-2.5-7B-Instruct checkpoint. The best performance is \textcolor{teal}{\textbf{bolded}}.}
\renewcommand{\arraystretch}{1}
\fontsize{8.5pt}{8.0pt}\selectfont
\label{tab.2}
\begin{tabular}{ccc@{\quad}cccc}
\toprule
\multicolumn{3}{c}{\multirow{2}{*}{\textbf{Method}}}  & \multicolumn{2}{c}{\textbf{AirQA-Real}} & \multicolumn{2}{c}{\textbf{SciDQA}} \\\cmidrule(lr){4-5} \cmidrule(lr){6-7}
\multicolumn{3}{c}{} & Avg. & I-Avg.  & Avg.  & I-Avg.  \\
\midrule
\multicolumn{3}{c}{\textbf{M-GRPO}} & 24.8 & 18.8 & 45.7 & 34.4   \\
\midrule
\multicolumn{3}{c}{\textbf{DAPO}} & \textcolor{teal}{\textbf{25.9}} & 19.4 & 46.5 & 35.1   \\
\midrule
\multicolumn{1}{c}{}  & \textbf{NSM} &  \textbf{RR} &   &    &   &    \\
\multirow{4}{*}{\textbf{DFPO}} & \textcolor{red}{\ding{55}} & \textcolor{red}{\ding{55}} & 23.0 & 18.2 & 45.3 & 34.9  \\
& \textcolor{green}{\checkmark} & \textcolor{red}{\ding{55}} & 23.8 & 19.1 & 46.2 & 35.3 \\
& \textcolor{red}{\ding{55}} & \textcolor{green}{\checkmark} & 22.6 & 16.5 & 43.0 & 32.6 \\
& \textcolor{green}{\checkmark} & \textcolor{green}{\checkmark} & 25.3 & \textcolor{teal}{\textbf{21.0}} & \textcolor{teal}{\textbf{48.3}} & \textcolor{teal}{\textbf{37.4}} \\
\bottomrule
\end{tabular}
\end{table}

Based on the quantitative results in \hyperref[tab.2]{Table 2}, we draw the following key conclusions and insights (which is detailed in \hyperref[app.a.4.3]{Appendix A.4.3}): Our Draft-and-Follow framework significantly improves agent performance by using the draft to guide exploration toward more effective strategies and prevent the reinforcement of spurious reasoning paths. Building on this, our DFPO algorithm achieves superior interaction efficiency by explicitly optimizing the quality of this 'draft' before execution. The framework's success is bolstered by two critical components identified in our ablations: negative sample masking, which prevents destabilizing policy updates from incorrect solutions, and the reward router, which enables a fine-grained distinction between suboptimal and optimal answers. 

\subsection{Entropy Analysis} \label{sec.4.5}
Recent work has underscored the importance of entropy in LLM-RL \citep{zhang2025no,cui2025entropy,xu2025policy}. Accordingly, this section presents an analysis of the entropy dynamics observed during the our RL training process. This analysis addresses three key questions: \textit{Why does the M-GRPO baseline underperform compared to the SFT-only model? What is the impact of combining M-GRPO with our DTFT model? From an entropy perspective, how does DFPO further refine the DTFT-trained agent?}

\begin{figure}[t] \label{fig.5}
    \centering
    \includegraphics[width=0.9\linewidth]{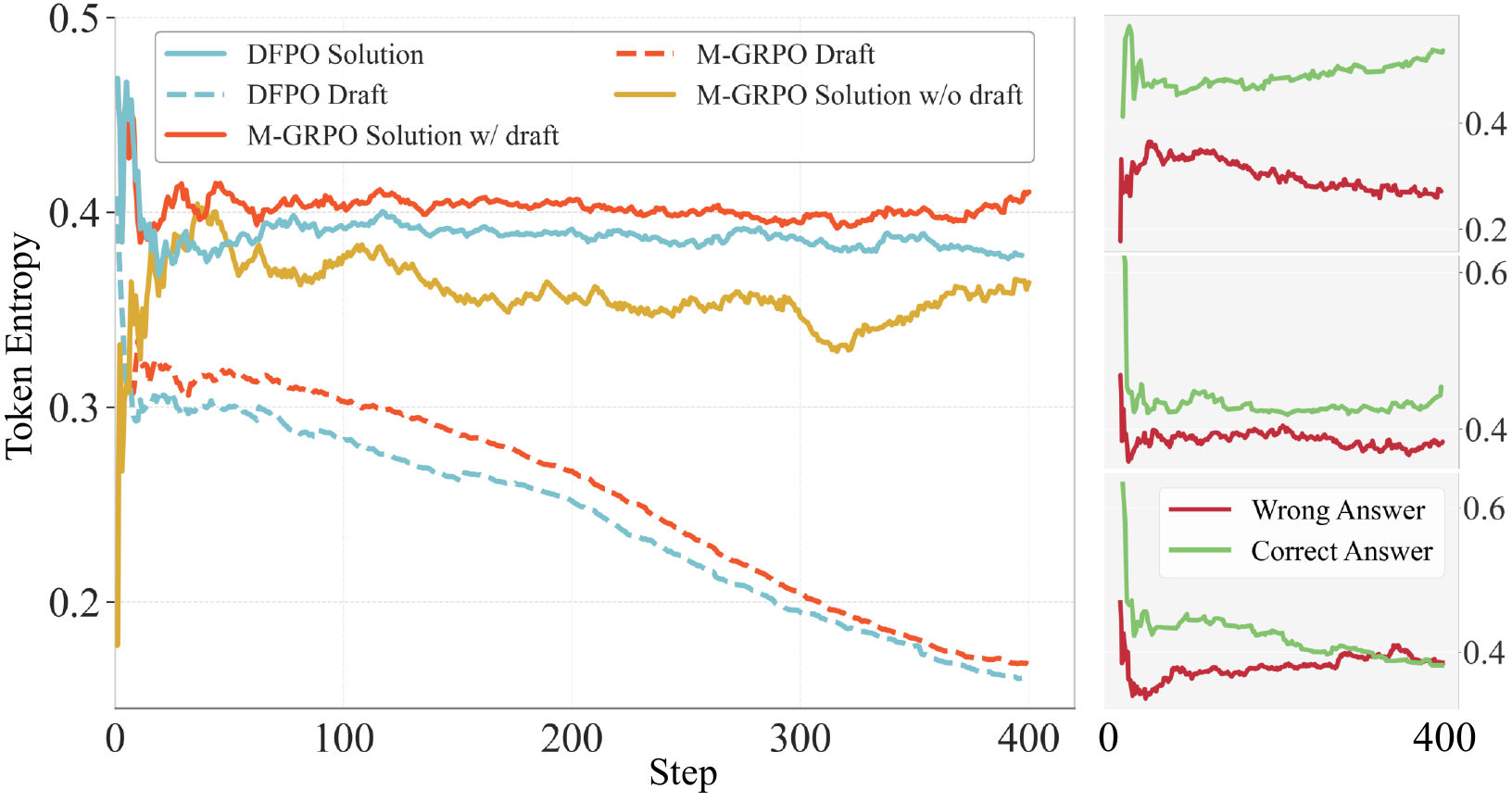}
    \caption{Entropy Dynamics during RL Training. \textbf{Left:} Average token entropy curves for the three training settings. \textbf{Right:} A granular view of the \textit{solution} entropy, partitioned by correct and wrong final answers. The curves are ordered from top to bottom: M-GRPO w/o draft, M-GRPO w/ draft, and DFPO.}
\end{figure}

Our analysis of the training dynamics in \hyperref[fig.5]{Figure 5} reveals divergent behaviors across the models. The SFT-based M-GRPO learns a spurious policy; a granular analysis shows its solution entropy paradoxically decreases for incorrect answers while increasing for correct ones, indicating it grows more confident in its errors. When initialized with DTFT, M-GRPO becomes more stable: The draft entropy decreases as expected, but the solution entropy remains high, aligning with the negligible performance gains shown in \hyperref[fig.1]{Figure 1(b)}. This suggests the draft prevents catastrophic failure but is not further optimized by M-GRPO alone. In sharp contrast, DFPO exhibits the ideal dynamic. Not only are its absolute entropy values lower, but its solution entropy shows the correct trend: Decreasing for correct answers and increasing for incorrect ones. This provides strong evidence that DFPO learns to both pursue correct reasoning paths and actively unlearn erroneous ones.

We argue that the assumed correlation between entropy and performance is not applicable in our task, as evidenced by our DFPO analysis, which shows a limited effect on reducing solution entropy. We refute the notion that this is due to insufficient training for two main reasons. First, \textit{the task nature introduces significant differences.} Unlike mathematical reasoning, multi-turn tool use presents uncertainties from variable observations and the model's token distribution, making the relationship between entropy and performance less direct \citep{dong2025agentic}. Second, \textit{the challenge of deep semantic understanding limits performance gains.} While PaperGuide's accuracy improvement is marginal compared to WebAgent \citep{wu2025webdancer}, this is due to the inherent complexity of Paper-QA, which demands deep comprehension beyond tool invocation. We emphasize that our main contribution is in improving efficient tool-use strategies, not in enhancing deep semantic understanding, which depends on model scale and pre-training \citep{rajani2025scalpel,wu2025invisible}.

\subsection{Knowing-doing Gap Probe} \label{sec.4.6}
To measure the knowing–doing gap, we use a UCB-based diagnostic bandit task \citep{schmied2025llms}. In each episode, the agent is shown 3 arms, each associated with a hidden reward distribution. The agent must (i) compute the value scores provided in the prompt (“knowing’’) and (ii) select an arm (“doing’’). The optimal arm is uniquely determined from the scores, so a mismatch between reasoning and action directly reflects a knowing–doing gap. We evaluate the model over 64 independent environments, each with 50 decision steps, and report how often the model correctly identifies the optimal arm but fails to choose it. As shown in the \hyperref[fig.6]{Figure 6}, PaperGuide-7B not only achieves a higher rate of correct reasoning (16.5\% vs. 13.7\%), but also conditioned on the steps where the model’s reasoning is correct (Knowing=True), it also converts that reasoning into correct actions more reliably: 8.3/16.5 = 50.3\% for PaperGuide-7B versus 5.7/13.7 = 41.6\% for Qwen2.5-7B-Instruct. This directly indicates a reduction in the knowing–doing gap.
\begin{figure}[t] \label{fig.6}
    \centering
    \includegraphics[width=0.9\linewidth]{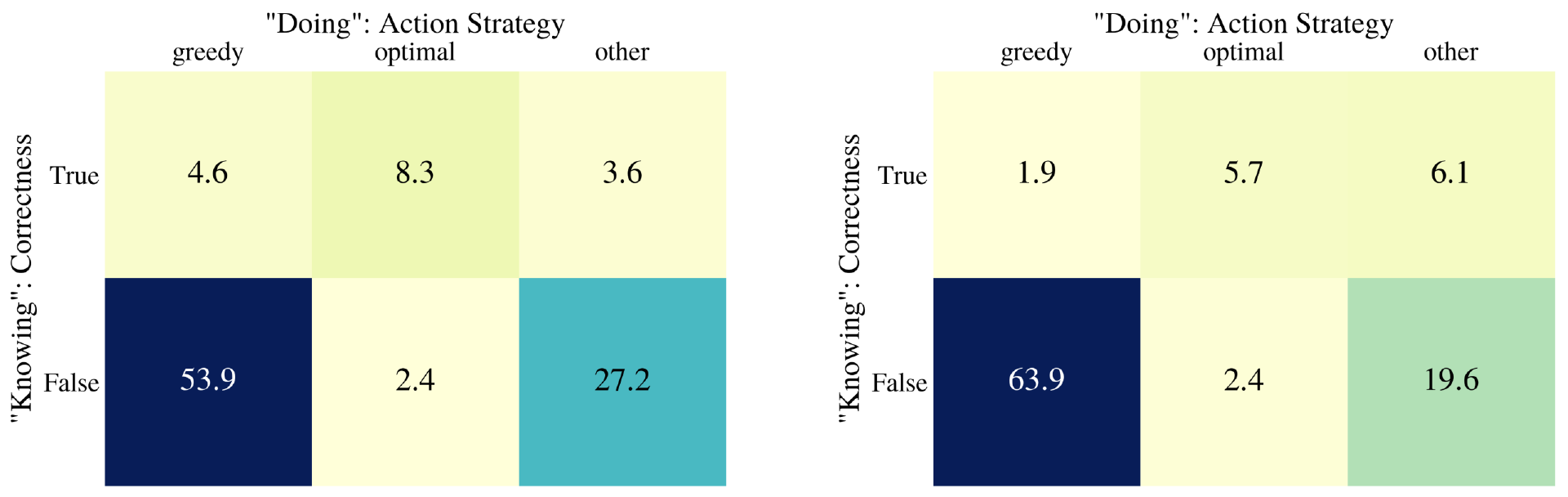}
    \caption{Confusion matrix for the Knowing-Doing Gap of \textbf{PaperGuide-7B (Left)} vs. \textbf{Qwen2.5-7B-Instruct (Right)}. }
\end{figure}

\section{Related Work}
\textbf{LLMs for Scientific Research.} The application of LLMs to enhance the scientific research lifecycle is a rapidly expanding field \citep{luo2025llm4sr}. The specific directions can be divided into Scientific Hypothesis Discovery \citep{yang2023large,wang2024scimon,wang2024scipip}, Experiment Planning \& Implementation \citep{liu2023training,rasheed2024can,schmidgall2025agent}, Paper Writing \citep{xing2020automatic,wang2024autosurvey,yu2024reinforced}, and Peer Reviewing \citep{wang2020reviewrobot,liu2023reviewergpt,du2024llms}. Our work attempts to address Scientific Paper-QA, an emerging research direction where prevailing approaches rely on agents built with Retrieval-Augmented Generation (RAG) \citep{trivedi2022interleaving,edge2024local,sarmah2024hybridrag}. While the work of \cite{cao2025neusym} is closely related, our primary contribution is a specialized training process designed to significantly enhance the agent's planning and reasoning abilities.

\textbf{Agentic Reinforcement Learning.} A body of recent work has demonstrated the significant potential of Reinforcement Learning (RL) for enhancing the reasoning capabilities of LLMs \citep{guo2025deepseek,shen2025satori,yang2025qwen3,yu2025dapo}. The emergence of tool-integrated reasoning has spurred the application of RL to enhance agent capabilities. These RL-based approaches can be broadly categorized into two main paradigms based on their optimization framework and inference structure: single-layer RL \citep{feng2025retool,wang2025ragen,li2025websailor} and hierarchical RL \citep{zhou2024archer,hu2025divide}. Our method, inspired by the options framework \citep{sutton1999between}, employs a single-layer optimization objective to train a bi-layer inference architecture, aiming to balance the computational efficiency of single-layer methods with the structured reasoning benefits of a hierarchical architecture.

\section{Conclusion}
We observed that smaller-scale LLM-based agents tend to get stuck in repetition during retrieval, resulting in severe inefficiency, especially in tasks like Paper-QA that rely on deep semantic understanding. Based on the challenges inherent in Paper-QA and the prevalent `knowing-doing' gap in LLMs, we introduce \textbf{PaperGuide}, a novel agent framework that bridges this divide by guiding subsequent execution (`doing') with a high-level plan (`knowing') in the form of a draft. This approach distinguishes itself from standard SFT-RL pipelines. To effectively support this hierarchical architecture, we developed two specialized methods: Draft \& Tool-use Fine-Tuning (DTFT) and Draft-and-Follow Policy Optimization (DFPO). Our experimental results demonstrate that PaperGuide significantly enhances interaction efficiency without compromising performance.

\bibliography{example_paper}

@article{li2025generalist,
  title={Generalist Reward Models: Found Inside Large Language Models},
  author={Li, Yi-Chen and Xu, Tian and Yu, Yang and Zhang, Xuqin and Chen, Xiong-Hui and Ling, Zhongxiang and Chao, Ningjing and Yuan, Lei and Zhou, Zhi-Hua},
  journal={arXiv preprint arXiv:2506.23235},
  year={2025}
}

@article{he2025pasa,
  title={Pasa: An llm agent for comprehensive academic paper search},
  author={He, Yichen and Huang, Guanhua and Feng, Peiyuan and Lin, Yuan and Zhang, Yuchen and Li, Hang and others},
  journal={arXiv preprint arXiv:2501.10120},
  year={2025}
}

@article{lu2024ai,
  title={The ai scientist: Towards fully automated open-ended scientific discovery},
  author={Lu, Chris and Lu, Cong and Lange, Robert Tjarko and Foerster, Jakob and Clune, Jeff and Ha, David},
  journal={arXiv preprint arXiv:2408.06292},
  year={2024}
}

@article{liao2024llms,
  title={LLMs as Research Tools: A Large Scale Survey of Researchers' Usage and Perceptions},
  author={Liao, Zhehui and Antoniak, Maria and Cheong, Inyoung and Cheng, Evie Yu-Yen and Lee, Ai-Heng and Lo, Kyle and Chang, Joseph Chee and Zhang, Amy X},
  journal={arXiv preprint arXiv:2411.05025},
  year={2024}
}

@article{zhao2024chemdfm,
  title={ChemDFM: a large language foundation model for chemistry},
  author={Zhao, Zihan and Ma, Da and Chen, Lu and Sun, Liangtai and Li, Zihao and Xia, Yi and Chen, Bo and Xu, Hongshen and Zhu, Zichen and Zhu, Su and others},
  journal={arXiv preprint arXiv:2401.14818},
  year={2024}
}

@misc{chen2025academicrag,
  title={AcademicRAG: Knowledge Graph Enhanced Retrieval-Augmented Generation for Academic Resource Discovery: Enhancing Educational Resource Discovery through Knowledge Graph-Based RAG Framework},
  author={Chen, Shuhua},
  year={2025}
}

@article{wysocki2024llm,
  title={An llm-based knowledge synthesis and scientific reasoning framework for biomedical discovery},
  author={Wysocki, Oskar and Wysocka, Magdalena and Carvalho, Danilo and Bogatu, Alex Teodor and Gusicuma, Danilo Miranda and Delmas, Maxime and Unsworth, Harriet and Freitas, Andre},
  journal={arXiv preprint arXiv:2406.18626},
  year={2024}
}

@article{schmidgall2025agent,
  title={Agent laboratory: Using llm agents as research assistants},
  author={Schmidgall, Samuel and Su, Yusheng and Wang, Ze and Sun, Ximeng and Wu, Jialian and Yu, Xiaodong and Liu, Jiang and Liu, Zicheng and Barsoum, Emad},
  journal={arXiv preprint arXiv:2501.04227},
  year={2025}
}

@article{ferrag2025llm,
  title={From llm reasoning to autonomous ai agents: A comprehensive review},
  author={Ferrag, Mohamed Amine and Tihanyi, Norbert and Debbah, Merouane},
  journal={arXiv preprint arXiv:2504.19678},
  year={2025}
}

@article{team2025novelseek,
  title={NovelSeek: When Agent Becomes the Scientist--Building Closed-Loop System from Hypothesis to Verification},
  author={Team, NovelSeek and Zhang, Bo and Feng, Shiyang and Yan, Xiangchao and Yuan, Jiakang and Yu, Zhiyin and He, Xiaohan and Huang, Songtao and Hou, Shaowei and Nie, Zheng and others},
  journal={arXiv preprint arXiv:2505.16938},
  year={2025}
}

@article{guo2025deepseek,
  title={Deepseek-r1: Incentivizing reasoning capability in llms via reinforcement learning},
  author={Guo, Daya and Yang, Dejian and Zhang, Haowei and Song, Junxiao and Zhang, Ruoyu and Xu, Runxin and Zhu, Qihao and Ma, Shirong and Wang, Peiyi and Bi, Xiao and others},
  journal={arXiv preprint arXiv:2501.12948},
  year={2025}
}

@article{comanici2025gemini,
  title={Gemini 2.5: Pushing the Frontier with Advanced Reasoning, Multimodality, Long Context, and Next Generation Agentic Capabilities},
  author={Comanici, Gheorghe and Bieber, Eric and Schaekermann, Mike and Pasupat, Ice and Sachdeva, Noveen and Dhillon, Inderjit and Blistein, Marcel and Ram, Ori and Zhang, Dan and Rosen, Evan and others},
  journal={arXiv preprint arXiv:2507.06261},
  year={2025}
}

@article{yang2025qwen3,
  title={Qwen3 technical report},
  author={Yang, An and Li, Anfeng and Yang, Baosong and Zhang, Beichen and Hui, Binyuan and Zheng, Bo and Yu, Bowen and Gao, Chang and Huang, Chengen and Lv, Chenxu and others},
  journal={arXiv preprint arXiv:2505.09388},
  year={2025}
}

@article{achiam2023gpt,
  title={Gpt-4 technical report},
  author={Achiam, Josh and Adler, Steven and Agarwal, Sandhini and Ahmad, Lama and Akkaya, Ilge and Aleman, Florencia Leoni and Almeida, Diogo and Altenschmidt, Janko and Altman, Sam and Anadkat, Shyamal and others},
  journal={arXiv preprint arXiv:2303.08774},
  year={2023}
}

@misc{anthropic2024claude3,
  author      = {Anthropic},
  title       = {The Claude 3 Model Family: Opus, Sonnet, Haiku},
  institution = {Anthropic},
  year        = {2024},
  url         = {https://www-cdn.anthropic.com/de8ba9b01c9ab7cbabf5c33b80b7bbc618857627/Model_Card_Claude_3.pdf},
  note        = {Model Card}
}

@article{lewis2020retrieval,
  title={Retrieval-augmented generation for knowledge-intensive nlp tasks},
  author={Lewis, Patrick and Perez, Ethan and Piktus, Aleksandra and Petroni, Fabio and Karpukhin, Vladimir and Goyal, Naman and K{\"u}ttler, Heinrich and Lewis, Mike and Yih, Wen-tau and Rockt{\"a}schel, Tim and others},
  journal={Advances in neural information processing systems},
  volume={33},
  pages={9459--9474},
  year={2020}
}

@inproceedings{guu2020retrieval,
  title={Retrieval augmented language model pre-training},
  author={Guu, Kelvin and Lee, Kenton and Tung, Zora and Pasupat, Panupong and Chang, Mingwei},
  booktitle={International conference on machine learning},
  pages={3929--3938},
  year={2020},
  organization={PMLR}
}

@article{biswal2024text2sql,
  title={Text2sql is not enough: Unifying ai and databases with tag},
  author={Biswal, Asim and Patel, Liana and Jha, Siddarth and Kamsetty, Amog and Liu, Shu and Gonzalez, Joseph E and Guestrin, Carlos and Zaharia, Matei},
  journal={arXiv preprint arXiv:2408.14717},
  year={2024}
}

@article{cao2025neusym,
  title={NeuSym-RAG: Hybrid Neural Symbolic Retrieval with Multiview Structuring for PDF Question Answering},
  author={Cao, Ruisheng and Zhang, Hanchong and Huang, Tiancheng and Kang, Zhangyi and Zhang, Yuxin and Sun, Liangtai and Li, Hanqi and Miao, Yuxun and Fan, Shuai and Chen, Lu and others},
  journal={arXiv preprint arXiv:2505.19754},
  year={2025}
}

@article{singh2024scidqa,
  title={Scidqa: A deep reading comprehension dataset over scientific papers},
  author={Singh, Shruti and Sarkar, Nandan and Cohan, Arman},
  journal={arXiv preprint arXiv:2411.05338},
  year={2024}
}

@article{paglieri2024balrog,
  title={Balrog: Benchmarking agentic llm and vlm reasoning on games},
  author={Paglieri, Davide and Cupia{\l}, Bart{\l}omiej and Coward, Samuel and Piterbarg, Ulyana and Wolczyk, Maciej and Khan, Akbir and Pignatelli, Eduardo and Kuci{\'n}ski, {\L}ukasz and Pinto, Lerrel and Fergus, Rob and others},
  journal={arXiv preprint arXiv:2411.13543},
  year={2024}
}

@article{ruoss2024lmact,
  title={Lmact: A benchmark for in-context imitation learning with long multimodal demonstrations},
  author={Ruoss, Anian and Pardo, Fabio and Chan, Harris and Li, Bonnie and Mnih, Volodymyr and Genewein, Tim},
  journal={arXiv preprint arXiv:2412.01441},
  year={2024}
}

@article{schmied2025llms,
  title={Llms are greedy agents: Effects of rl fine-tuning on decision-making abilities},
  author={Schmied, Thomas and Bornschein, J{\"o}rg and Grau-Moya, Jordi and Wulfmeier, Markus and Pascanu, Razvan},
  journal={arXiv preprint arXiv:2504.16078},
  year={2025}
}

@article{ho2022people,
  title={People construct simplified mental representations to plan},
  author={Ho, Mark K and Abel, David and Correa, Carlos G and Littman, Michael L and Cohen, Jonathan D and Griffiths, Thomas L},
  journal={Nature},
  volume={606},
  number={7912},
  pages={129--136},
  year={2022},
  publisher={Nature Publishing Group UK London}
}

@article{zhang2025learning,
  title={Learning to Plan Before Answering: Self-Teaching LLMs to Learn Abstract Plans for Problem Solving},
  author={Zhang, Jin and Sung, Flood and Yang, Zhilin and Gao, Yang and Zhang, Chongjie},
  journal={arXiv preprint arXiv:2505.00031},
  year={2025}
}

@article{kang2025distilling,
  title={Distilling llm agent into small models with retrieval and code tools},
  author={Kang, Minki and Jeong, Jongwon and Lee, Seanie and Cho, Jaewoong and Hwang, Sung Ju},
  journal={arXiv preprint arXiv:2505.17612},
  year={2025}
}

@inproceedings{yao2023react,
  title={React: Synergizing reasoning and acting in language models},
  author={Yao, Shunyu and Zhao, Jeffrey and Yu, Dian and Du, Nan and Shafran, Izhak and Narasimhan, Karthik and Cao, Yuan},
  booktitle={International Conference on Learning Representations (ICLR)},
  year={2023}
}

@article{wei2025webagent,
  title={Webagent-r1: Training web agents via end-to-end multi-turn reinforcement learning},
  author={Wei, Zhepei and Yao, Wenlin and Liu, Yao and Zhang, Weizhi and Lu, Qin and Qiu, Liang and Yu, Changlong and Xu, Puyang and Zhang, Chao and Yin, Bing and others},
  journal={arXiv preprint arXiv:2505.16421},
  year={2025}
}

@article{li2025websailor,
  title={WebSailor: Navigating Super-human Reasoning for Web Agent},
  author={Li, Kuan and Zhang, Zhongwang and Yin, Huifeng and Zhang, Liwen and Ou, Litu and Wu, Jialong and Yin, Wenbiao and Li, Baixuan and Tao, Zhengwei and Wang, Xinyu and others},
  journal={arXiv preprint arXiv:2507.02592},
  year={2025}
}

@article{shao2024deepseekmath,
  title={Deepseekmath: Pushing the limits of mathematical reasoning in open language models},
  author={Shao, Zhihong and Wang, Peiyi and Zhu, Qihao and Xu, Runxin and Song, Junxiao and Bi, Xiao and Zhang, Haowei and Zhang, Mingchuan and Li, YK and Wu, Yang and others},
  journal={arXiv preprint arXiv:2402.03300},
  year={2024}
}

@misc{shi2025sparscholarpaperretrieval,
      title={SPAR: Scholar Paper Retrieval with LLM-based Agents for Enhanced Academic Search}, 
      author={Xiaofeng Shi and Yuduo Li and Qian Kou and Longbin Yu and Jinxin Xie and Hua Zhou},
      year={2025},
      eprint={2507.15245},
      archivePrefix={arXiv},
      primaryClass={cs.IR},
      url={https://arxiv.org/abs/2507.15245}, 
}

@misc{PyMuPDF2023,
  author       = {{Artifex Software, Inc.}},
  title        = {PyMuPDF - a python binding for mupdf},
  year         = {2023},
  url          = {https://pymupdf.readthedocs.io/en/latest/},
  note         = {Version 1.24.9, accessed on January 25, 2025}
}

@misc{wang2024mineruopensourcesolutionprecise,
      title={MinerU: An Open-Source Solution for Precise Document Content Extraction}, 
      author={Bin Wang and Chao Xu and Xiaomeng Zhao and Linke Ouyang and Fan Wu and Zhiyuan Zhao and Rui Xu and Kaiwen Liu and Yuan Qu and Fukai Shang and Bo Zhang and Liqun Wei and Zhihao Sui and Wei Li and Botian Shi and Yu Qiao and Dahua Lin and Conghui He},
      year={2024},
      eprint={2409.18839},
      archivePrefix={arXiv},
      primaryClass={cs.CV},
      url={https://arxiv.org/abs/2409.18839}, 
}

@article{wei2022chain,
  title={Chain-of-thought prompting elicits reasoning in large language models},
  author={Wei, Jason and Wang, Xuezhi and Schuurmans, Dale and Bosma, Maarten and Xia, Fei and Chi, Ed and Le, Quoc V and Zhou, Denny and others},
  journal={Advances in neural information processing systems},
  volume={35},
  pages={24824--24837},
  year={2022}
}

@article{sutton1999between,
  title={Between MDPs and semi-MDPs: A framework for temporal abstraction in reinforcement learning},
  author={Sutton, Richard S and Precup, Doina and Singh, Satinder},
  journal={Artificial intelligence},
  volume={112},
  number={1-2},
  pages={181--211},
  year={1999},
  publisher={Elsevier}
}

@inproceedings{bacon2017option,
  title={The option-critic architecture},
  author={Bacon, Pierre-Luc and Harb, Jean and Precup, Doina},
  booktitle={Proceedings of the AAAI conference on artificial intelligence},
  volume={31},
  number={1},
  year={2017}
}

@book{eysenck2020cognitive,
  title={Cognitive psychology: A student's handbook},
  author={Eysenck, Michael W and Keane, Mark T},
  year={2020},
  publisher={Psychology press}
}

@article{cambridgecambridge,
  title={The Cambridge Handbook of Computational Cognitive},
  author={CAMBRIDGE, BE}
}

@article{lanchantin2025bridging,
  title={Bridging Offline and Online Reinforcement Learning for LLMs},
  author={Lanchantin, Jack and Chen, Angelica and Lan, Janice and Li, Xian and Saha, Swarnadeep and Wang, Tianlu and Xu, Jing and Yu, Ping and Yuan, Weizhe and Weston, Jason E and others},
  journal={arXiv preprint arXiv:2506.21495},
  year={2025}
}

@article{yu2025dapo,
  title={Dapo: An open-source llm reinforcement learning system at scale},
  author={Yu, Qiying and Zhang, Zheng and Zhu, Ruofei and Yuan, Yufeng and Zuo, Xiaochen and Yue, Yu and Dai, Weinan and Fan, Tiantian and Liu, Gaohong and Liu, Lingjun and others},
  journal={arXiv preprint arXiv:2503.14476},
  year={2025}
}

@article{agarwal2025unreasonable,
  title={The unreasonable effectiveness of entropy minimization in llm reasoning},
  author={Agarwal, Shivam and Zhang, Zimin and Yuan, Lifan and Han, Jiawei and Peng, Hao},
  journal={arXiv preprint arXiv:2505.15134},
  year={2025}
}

@article{wang2025implicit,
  title={Implicit Reward as the Bridge: A Unified View of SFT and DPO Connections},
  author={Wang, Bo and Cheng, Qinyuan and Peng, Runyu and Bao, Rong and Li, Peiji and Guo, Qipeng and Li, Linyang and Zeng, Zhiyuan and Zhou, Yunhua and Qiu, Xipeng},
  journal={arXiv preprint arXiv:2507.00018},
  year={2025}
}

@article{wulfmeier2024imitating,
  title={Imitating language via scalable inverse reinforcement learning},
  author={Wulfmeier, Markus and Bloesch, Michael and Vieillard, Nino and Ahuja, Arun and Bornschein, Jorg and Huang, Sandy and Sokolov, Artem and Barnes, Matt and Desjardins, Guillaume and Bewley, Alex and others},
  journal={Advances in Neural Information Processing Systems},
  volume={37},
  pages={90714--90735},
  year={2024}
}

@article{sutton1999policy,
  title={Policy gradient methods for reinforcement learning with function approximation},
  author={Sutton, Richard S and McAllester, David and Singh, Satinder and Mansour, Yishay},
  journal={Advances in neural information processing systems},
  volume={12},
  year={1999}
}

@article{bhatia2000better,
  title={A better bound on the variance},
  author={Bhatia, Rajendra and Davis, Chandler},
  journal={The american mathematical monthly},
  volume={107},
  number={4},
  pages={353--357},
  year={2000},
  publisher={Taylor \& Francis}
}

@article{chen2025learning,
  title={Learning to reason with search for llms via reinforcement learning},
  author={Chen, Mingyang and Li, Tianpeng and Sun, Haoze and Zhou, Yijie and Zhu, Chenzheng and Wang, Haofen and Pan, Jeff Z and Zhang, Wen and Chen, Huajun and Yang, Fan and others},
  journal={arXiv preprint arXiv:2503.19470},
  year={2025}
}

@article{zheng2025deepresearcher,
  title={Deepresearcher: Scaling deep research via reinforcement learning in real-world environments},
  author={Zheng, Yuxiang and Fu, Dayuan and Hu, Xiangkun and Cai, Xiaojie and Ye, Lyumanshan and Lu, Pengrui and Liu, Pengfei},
  journal={arXiv preprint arXiv:2504.03160},
  year={2025}
}

@article{jin2025search,
  title={Search-r1: Training llms to reason and leverage search engines with reinforcement learning},
  author={Jin, Bowen and Zeng, Hansi and Yue, Zhenrui and Yoon, Jinsung and Arik, Sercan and Wang, Dong and Zamani, Hamed and Han, Jiawei},
  journal={arXiv preprint arXiv:2503.09516},
  year={2025}
}

@article{luo2025llm4sr,
  title={Llm4sr: A survey on large language models for scientific research},
  author={Luo, Ziming and Yang, Zonglin and Xu, Zexin and Yang, Wei and Du, Xinya},
  journal={arXiv preprint arXiv:2501.04306},
  year={2025}
}

@inproceedings{wang2024scimon,
  title={Scimon: Scientific inspiration machines optimized for novelty},
  author={Wang, Qingyun and Downey, Doug and Ji, Heng and Hope, Tom},
  booktitle={Proceedings of the 62nd Annual Meeting of the Association for Computational Linguistics (Volume 1: Long Papers)},
  pages={279--299},
  year={2024}
}

@article{yang2023large,
  title={Large language models for automated open-domain scientific hypotheses discovery},
  author={Yang, Zonglin and Du, Xinya and Li, Junxian and Zheng, Jie and Poria, Soujanya and Cambria, Erik},
  journal={arXiv preprint arXiv:2309.02726},
  year={2023}
}

@article{wang2024scipip,
  title={Scipip: An llm-based scientific paper idea proposer},
  author={Wang, Wenxiao and Gu, Lihui and Zhang, Liye and Luo, Yunxiang and Dai, Yi and Shen, Chen and Xie, Liang and Lin, Binbin and He, Xiaofei and Ye, Jieping},
  journal={arXiv preprint arXiv:2410.23166},
  year={2024}
}

@article{liu2023training,
  title={Training socially aligned language models in simulated human society},
  author={Liu, Ruibo and Yang, Ruixin and Jia, Chenyan and Zhang, Ge and Zhou, Denny and Dai, Andrew M and Yang, Diyi and Vosoughi, Soroush},
  journal={arXiv preprint arXiv:2305.16960},
  volume={2},
  year={2023}
}

@article{rasheed2024can,
  title={Can large language models serve as data analysts},
  author={Rasheed, Z and Waseem, M and Ahmad, A and Kemell, KK and Xiaofeng, W and Duc, AN and Abrahamsson, P},
  journal={A multi-agent assisted approach for qualitative data analysis},
  year={2024}
}

@inproceedings{xing2020automatic,
  title={Automatic generation of citation texts in scholarly papers: A pilot study},
  author={Xing, Xinyu and Fan, Xiaosheng and Wan, Xiaojun},
  booktitle={Proceedings of the 58th Annual Meeting of the Association for Computational Linguistics},
  pages={6181--6190},
  year={2020}
}

@article{wang2024autosurvey,
  title={Autosurvey: Large language models can automatically write surveys},
  author={Wang, Yidong and Guo, Qi and Yao, Wenjin and Zhang, Hongbo and Zhang, Xin and Wu, Zhen and Zhang, Meishan and Dai, Xinyu and Wen, Qingsong and Ye, Wei and others},
  journal={Advances in neural information processing systems},
  volume={37},
  pages={115119--115145},
  year={2024}
}

@inproceedings{yu2024reinforced,
  title={Reinforced Subject-Aware Graph Neural Network for Related Work Generation},
  author={Yu, Luyao and Zhang, Qi and Shi, Chongyang and Lao, An and Xiao, Liang},
  booktitle={International Conference on Knowledge Science, Engineering and Management},
  pages={201--213},
  year={2024},
  organization={Springer}
}

@article{wang2020reviewrobot,
  title={ReviewRobot: Explainable paper review generation based on knowledge synthesis},
  author={Wang, Qingyun and Zeng, Qi and Huang, Lifu and Knight, Kevin and Ji, Heng and Rajani, Nazneen Fatema},
  journal={arXiv preprint arXiv:2010.06119},
  year={2020}
}

@article{liu2023reviewergpt,
  title={Reviewergpt? an exploratory study on using large language models for paper reviewing},
  author={Liu, Ryan and Shah, Nihar B},
  journal={arXiv preprint arXiv:2306.00622},
  year={2023}
}

@article{du2024llms,
  title={LLMs assist NLP researchers: Critique paper (meta-) reviewing},
  author={Du, Jiangshu and Wang, Yibo and Zhao, Wenting and Deng, Zhongfen and Liu, Shuaiqi and Lou, Renze and Zou, Henry Peng and Venkit, Pranav Narayanan and Zhang, Nan and Srinath, Mukund and others},
  journal={arXiv preprint arXiv:2406.16253},
  year={2024}
}

@article{trivedi2022interleaving,
  title={Interleaving retrieval with chain-of-thought reasoning for knowledge-intensive multi-step questions},
  author={Trivedi, Harsh and Balasubramanian, Niranjan and Khot, Tushar and Sabharwal, Ashish},
  journal={arXiv preprint arXiv:2212.10509},
  year={2022}
}

@article{edge2024local,
  title={From local to global: A graph rag approach to query-focused summarization},
  author={Edge, Darren and Trinh, Ha and Cheng, Newman and Bradley, Joshua and Chao, Alex and Mody, Apurva and Truitt, Steven and Metropolitansky, Dasha and Ness, Robert Osazuwa and Larson, Jonathan},
  journal={arXiv preprint arXiv:2404.16130},
  year={2024}
}

@inproceedings{sarmah2024hybridrag,
  title={Hybridrag: Integrating knowledge graphs and vector retrieval augmented generation for efficient information extraction},
  author={Sarmah, Bhaskarjit and Mehta, Dhagash and Hall, Benika and Rao, Rohan and Patel, Sunil and Pasquali, Stefano},
  booktitle={Proceedings of the 5th ACM International Conference on AI in Finance},
  pages={608--616},
  year={2024}
}

@article{shen2025satori,
  title={Satori: Reinforcement learning with chain-of-action-thought enhances llm reasoning via autoregressive search},
  author={Shen, Maohao and Zeng, Guangtao and Qi, Zhenting and Hong, Zhang-Wei and Chen, Zhenfang and Lu, Wei and Wornell, Gregory and Das, Subhro and Cox, David and Gan, Chuang},
  journal={arXiv preprint arXiv:2502.02508},
  year={2025}
}

@article{feng2025retool,
  title={Retool: Reinforcement learning for strategic tool use in llms},
  author={Feng, Jiazhan and Huang, Shijue and Qu, Xingwei and Zhang, Ge and Qin, Yujia and Zhong, Baoquan and Jiang, Chengquan and Chi, Jinxin and Zhong, Wanjun},
  journal={arXiv preprint arXiv:2504.11536},
  year={2025}
}

@article{wang2025ragen,
  title={Ragen: Understanding self-evolution in llm agents via multi-turn reinforcement learning},
  author={Wang, Zihan and Wang, Kangrui and Wang, Qineng and Zhang, Pingyue and Li, Linjie and Yang, Zhengyuan and Jin, Xing and Yu, Kefan and Nguyen, Minh Nhat and Liu, Licheng and others},
  journal={arXiv preprint arXiv:2504.20073},
  year={2025}
}

@article{zhou2024archer,
  title={Archer: Training language model agents via hierarchical multi-turn rl},
  author={Zhou, Yifei and Zanette, Andrea and Pan, Jiayi and Levine, Sergey and Kumar, Aviral},
  journal={arXiv preprint arXiv:2402.19446},
  year={2024}
}

@article{hu2025divide,
  title={Divide and Conquer: Grounding LLMs as Efficient Decision-Making Agents via Offline Hierarchical Reinforcement Learning},
  author={Hu, Zican and Liu, Wei and Qu, Xiaoye and Yue, Xiangyu and Chen, Chunlin and Wang, Zhi and Cheng, Yu},
  journal={arXiv preprint arXiv:2505.19761},
  year={2025}
}

@article{zhang2025no,
  title={No Free Lunch: Rethinking Internal Feedback for LLM Reasoning},
  author={Zhang, Yanzhi and Zhang, Zhaoxi and Guan, Haoxiang and Cheng, Yilin and Duan, Yitong and Wang, Chen and Wang, Yue and Zheng, Shuxin and He, Jiyan},
  journal={arXiv preprint arXiv:2506.17219},
  year={2025}
}

@article{zhao2025learning,
  title={Learning to reason without external rewards},
  author={Zhao, Xuandong and Kang, Zhewei and Feng, Aosong and Levine, Sergey and Song, Dawn},
  journal={arXiv preprint arXiv:2505.19590},
  year={2025}
}

@article{liu2025bingo,
  title={Bingo: Boosting Efficient Reasoning of LLMs via Dynamic and Significance-based Reinforcement Learning},
  author={Liu, Hanbing and Cao, Lang and Ren, Yuanyi and Zhou, Mengyu and Dong, Haoyu and Ma, Xiaojun and Han, Shi and Zhang, Dongmei},
  journal={arXiv preprint arXiv:2506.08125},
  year={2025}
}

@article{portelas2020automatic,
  title={Automatic curriculum learning for deep rl: A short survey},
  author={Portelas, R{\'e}my and Colas, C{\'e}dric and Weng, Lilian and Hofmann, Katja and Oudeyer, Pierre-Yves},
  journal={arXiv preprint arXiv:2003.04664},
  year={2020}
}

@inproceedings{shrivastava2016training,
  title={Training region-based object detectors with online hard example mining},
  author={Shrivastava, Abhinav and Gupta, Abhinav and Girshick, Ross},
  booktitle={Proceedings of the IEEE conference on computer vision and pattern recognition},
  pages={761--769},
  year={2016}
}

@misc{vonwerra2022trl,
  author = {Leandro von Werra and Younes Belkada and Lewis Tunstall and Edward Beeching and Tristan Thrush and Nathan Lambert and Shengyi Huang and Kashif Rasul and Quentin Gallouédec},
  title = {TRL: Transformer Reinforcement Learning},
  year = {2020},
  publisher = {GitHub},
  journal = {GitHub repository},
  howpublished = {\url{https://github.com/huggingface/trl}}
}

@article{cui2025entropy,
  title={The entropy mechanism of reinforcement learning for reasoning language models},
  author={Cui, Ganqu and Zhang, Yuchen and Chen, Jiacheng and Yuan, Lifan and Wang, Zhi and Zuo, Yuxin and Li, Haozhan and Fan, Yuchen and Chen, Huayu and Chen, Weize and others},
  journal={arXiv preprint arXiv:2505.22617},
  year={2025}
}

@article{xu2025policy,
  title={The Policy Cliff: A Theoretical Analysis of Reward-Policy Maps in Large Language Models},
  author={Xu, Xingcheng},
  journal={arXiv preprint arXiv:2507.20150},
  year={2025}
}

@article{feng2024natural,
  title={Natural language reinforcement learning},
  author={Feng, Xidong and Liu, Bo and Song, Yan and Fu, Haotian and Wan, Ziyu and Koushik, Girish A and Hu, Zhiyuan and Yang, Mengyue and Wen, Ying and Wang, Jun},
  journal={arXiv preprint arXiv:2411.14251},
  year={2024}
}

@article{dong2025agentic,
  title={Agentic Reinforced Policy Optimization},
  author={Dong, Guanting and Mao, Hangyu and Ma, Kai and Bao, Licheng and Chen, Yifei and Wang, Zhongyuan and Chen, Zhongxia and Du, Jiazhen and Wang, Huiyang and Zhang, Fuzheng and others},
  journal={arXiv preprint arXiv:2507.19849},
  year={2025}
}

@article{wu2025webdancer,
  title={WebDancer: Towards Autonomous Information Seeking Agency},
  author={Wu, Jialong and Li, Baixuan and Fang, Runnan and Yin, Wenbiao and Zhang, Liwen and Tao, Zhengwei and Zhang, Dingchu and Xi, Zekun and Fu, Gang and Jiang, Yong and others},
  journal={arXiv preprint arXiv:2505.22648},
  year={2025}
}

@article{rajani2025scalpel,
  title={Scalpel vs. Hammer: GRPO Amplifies Existing Capabilities, SFT Replaces Them},
  author={Rajani, Neel and Gema, Aryo Pradipta and Goldfarb-Tarrant, Seraphina and Titov, Ivan},
  journal={arXiv preprint arXiv:2507.10616},
  year={2025}
}

@article{wu2025invisible,
  title={The Invisible Leash: Why RLVR May Not Escape Its Origin},
  author={Wu, Fang and Xuan, Weihao and Lu, Ximing and Harchaoui, Zaid and Choi, Yejin},
  journal={arXiv preprint arXiv:2507.14843},
  year={2025}
}

@article{deng2025atom,
  title={Atom-Searcher: Enhancing Agentic Deep Research via Fine-Grained Atomic Thought Reward},
  author={Deng, Yong and Wang, Guoqing and Ying, Zhenzhe and Wu, Xiaofeng and Lin, Jinzhen and Xiong, Wenwen and Dai, Yuqin and Yang, Shuo and Zhang, Zhanwei and Wang, Qiwen and others},
  journal={arXiv preprint arXiv:2508.12800},
  year={2025}
}

@article{schulman2017proximal,
  title={Proximal policy optimization algorithms},
  author={Schulman, John and Wolski, Filip and Dhariwal, Prafulla and Radford, Alec and Klimov, Oleg},
  journal={arXiv preprint arXiv:1707.06347},
  year={2017}
}

@misc{huang2025airqacomprehensiveqadataset,
      title={AirQA: A Comprehensive QA Dataset for AI Research with Instance-Level Evaluation}, 
      author={Tiancheng Huang and Ruisheng Cao and Yuxin Zhang and Zhangyi Kang and Zijian Wang and Chenrun Wang and Yijie Luo and Hang Zheng and Lirong Qian and Lu Chen and Kai Yu},
      year={2025},
      eprint={2509.16952},
      archivePrefix={arXiv},
      primaryClass={cs.CL},
      url={https://arxiv.org/abs/2509.16952}, 
}
\bibliographystyle{icml2026}

\newpage
\appendix
\onecolumn

\section{Additional Details}

\subsection{Action Space Design} \label{app.a.1}
\begin{lstlisting}[language=Python]
RetrieveFromVectorstore(
    # user input can be rephrased
    query: str,
    # select encoding model / modality
    collection_name : str,
    # (table_name, column_name) together defines which view to search
    table_name : str,
    column_name : str,
    # allow fine-grained meta filtering
    filter : str = '',
    limit: int = 5
)
\end{lstlisting}

\begin{lstlisting}[language=Python]
RetrieveFromDatabase(
    # user input should be recognizable SQL statement 
    sql: str
)
\end{lstlisting}

\begin{lstlisting}[language=Python]
ViewImage(
    # indeed 'pdf_id', obtained from DB
    paper_id : str,
    page_number : int,
    # 4-tuple of float numbers , if [] return the image of entire page
    bounding_box : List [ float ] = []
)
\end{lstlisting}

\begin{lstlisting}[language=Python]
CalculateExpr(
    # The expression to calculate.
    expr: str
)
\end{lstlisting}

\begin{lstlisting}[language=Python]
GenerateAnswer(
    # The final answer to the user question. Please adhere to the answer format for the current question.
    answer: Any
)
\end{lstlisting}

To ensure stability in our practical implementation, we have made a simplify to \texttt{RetrieveFromVectorstore}. The new action, \texttt{ClassicRetrieve}, is a variant that retains only the core query and limit parameters.

\subsection{Draft \& Tool-Use Fine-Tuning} \label{app.a.2}
\textbf{Synthetic Expert Trajectory Generation.}  To mimic real-world annotation and interaction, the synthesis process is divided into three core components: \ding{182}\underline{Explorer} constructs a natural language question-answer pair with given context. \ding{183}\underline{Tracker} chooses suitable evaluation function and fill in the formatted example file.  \ding{184}\underline{Actor} interacts with the outer environment to collect trajectories. We start by downloading 10,000 artificial intelligence papers from arXiv, using tools like PyMuPDF \citep{PyMuPDF2023} and MinerU \citep{wang2024mineruopensourcesolutionprecise} to extract metadata, text, and non-textual elements. Following this, the Explorer generates high-quality QA pairs from this content. To improve output quality, this stage incorporates techniques such as chain-of-thought \citep{wei2022chain} and hand-written prompts. First, the process begins with data preparation and the generation of intelligent question-answer (QA) pairs. Next, we use Actor and Tracker components to convert these QA pairs into structured training instructions for an agent. The Actor interacts with the paper database, using the ReAct framework \citep{yao2023react} and an LLM to simulate a full interaction trajectory of user instructions and agent responses (which include both thought and action). To handle long contexts, these trajectories are processed using a sliding window, and error information is intentionally preserved to train the model's error-correction capabilities. Finally, the Tracker packages these trajectories and QA pairs into formatted training examples. It automatically selects and configures the appropriate evaluation functions, parameters, and answer formats based on the question type, and even uses a rule-based approach to combine simple examples into more complex, multi-part questions. To establish a one-to-one correspondence between each draft and its expert trajectory, we employ a straightforward procedure: an expert LLM (e.g., Qwen2.5-32B-Instruct) is prompted to summarize the trajectory and then reformat the summary into our predefined draft structure. Importantly, to prevent the draft from prematurely revealing key solution steps, it is designed as a high-level plan that is logically derivable from the initial problem description alone.

\textbf{Supervised Fine-Tuning with Draft.} $\tau = (u_0,d,y_0\cdots,u_i,y_i,\cdots,u_N,y_N)$ is an interaction trajectory in a message list manner, where $u_i$ represents the user's instruction, or the observation from the environment, and $y_i$ denotes the response from the agent, including a thought and an action. The policy $\pi_{\theta}$ is trained via supervised fine-tuning (SFT) with draft, a.k.a. \textbf{draft \& tool-use fine-tuning (DTFT)} to imitate expert trajectories conditioned on history:
\begin{equation}
    \mathcal{L}_{\text{DTFT}} = -\mathbb{E}_{\tau \sim \mathcal{D}} \left[\underbrace{\textcolor{teal}{\text{log} \, \pi_{\theta}(d|u_0)}}_{\text{Draft}}  + \sum_{i=1}^N  \ \underbrace{\text{log} \, \pi_{\theta}(y_i|h_i)}_{\text{Tool-Use}}\right],
\end{equation}
where $h_i = (u_0,d,y_0,\cdots,u_{i-1},y_{i-1},u_i)$ is the interaction history, a.k.a. the prefix of an expert trajectory.

\subsection{RL Training Recipe} \label{app.a.3}
We adopt a subset of AirQA\footnote{Since AirQA-Real is a subset of the AirQA dataset, our evaluation uses the set difference which comprises a total of 693 instances.} \citep{huang2025airqacomprehensiveqadataset} as our training dataset due to its high question diversity and its balanced distribution of difficulty levels. Specifically, to ensure the quality and difficulty of training samples, we apply a special filtering strategy inspired from curriculum learning and hard exapmle mining \citep{shrivastava2016training,portelas2020automatic}: We first evaluate a baseline agent (DTFT-trained Qwen2.5-7B-Instruct) on the 693-instance dataset. From this evaluation, we collect the $M=143$ instances that the agent answered correctly. These instances form the initial seed of our training set. To augment this set, we then randomly sample an additional $400-M=257$ instances from the remaining questions, creating a final training set of 400 instances. To avoid introducing any ordering bias during training, the final constructed training set is randomly shuffled.

We adopt the TRL \citep{vonwerra2022trl} framework for multi-turn RL training, each experiment is trained on 8 Ascend 910B NPUs. We use a cosine learning scheduler from 1e-6 with a batch size of 8. The following is a more detailed hyperparameter report:
\begin{itemize}[itemsep=0pt]
    \item Maximum interaction turn: 10
    \item Temperature: 0.7
    \item Top-p: 0.95
    \item Window size for context: 5
    \item Action format: Markdown
    \item Maximum response length (single-turn): 256
\end{itemize}

\subsection{Additional Experiments} \label{app.a.5}
\subsubsection{Interaction Turns} \label{app.a.4.1}
While our analysis in \hyperref[sec.4.3]{Section 4.3} focused on the average number of interaction turns, this section provides a more granular investigation into the full distribution of these turns. \hyperref[fig.7]{Figure 7} and \hyperref[fig.8]{Figure 8} illustrate the distribution of interaction turns for our different training methods. Specifically, the former shows the results on AirQA-Real, while the latter shows the results on SciDQA. On the AirQA-Real, the SFT-only baseline is prone to inefficient exploration.  This inefficiency, compounded by the limited semantic understanding of smaller models, often results in repetitive, futile search queries that exhaust the maximum interaction limit.  In contrast, the optimal solution path for most tasks involves a concise three-step sequence.  Our DTFT and DFPO models successfully learn this efficient strategy, concentrating their interaction turn distributions around this optimal value. A nuanced comparison between our 3B and 7B models reveals further insights.  While both models are efficient, the 7B model exhibits a slightly higher average number of interaction turns.  This suggests that the larger model leverages its superior comprehension abilities to conduct additional verification steps on more complex questions, rather than prematurely generating a potentially superficial answer.  Similar, though less pronounced, trends are observed on the SciDQA. In conclusion, our methods significantly enhance interaction efficiency without compromising performance.  Nevertheless, improving the agent's deep semantic comprehension of retrieved content remains a key direction for future research.

\begin{figure}[t] \label{fig.7}
    \centering
    \includegraphics[width=0.9\linewidth]{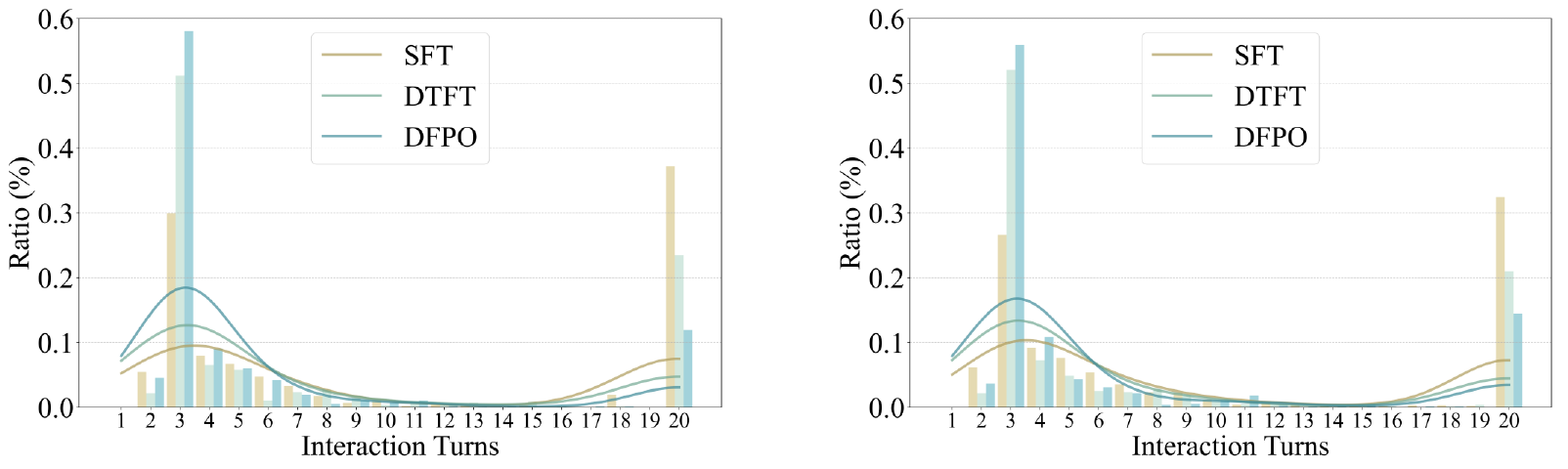}
    \caption{The distribution of interaction turns of the model on AirQA-Real. \textbf{Left:} Qwen2.5-3B-Instruct. \textbf{Right:} Qwen2.5-7B-Instruct.}
\end{figure}
\begin{figure}[t] \label{fig.8}
    \centering
    \includegraphics[width=0.9\linewidth]{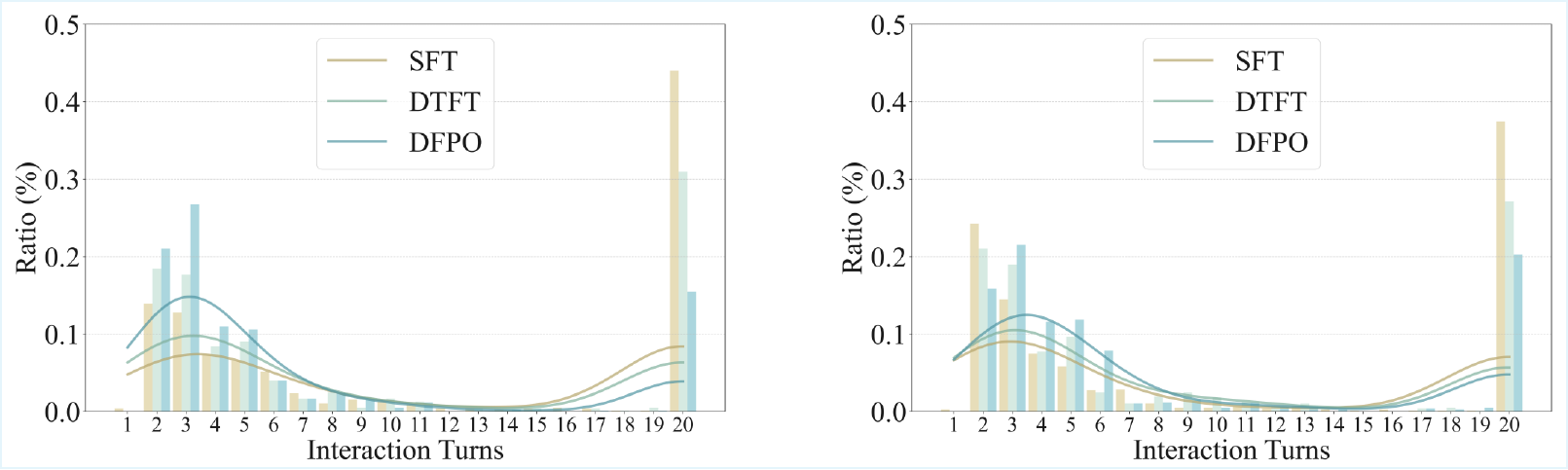}
    \caption{The distribution of interaction turns of the model on SciDQA. \textbf{Left:} Qwen2.5-3B-Instruct. \textbf{Right:} Qwen2.5-7B-Instruct.}
\end{figure}

\subsubsection{Repetition} \label{app.a.4.2}
We next analyze the extent to which PaperGuide exhibits repetitive behavior during training. To quantify this, we first introduce a metric termed the \textbf{Repetition Score}. To define the Repetition Score for a single trajectory, we first group all executed actions. Actions are assigned to the same group if and only if their function names and parameters are identical. Let $|\mathcal{C}|$ be the total number of unique action groups, and let $N_i$ be the number of occurrences of the i-th unique action.The Repetition Score is then formally defined as:
\begin{equation}
    \text{Repetion Score} = -0.1 \cdot \left( \underset{j \in \{1,\dots,|\mathcal{C}|\}}{\text{max}} \{ N_1, N_2, \dots, N_{|\mathcal{C}|} \} -1 \right) .
\end{equation}
As shown in \hyperref[fig.9]{Figure 9}, the agent's tendency to repeat actions steadily decreases during training process. We attribute this improvement to two factors. First, by decoupling the optimization of draft and solution, DFPO incentivizes the generation of higher-quality plans. This leads to more effective exploration during the solution phase and reduces the likelihood of the agent entering unproductive loops. Second, we observe a more nuanced recovery strategy. Even when the agent encounters a difficult state, it learns to vary its actions rather than repeating them verbatim. Our training logs reveal that while the agent often re-uses the same retrieval tool, it intelligently modifies the query parameters. This is particularly effective for vector-based methods like \texttt{RetrieveFromVectorstore}, where minor parameter changes can yield substantially different results and allow the agent to escape repetitive states. Thus, while the final accuracy gains of PaperGuide may be marginal, the key result is that it learns a significantly more efficient and robust retrieval strategy. We argue that this learned procedural improvement represents the core contribution of our work.

\subsubsection{RL Training Setting Ablation} \label{app.a.4.3}
\textbf{Draft-and-Follow can increase performance.} When initialized from our DTFT model, both the M-GRPO and DAPO baselines exhibit significantly improved performance compared to their SFT-based counterparts.   Notably, the performance degradation previously observed with M-GRPO is mitigated (a finding consistent with \hyperref[fig.1]{Figure 1(b)} and \hyperref[fig.5]{Figure 5}). This suggests that the draft guides the agent's exploration toward more effective strategies, preventing it from reinforcing spurious reasoning paths that might coincidentally lead to a correct answer.

\setlength{\intextsep}{0pt}
\begin{wrapfigure}{r}{0.5\textwidth} \label{fig.9}
    \centering
    \includegraphics[width=0.5\textwidth]{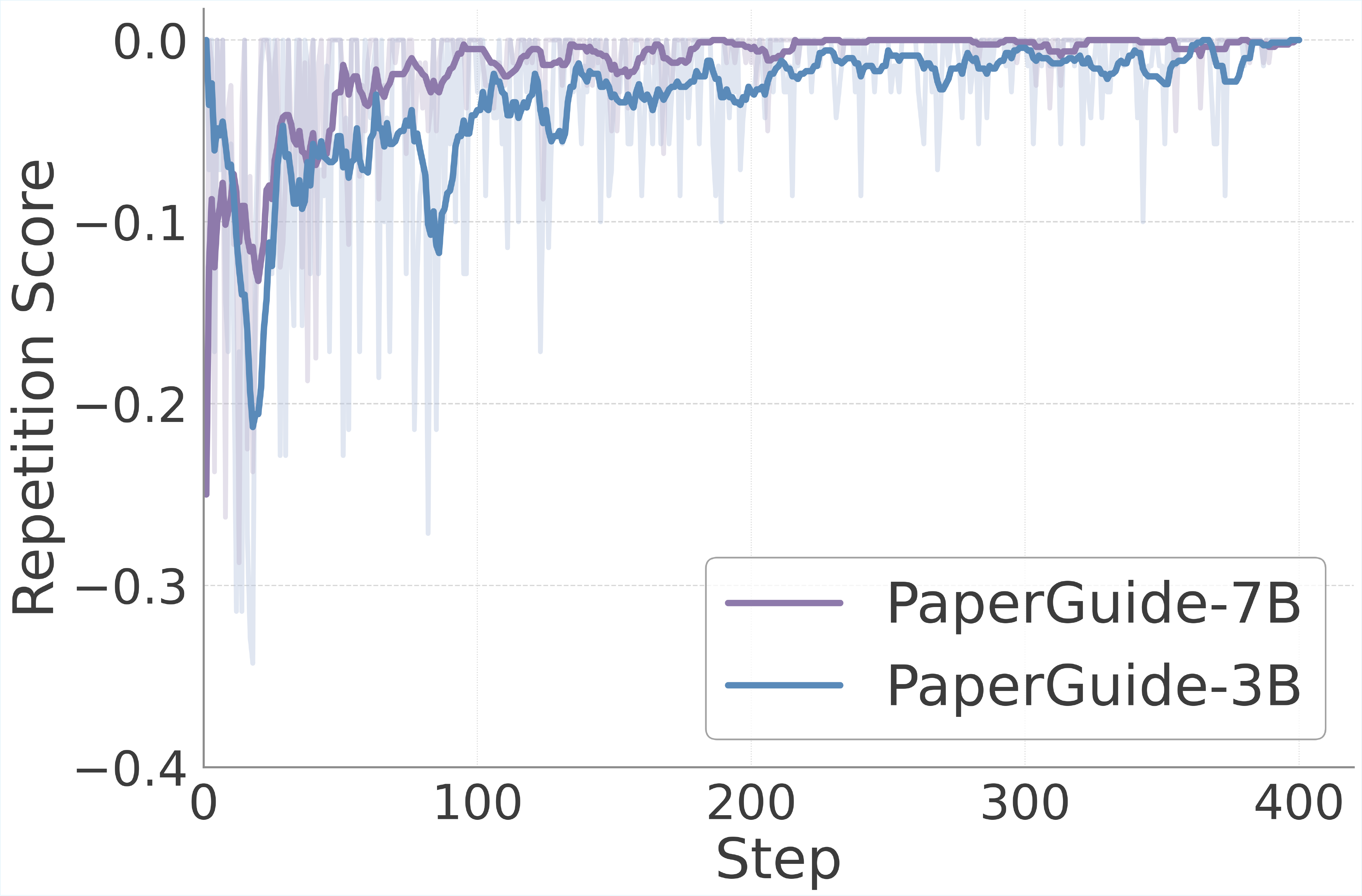} 
    \caption{Repetition Score during DFPO training process.}
    \label{fig:wrap}
\end{wrapfigure}

\textbf{DFPO further enhances the interaction efficiency without sacrificing performance.} While DAPO achieves a marginally higher accuracy on AirQA-Real, our DFPO algorithm demonstrates a significant advantage in interaction efficiency. In SciDQA, DFPO achieved SOTA in both of the above metrics. We hypothesize this trade-off stems from differing exploratory behaviors. DAPO's aggressive exploration, driven by its clip-higher strategy, may occasionally allow it to escape local optima. However, without a mechanism to explicitly optimize the draft itself, DAPO can be misled by suboptimal drafts, leading to inefficient search patterns that require many steps to resolve. In contrast, DFPO is explicitly designed to first optimize the quality of the draft. By ensuring a high-quality plan before execution, DFPO promotes a more focused and efficient exploration strategy from the outset, explaining its superior interaction efficiency.

\textbf{Both negative sample masking and reward router are indispensable.} Our ablation study demonstrates that both the negative sample masking and reward router are crucial for performance, with the latter having a more substantial impact. These results align with our theoretical analysis. The removal of negative sample masking is particularly detrimental. As established in \hyperref[thm.1]{Theorem 3.2}, its absence can cause DFPO to erroneously reinforce drafts that lead to incorrect solutions. This introduces conflicting policy gradients, severely destabilizing the optimization process. The reward router addresses a more subtle issue. While a fine-tuned agent is unlikely to engage in simple reward hacking as shown in \hyperref[app.c]{Appendix C}, the router is crucial for reliably differentiating between the quality of various positive samples within a group. Without it, the optimization may incorrectly assign the highest advantage to a suboptimal answer.

\subsubsection{SFT Training Setting Ablation} \label{app.a.4.4}
\begin{table}[t] \label{tab.3}
 \centering
  \caption{Ablation study on SFT training settings. To ensure fair comparison, all SFT training settings are subsequently trained by DFPO with same configurations.} 
  \label{tab:wrapexample}
  \begin{tabular}{lcccc}
    \toprule
    \multicolumn{1}{c}{\multirow{2}{*}{\textbf{Setting}}}  & \multicolumn{2}{c}{\textbf{AirQA-Real}} & \multicolumn{2}{c}{\textbf{SciDQA}} \\\cmidrule(lr){2-3} \cmidrule(lr){4-5}
    \multicolumn{1}{c}{} & Avg. & I-Avg.  & Avg.  & I-Avg.  \\
    \midrule
    \multicolumn{5}{c}{\textit{Qwen2.5-3B-Instruct}} \\
    \midrule
    \textbf{DTFT} & 23.7 & 20.0 & 43.5 & 36.1 \\
    \textbf{SFT} & 20.4(\textcolor{red}{$\downarrow 13.9\%$}) & 14.8(\textcolor{red}{$\downarrow 26.0\%$}) & 39.5(\textcolor{red}{$\downarrow 9.2\%$}) & 27.7(\textcolor{red}{$\downarrow 23.3\%$}) \\
    \midrule
    \multicolumn{5}{c}{\textit{Qwen2.5-7B-Instruct}} \\
    \midrule
    \textbf{DTFT} & 25.3 & 21.0 & 48.3 & 37.4 \\
    \textbf{SFT} & 20.1(\textcolor{red}{$\downarrow 16.6\%$}) & 15.4(\textcolor{red}{$\downarrow 26.7\%$}) & 43.6(\textcolor{red}{$\downarrow 9.7\%$}) & 29.8(\textcolor{red}{$\downarrow 20.3\%$}) \\
    \bottomrule
  \end{tabular}
\end{table}

We next investigate whether our proposed DTFT stage is a critical component or if a standard SFT approach would suffice. To this end, we compare two experimental conditions for both the 3B and 7B models: (1) DFPO applied to a DTFT-initialized agent, and (2) DFPO applied to an SFT-initialized agent. The results are shown in \hyperref[tab.3]{Table 3}, we observe that agent's performance degrades significantly with a direct SFT. The disproportionately larger performance drop in the I-Avg score, compared to the Avg score, underscores the critical role of the DTFT stage in enhancing the agent's interaction efficiency. We attribute the underperformance of the SFT-based model, relative to the DTFT-initialized version, to the following key factors:

    \textbf{Incorrect draft format.} Although we provided a one-shot, in-context example of a detailed, step-by-step draft, the agent's planning behavior degraded as training progressed. Initially, the generated draft deviated from the structured format, becoming overly coarse-grained. In the most severe cases, the draft degenerated into a mere repetition or slight rephrasing of the input question, offering no actionable plan.

    \textbf{Hallucination.} A second failure mode involves the agent attempting to execute tool calls during the draft generation stage. This action is invalid, as no tools are available during this high-level planning phase. More critically, the agent often proceeds to hallucinate a fictitious tool response. This fabricated output severely pollutes the context, leading to irrecoverable errors in the subsequent solution phase.

    \textbf{Context conflict.} A third failure mode arises during the solution phase, even when the initial draft is correct. At certain forks, the agent can encounter a context conflict where observations from a tool call seem to contradict the high-level plan. The SFT-only agent struggles to arbitrate between these signals, often deviating from the optimal path, it will also cause significant difficulties for the optimization of DFPO. Our DTFT stage is designed to resolve this ambiguity by explicitly training the agent to prioritize and faithfully execute its established draft.

\subsubsection{Knowing-Doing Gap Probe} \label{app.a.4.5}
\begin{mybox}[Instructions for PaperGuide-7B or Qwen2.5-7B-Instruct as UCB agent]
Your task is to act according to the Upper-Confidence-Bound (UCB) algorithm. 
First, briefly write down the UCB algorithm. The UCB value for each arm a at step $t$ is defined as:

\begin{equation*}
   \rm{UCB}(a, t) = \rm{avg\_reward}(a) + c \cdot \sqrt{ \frac{ log(t)}{pulls(a)}} 
\end{equation*}

where:

- avg\_reward(a) is the empirical mean reward of arm a,

- pulls(a) is the number of times arm a has been selected,

- c is a positive exploration coefficient.

~\\
Then compute the UCB value for every button (you may approximate the values). Finally, select your action according to the computed UCB quantities.

You MUST output the UCB values in the following format (one per line):

~\\{}
[UCB](If some buttons do not exist in this task, ignore them.)

blue: <value>

green: <value>

red: <value>

~\\

Then, at the end of your answer, output your final action in the form:

ACTION=<color>

~\\{}
[History]

So far you have tried/seen:

\{history\}

~\\
Step=$t$ What do you do next?
\end{mybox}

\subsubsection{Human Feedback}
We further conducted an expert evaluation of trajectories, generated from PaperGuide-7B and Qwen2.5-7B-Instruct on 20 randomly sampled questions from AirQA-Real and SciDQA. For each question, an experienced NLP researcher was shown two anonymized trajectories (PaperGuide-7B vs. Qwen2.5-7B-Instruct) in random order and asked to rate retrieval quality, tool-use soundness, and overall usefulness on 1–5 scales.
\begin{table}[t]
    \centering
    \caption{Human feedback.}
    \begin{tabular}{l ccc}
    \toprule
       Models & Retrieval Quality & Tool-Use Soundness & Overall Usefulness \\
       \midrule
       PaperGuide-7B & 4.6 & 4.2 & 3.6 \\
       Qwen2.5-7B-Instruct & 3.8 & 3.6 & 2.6 \\
       \bottomrule
    \end{tabular}
    \label{tab:placeholder}
\end{table}

\begin{table}[t] \label{tab.5}
    \centering
    \caption{Self-referential paper QA.}
    \begin{tabularx}{\linewidth}{X|X}
        \hline
        \textbf{Question} & \textbf{Answer Format} \\
        \hline
        What is the main contribution of this paper? & PaperGuide introduces a Draft-and-Follow agent framework plus a new RL algorithm DFPO to bridge the knowing–doing gap and make small LLM agents more efficient in Paper-QA. \\
        \hline
        What is the relationship between DFPO and M-GRPO? & DFPO is essentially M-GRPO plus a draft-bias term, making it jointly optimizes draft and solution. \\
        \hline
        Tell me the name of benchmarks are used in experiments. & The experiments use AirQA-Real and SciDQA. \\
        \hline
        Introduce the training pipeline of PaperGuide. & PaperGuide first performs Draft \& Tool-Use Fine-Tuning on synthetic trajectories, then applies DFPO to jointly optimize draft generation and solution execution.\\
        \hline
        Which three metrics do the authors use to show efficiency of PaperGuide in Figure 4? & Figure 4 evaluates efficiency with Correct Rate, Valid Answers, and Interaction Turns.\\
        \hline
    \end{tabularx}
\end{table}
Compared with Qwen2.5-7B-Instruct, PaperGuide-7B has obvious improvements in the following aspects:

\textbf{High retrieval accuracy.} When performing single-document retrieval, PaperGuide-7B demonstrates a strong understanding of the task specification and achieves a high success rate in locating the correct document based on its unique identifier (\texttt{paper\_id}). In contrast, Qwen2.5-7B-Instruct sometimes fails to retrieve the corresponding paper, primarily because it does not reliably interpret \texttt{paper\_id}, even when this is explicitly stated in the user prompt.
\begin{lstlisting}
# From PaperGuide-7B
[Action]:
RetrieveFromDatabase(sql="SELECT page_content FROM pages JOIN metadata ON pages.ref_paper_id = metadata.paper_id WHERE metadata.paper_id = 'b72f144b-ccbe-500a-862f-dcfc1b4d0f61' AND pages.page_number = 5;")
\end{lstlisting}
\textbf{Clever tool-use strategy.} Comprehensive questions (requiring the system to first locate the target paper and subsequently identify specific content within it) do not explicitly provide the \texttt{paper\_id} . PaperGuide consistently adopts a two-stage retrieval strategy: it first applies ClassicRetrieve for coarse filtering, followed by RetrieveFromDatabase for fine-grained lookup. In contrast, Qwen2.5-7B-Instruct occasionally invokes RetrieveFromDatabase directly. Without having obtained the correct \texttt{paper\_id}, the model lacks the necessary grounding to access the target document and therefore tends to generate hallucinated content.
\begin{lstlisting}
# From Qwen2.5-7B-Instruct
[Action]:
RetrieveFromDatabase(sql="SELECT section_content FROM sections WHERE ref_paper_id = '837eaf5a-a9f7-55ba-99bb-7f470f0624cb' AND section_content LIKE '%Section 4%' AND section_content LIKE '%error bar%';")
[Observation]: 
[Warning]: The SQL execution result is empty, please check the SQL first.
\end{lstlisting}
This expert-review setup also provides a practical protocol for researchers to evaluate PaperGuide (or other agents) by sampling questions, inspecting trajectories, and rating retrieval quality, tool-use soundness, and usefulness.

\subsubsection{Self-Referential Paper QA}
Based on the content of this paper, we formulated five evaluation questions for PaperGuide-7B, and its responses are provided in \hyperref[tab.5]{Table 5}.

\newpage
\section{Theoretical Proofs}

In this section, we provide detailed proofs of our theoretical results.
\subsection{Proof for Proposition 3.1} \label{app.b.1}
\begin{proof}
Based on policy gradient theorem \citep{sutton1999policy}, we can derive the policy gradient of DFPO:
\begin{equation*}
\begin{aligned}
     \nabla_{\theta}\mathcal{J}_{\text{DFPO}}(\theta) 
     = &\mathbb{E}_{q \sim P(q), \{d_i, y_i\}_{i=1}^G \sim (\pi_{\theta}(d|q), \pi_{\theta}(y|q,d))} \left\{ \frac{1}{G} \sum_{i=1}^{G} \left[\frac{1}{|d_i|+|y_i|} \left( \hat{A}^{\text{draft}}_{i} \nabla_{\theta} \, \text{log} \, \pi_{\theta}(d_{i}|q)  + \hat{A}^{\text{solution}}_{i}\nabla_{\theta} \, \text{log} \, \pi_{\theta}(y_{i}|q, d_i) \right) \right] \right\}.
\end{aligned}
\end{equation*}
Since we throw out the constraint of KL divergence, and we utilize rule-based reward as feedback, the advantage should be equal at every moment and is independent of token index.
\begin{equation*}
    \begin{aligned}
        & \nabla_{\theta}\mathcal{J}_{\text{DFPO}}(\theta) \\
        & = \mathbb{E}_{q \sim P(q), \{d_i\}_{i=1}^G \sim \pi_{\theta}(d|q)} \left\{ \frac{1}{G} \sum_{i=1}^{G} \left[\frac{1}{|d_i|+|y_i|} \hat{A}^{\text{draft}}_{i} \nabla_{\theta} \, \text{log} \, \pi_{\theta}(d_{i}|q) \right] \right\} \\ 
        & \quad + \mathbb{E}_{q \sim P(q), \{y_i\}_{i=1}^G \sim \pi_{\theta}(y|q,d)} \left\{ \frac{1}{G} \sum_{i=1}^{G} \left[\frac{1}{|d_i|+|y_i|} \hat{A}^{\text{solution}}_{i} \nabla_{\theta} \, \text{log} \, \pi_{\theta}(y_i|q,d_i) \right] \right\} \\
        & = \mathbb{E}_{q \sim P(q), \{d_i\circ y_i\}_{i=1}^G \sim \pi_{\theta}(d \circ y|q))} \left\{ \frac{1}{G} \sum_{i=1}^{G} \left[\frac{1}{|d_i|+|y_i|} \hat{A}^{\text{solution}}_{i}\nabla_{\theta} \, \text{log} \, \pi_{\theta}(d_i \circ y_{i}|q) \right] \right\} \\
        & \quad + \mathbb{E}_{q \sim P(q), \{d_i\}_{i=1}^G \sim \pi_{\theta}(d|q)} \left\{ \frac{1}{G} \sum_{i=1}^{G} \left[\frac{1}{|d_i|+|y_i|} \left(\hat{A}^{\text{draft}}_{i} - \hat{A}^{\text{solution}}_i \right) \nabla_{\theta} \, \text{log} \, \pi_{\theta}(d_{i}|q) \right] \right\} \\
        & = \nabla_{\theta}\mathcal{J}_{\text{M-GRPO}}(\theta) + \mathbb{E}_{q \sim P(q), \{d_i\}_{i=1}^G \sim \pi_{\theta}(d|q)} \left\{ \frac{1}{G} \sum_{i=1}^{G} \left[\frac{1}{|d_i|+|y_i|} \left(\hat{A}^{\text{draft}}_{i} - \hat{A}^{\text{solution}}_i \right) \nabla_{\theta} \, \text{log} \, \pi_{\theta}(d_{i}|q) \right] \right\} \\
        & = \nabla_{\theta}\mathcal{J}_{\text{M-GRPO}}(\theta) + \mathbb{E}_{q \sim P(q), \{d_i\}_{i=1}^G \sim \pi_{\theta}(d|q)} \left\{ \frac{1}{G} \sum_{i=1}^{G} \left[\Delta \hat{A}_i \nabla_{\theta} \, \text{log} \, \pi_{\theta}(d_{i}|q) \right] \right\}.
    \end{aligned}
\end{equation*}
\end{proof}

Considering a more general setting, we now analyze rollouts generated from old policy—a technique central to algorithms like PPO \citep{schulman2017proximal} and GRPO \citep{shao2024deepseekmath}. Then, the objective function of DFPO, which we call as DFPO-off, becomes the following form:
\begin{equation}
\begin{aligned}
    \mathcal{J}_{\text{DFPO-off}}(\theta) = &\mathbb{E}_{q \sim P(q), \{d_i, y_i\}_{i=1}^G \sim (\pi_{\text{old}}(d|q), \pi_{\text{old}}(y|q,d))}  \left\{ \frac{1}{G} \sum_{i=1}^{G} \left[\frac{1}{|d_i|+|y_i|} \left( \sum_{t=1}^{|d_i|} \rho^{\text{draft}}_{i,t} + \sum_{t=|d_i|+1}^{|d_i| + |y_i|} \rho^{\text{solution}}_{i,t} \right) \right] \right\},
\end{aligned}
\end{equation}
where $\rho$ denotes the clipping operation:
\begin{equation*}
    \left\{
    \begin{aligned}
        & \rho^{\text{draft}}_{i,t} = \text{min} \left\{ \frac{\pi_{\theta}(d_{i,t}|q,d_{i,<t})}{\pi_{\text{old}}(d_{i,t}|q,d_{i,<t})}\hat{A}_{i,t}^{\text{draft}}, \text{clip}\left( \frac{\pi_{\theta}(d_{i,t}|q,d_{i,<t})}{\pi_{\text{old}}(d_{i,t}|q,d_{i,<t})}, 1-\epsilon, 1+\epsilon \right)\hat{A}_{i,t}^{\text{draft}} \right\} \\
        & \rho^{\text{solution}}_{i,t} = \text{min} \left\{ \frac{\pi_{\theta}(y_{i,t}|q,d_i,y_{i,<t})}{\pi_{\text{old}}(y_{i,t}|q,d_i,y_{i,<t})}\hat{A}_{i,t}^{\text{solution}}, \text{clip}\left( \frac{\pi_{\theta}(y_{i,t}|q,d_i,y_{i,<t})}{\pi_{\text{old}}(y_{i,t}|q,d_i,y_{i,<t})}, 1-\epsilon, 1+\epsilon \right)\hat{A}_{i,t}^{\text{solution}} \right\}
    \end{aligned}
    \right.
\end{equation*}
We analyze the signs of the draft and solution advantages, assuming the application of negative sample masking. As mentioned in \hyperref[sec.3.3]{Section 3.3}, we directly set $\hat{A}_{i,t}^{\text{draft}} = \hat{A}_{i}^{\text{draft}}$ and $\hat{A}_{i}^{\text{solution}} = \hat{A}_{i}^{\text{solution}}$.

\textbf{Case 1:} If solution is incorrect, we get $\hat{A}^{\text{draft}}_{i,t} = \hat{A}^{\text{solution}}_{i,t} \leq 0$,
\begin{equation*}
    \begin{aligned}
        & \text{min} \left\{ \frac{\pi_{\theta}(d_{i,t}|q,d_{i,<t})}{\pi_{\text{old}}(d_{i,t}|q,d_{i,<t})}\hat{A}_{i}^{\text{draft}}, \text{clip}\left( \frac{\pi_{\theta}(d_{i,t}|q,d_{i,<t})}{\pi_{\text{old}}(d_{i,t}|q,d_{i,<t})}, 1-\epsilon, 1+\epsilon \right)\hat{A}_{i}^{\text{draft}} \right\} \\
        & = \text{min} \left\{ \frac{\pi_{\theta}(d_{i,t}|q,d_{i,<t})}{\pi_{\text{old}}(d_{i,t}|q,d_{i,<t})}\hat{A}_{i}^{\text{solution}}, \text{clip}\left( \frac{\pi_{\theta}(d_{i,t}|q,d_{i,<t})}{\pi_{\text{old}}(d_{i,t}|q,d_{i,<t})}, 1-\epsilon, 1+\epsilon \right)\hat{A}_{i}^{\text{solution}} \right\}.
    \end{aligned}
\end{equation*}

\textbf{Case 2:} If solution is correct, we get $\hat{A}^{\text{solution}}_{i,t} \geq 0$,

\quad \textbf{Case 2.1:} If draft reward is relative high, we get $\hat{A}^{\text{draft}}_{i,t} \geq 0$,
\begin{equation*}
    \begin{aligned}
        & \text{min} \left\{ \frac{\pi_{\theta}(d_{i,t}|q,d_{i,<t})}{\pi_{\text{old}}(d_{i,t}|q,d_{i,<t})}\hat{A}_{i}^{\text{draft}}, \text{clip}\left( \frac{\pi_{\theta}(d_{i,t}|q,d_{i,<t})}{\pi_{\text{old}}(d_{i,t}|q,d_{i,<t})}, 1-\epsilon, 1+\epsilon \right)\hat{A}_{i}^{\text{draft}} \right\} \\
        & = \text{min} \left\{ \frac{\pi_{\theta}(d_{i,t}|q,d_{i,<t})}{\pi_{\text{old}}(d_{i,t}|q,d_{i,<t})}, \text{clip}\left( \frac{\pi_{\theta}(d_{i,t}|q,d_{i,<t})}{\pi_{\text{old}}(d_{i,t}|q,d_{i,<t})}, 1-\epsilon, 1+\epsilon \right) \right\}\hat{A}_{i}^{\text{draft}} \\
        & =  \text{min} \left\{ \frac{\pi_{\theta}(d_{i,t}|q,d_{i,<t})}{\pi_{\text{old}}(d_{i,t}|q,d_{i,<t})}, \text{clip}\left( \frac{\pi_{\theta}(d_{i,t}|q,d_{i,<t})}{\pi_{\text{old}}(d_{i,t}|q,d_{i,<t})}, 1-\epsilon, 1+\epsilon \right) \right\}\hat{A}_{i}^{\text{solution}} \\
        & \quad + \text{min} \left\{ \frac{\pi_{\theta}(d_{i,t}|q,d_{i,<t})}{\pi_{\text{old}}(d_{i,t}|q,d_{i,<t})}, \text{clip}\left( \frac{\pi_{\theta}(d_{i,t}|q,d_{i,<t})}{\pi_{\text{old}}(d_{i,t}|q,d_{i,<t})}, 1-\epsilon, 1+\epsilon \right) \right\}\left( \hat{A}_{i}^{\text{draft}} -  \hat{A}_{i}^{\text{solution}} \right)
    \end{aligned}
\end{equation*}

\quad \textbf{Case 2.2:} If draft reward is relative low, we get $\hat{A}^{\text{draft}}_{i,t} \leq 0$,
\begin{equation*}
    \begin{aligned}
        & \text{min} \left\{ \frac{\pi_{\theta}(d_{i,t}|q,d_{i,<t})}{\pi_{\text{old}}(d_{i,t}|q,d_{i,<t})}\hat{A}_{i}^{\text{draft}}, \text{clip}\left( \frac{\pi_{\theta}(d_{i,t}|q,d_{i,<t})}{\pi_{\text{old}}(d_{i,t}|q,d_{i,<t})}, 1-\epsilon, 1+\epsilon \right)\hat{A}_{i}^{\text{draft}} \right\} \\
         & = \text{max} \left\{ \frac{\pi_{\theta}(d_{i,t}|q,d_{i,<t})}{\pi_{\text{old}}(d_{i,t}|q,d_{i,<t})}, \text{clip}\left( \frac{\pi_{\theta}(d_{i,t}|q,d_{i,<t})}{\pi_{\text{old}}(d_{i,t}|q,d_{i,<t})}, 1-\epsilon, 1+\epsilon \right) \right\}\hat{A}_{i}^{\text{draft}} \\
         & \leq \text{min} \left\{ \frac{\pi_{\theta}(d_{i,t}|q,d_{i,<t})}{\pi_{\text{old}}(d_{i,t}|q,d_{i,<t})}\hat{A}_{i}^{\text{solution}}, \text{clip}\left( \frac{\pi_{\theta}(d_{i,t}|q,d_{i,<t})}{\pi_{\text{old}}(d_{i,t}|q,d_{i,<t})}, 1-\epsilon, 1+\epsilon \right)\hat{A}_{i}^{\text{solution}} \right\}
    \end{aligned}
\end{equation*}

Then, we derive the surrogate objective function as follow:
\begin{equation*}
    \begin{aligned}
    \mathcal{J}_{\text{surrogate}}(\theta) = &\mathbb{E}_{q \sim P(q), \{d_i, y_i\}_{i=1}^G \sim (\pi_{\text{old}}(d|q), \pi_{\text{old}}(y|q,d))} \\& \frac{1}{G}\left\{ \sum_{i=1}^{G} \left[\frac{1}{|d_i|+|y_i|} \sum_{t=1}^{|d_i|+|y_i|} \rho^{\text{draft-solution}}_{i,t} \right] + \sum_{j=1}^{G'} \left[\frac{1}{|d_j|+|y_j|} \sum_{t=1}^{|d_j|} R_{j,t} \left( \hat{A}_{j}^{\text{draft}} -  \hat{A}_{j}^{\text{solution}} \right) \right] \right\},
    \end{aligned}
\end{equation*}
where $G'$ is the case number of case 2.1,
\begin{equation*}
    \begin{aligned}
        \sum_{t=1}^{|d_i|+|y_i|} \rho^{\text{draft-solution}}_{i,t} & = \sum_{t=1}^{|d_i|} \text{min} \left\{ \frac{\pi_{\theta}(d_{i,t}|q,d_{i,<t})}{\pi_{\text{old}}(d_{i,t}|q,d_{i,<t})}\hat{A}_{i}^{\text{solution}}, \text{clip}\left( \frac{\pi_{\theta}(d_{i,t}|q,d_{i,<t})}{\pi_{\text{old}}(d_{i,t}|q,d_{i,<t})}, 1-\epsilon, 1+\epsilon \right)\hat{A}_{i}^{\text{solution}} \right\} \\
        & \quad + \sum_{t=1}^{|d_i|} \text{min} \left\{ \frac{\pi_{\theta}(y_{i,t}|q,d_i,y_{i,<t})}{\pi_{\text{old}}(y_{i,t}|q,d_i,y_{i,<t})}\hat{A}_{i,t}^{\text{solution}}, \text{clip}\left( \frac{\pi_{\theta}(y_{i,t}|q,d_i,y_{i,<t})}{\pi_{\text{old}}(y_{i,t}|q,d_i,y_{i,<t})}, 1-\epsilon, 1+\epsilon \right)\hat{A}_{i,t}^{\text{solution}} \right\},
    \end{aligned}
\end{equation*}
and
\begin{equation*}
    R_{j,t} = \text{min} \left\{ \frac{\pi_{\theta}(d_{j,t}|q,d_{j,<t})}{\pi_{\text{old}}(d_{j,t}|q,d_{j,<t})}, \text{clip}\left( \frac{\pi_{\theta}(d_{j,t}|q,d_{j,<t})}{\pi_{\text{old}}(d_{j,t}|q,d_{j,<t})}, 1-\epsilon, 1+\epsilon \right) \right\}.
\end{equation*}
Through algebraic manipulation, the expression simplifies to:
\begin{equation}
    \begin{aligned}
        \mathcal{J}_{\text{DFPO-off}}(\theta) & \leq \mathcal{J}_{\text{surrogate}}(\theta) \\
        & = \mathcal{J}_{\text{M-GRPO-off}}(\theta) + \mathbb{E}\left\{ \sum_{j=1}^{G'} \left[\frac{1}{|d_j|+|y_j|} \sum_{t=1}^{|d_j|} R_{j,t} \left( \hat{A}_{j}^{\text{draft}} -  \hat{A}_{j}^{\text{solution}} \right) \right] \right\}.
    \end{aligned}
\end{equation}

Maximizing the DFPO-off objective, therefore, optimizes a lower bound on the surrogate objective. We've also found that the relationship between this surrogate and the M-GRPO-off policy gradient is analogous to what we established in \hyperref[pro.1]{Proposition 3.1}. This is a significant result, as it shows the theoretical guarantees of \hyperref[pro.1]{Proposition 3.1} extend to the near off-policy setting.

\subsection{Proof for Theorem 3.2} \label{app.b.2}
To prove \hyperref[thm.1]{Theorem 3.2}, we first establish a definition and two essential lemmas before proving the main theorem.

\begin{definition}[Index Partitions]
We partition the set of indices $\{1, \dots, n\}$ into two disjoint sets:
\begin{itemize}
    \item $I_1 = \{i \mid r^{\text{solution}}_i = 1\}$, with cardinality $n_1 = |I_1|$.
    \item $I_0 = \{i \mid r^{\text{solution}}_i = 0\}$, with cardinality $n_0 = |I_0|$.
\end{itemize}
We assume $n_1 > 0$ and $n_0 > 0$ to ensure non-trivial cases.
\end{definition}

\begin{lemma}[Coefficient of Variation Inequality]
\label{lemma:cv}
The coefficients of variation for $\boldsymbol{r}^{\text{solution}}$ and $\boldsymbol{r}^{\text{draft}}$ satisfy the inequality:
\begin{equation}
\begin{aligned}
    \rm{CV}(\mathit{\mathbf{r}}^{\rm{draft}}) \ge \rm{CV}(\mathbf{r}^{\rm{solution}}).
\end{aligned}
\end{equation}
\end{lemma}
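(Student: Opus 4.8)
The plan is to compute both coefficients of variation in closed form and then reduce the claimed inequality to a single application of the Cauchy--Schwarz inequality. Throughout, write $\mathrm{CV}(\boldsymbol{x}) = \mathrm{std}(\boldsymbol{x})/\mathrm{mean}(\boldsymbol{x})$ and let $\rho_i \ge 0$ be the non-negative dense value with $r^{\text{draft}}_i = r^{\text{solution}}_i\,\rho_i$; I will also invoke the mild non-degeneracy conditions $n_1 > 0$ and $S := \sum_{i \in I_1}\rho_i > 0$, which guarantee that both means — and hence both CVs — are well defined and that both standard deviations are strictly positive (using $n_0 > 0$). Since the ratio $\sqrt{n/(n-1)}$ between the population and sample standard deviations is identical for $\boldsymbol{r}^{\text{draft}}$ and $\boldsymbol{r}^{\text{solution}}$, it cancels in the comparison, so I will simply use the population convention; and since both CVs are non-negative it suffices to prove the inequality between their squares.

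First I would exploit the binary structure of the solution rewards: because $r^{\text{solution}}_i \in \{0,1\}$ we have $(r^{\text{solution}}_i)^2 = r^{\text{solution}}_i$, so the first and second moments of $\boldsymbol{r}^{\text{solution}}$ both equal $n_1/n$, giving $\mathrm{Var}(\boldsymbol{r}^{\text{solution}}) = n_1/n - (n_1/n)^2 = n_0 n_1/n^2$ and therefore $\mathrm{CV}(\boldsymbol{r}^{\text{solution}})^2 = n_0/n_1$. Next, observing that $r^{\text{draft}}_i = 0$ for $i \in I_0$ and $r^{\text{draft}}_i = \rho_i$ for $i \in I_1$, I would set $S = \sum_{i \in I_1}\rho_i$ and $Q = \sum_{i \in I_1}\rho_i^2$, so that $\mathrm{mean}(\boldsymbol{r}^{\text{draft}}) = S/n$, the second moment is $Q/n$, and hence $\mathrm{CV}(\boldsymbol{r}^{\text{draft}})^2 = (Q/n - S^2/n^2)/(S^2/n^2) = nQ/S^2 - 1$.

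With these two formulas in hand, the inequality $\mathrm{CV}(\boldsymbol{r}^{\text{draft}})^2 \ge \mathrm{CV}(\boldsymbol{r}^{\text{solution}})^2$ is equivalent to $nQ/S^2 - 1 \ge n_0/n_1$, i.e.\ $nQ/S^2 \ge (n_0+n_1)/n_1 = n/n_1$, i.e.\ $n_1 Q \ge S^2$. This is precisely the Cauchy--Schwarz inequality (equivalently QM--AM) applied to the all-ones vector $\mathbf{1}\in\mathbb{R}^{n_1}$ and $(\rho_i)_{i\in I_1}$:
\[
  S^2 = \Bigl(\sum_{i\in I_1} 1\cdot\rho_i\Bigr)^2 \le \Bigl(\sum_{i\in I_1} 1\Bigr)\Bigl(\sum_{i\in I_1}\rho_i^2\Bigr) = n_1 Q,
\]
which completes the proof (equality holds exactly when all $\rho_i$, $i\in I_1$, are equal, e.g.\ when $n_1=1$). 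I do not anticipate any genuine obstacle here: the argument is elementary, and the only points requiring care are the non-degeneracy assumptions ensuring the CVs are defined and the observation that the binary structure of $\boldsymbol{r}^{\text{solution}}$ collapses its second moment onto its mean, which is what makes $\mathrm{CV}(\boldsymbol{r}^{\text{solution}})^2$ equal to the clean quantity $n_0/n_1$.
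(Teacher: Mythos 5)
Your proposal is correct and follows essentially the same route as the paper: both compute $\mathrm{CV}(\boldsymbol{r}^{\text{solution}})^2 = n_0/n_1$ from the binary structure, express $\mathrm{CV}(\boldsymbol{r}^{\text{draft}})^2$ via the moments of $(\rho_i)_{i\in I_1}$, and reduce the claim to a single non-negativity fact. Your final step $S^2 \le n_1 Q$ (Cauchy--Schwarz) is exactly the paper's closing observation that the within-$I_1$ variance $\sigma_{d1}^2 \ge 0$, just written in terms of raw moments rather than centered ones.
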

\begin{proof}
It is sufficient to prove the inequality for the squared values, $\text{CV}(\boldsymbol{r}^{\text{draft}})^2 \ge \text{CV}(\boldsymbol{r}^{\text{solution}})^2$.
For $\boldsymbol{r}^{\text{solution}}$, we have $\bar{r}^{\text{solution}} = n_1/n$ and $s_{r^{\text{solution}}}^2 = n_1 n_0 / n^2$. Thus,
$$ \text{CV}(\boldsymbol{r}^{\text{solution}})^2 = \frac{s_{r^{\text{solution}}}^2}{(\bar{r}^{\text{solution}})^2} = \frac{n_1 n_0 / n^2}{(n_1/n)^2} = \frac{n_0}{n_1}. $$
For $\boldsymbol{r}^{\text{draft}}$, let:
\begin{equation*}
    \mu_{d1} = \frac{1}{n_1}\sum_{j \in I_1} d_j \quad \text{and} \quad \sigma^2_{d1} = \frac{1}{n_1}\sum_{j \in I_1} (d_j - \mu_{d1})^2. 
\end{equation*}

It can be shown that $\bar{r}^{\text{draft}} = \frac{n_1}{n}\mu_{d1}$ and $s_{r^{\text{draft}}}^2 = \frac{n_1}{n}\sigma_{d1}^2 + \frac{n_1 n_0}{n^2}\mu_{d1}^2$. This gives:
$$ \text{CV}(\boldsymbol{r}^{\text{draft}})^2 = \frac{s_{r^{\text{draft}}}^2}{(\bar{r}^{\text{draft}})^2} = \frac{\frac{n_1}{n}\sigma_{d1}^2 + \frac{n_1 n_0}{n^2}\mu_{d1}^2}{(\frac{n_1}{n}\mu_{d1})^2} = \frac{n}{n_1}\frac{\sigma_{d1}^2}{\mu_{d1}^2} + \frac{n_0}{n_1}. $$
The inequality $\text{CV}(\boldsymbol{r}^{\text{draft}})^2 \ge \text{CV}(\boldsymbol{r}^{\text{solution}})^2$ becomes 
$$\frac{n}{n_1}\frac{\sigma_{d1}^2}{\mu_{d1}^2} + \frac{n_0}{n_1} \ge \frac{n_0}{n_1},$$ 
this holds true since variance $\sigma^2_{d1} \ge 0$.
\end{proof}

\begin{lemma}[Variance Bound for Bounded Data, \cite{bhatia2000better}]
\label{lemma:bhatia_davis}
Let $\boldsymbol{r}^{\text{draft}}$ be a vector with all values in $[0, r^{\text{draft}}_{i_{\text{max}}}]$. Its population variance $s_{{r^{\text{draft}}}}^2$ is bounded by $s_{{r^{\text{draft}}}}^2 \le \bar{r}^{\text{draft}} (r^{\text{draft}}_{i_{\text{max}}} - \bar{r}^{\text{draft}})$.
\end{lemma}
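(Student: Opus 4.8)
The plan is to derive the bound from the elementary ``bracketing'' inequality that underlies the Bhatia--Davis estimate, rather than invoking \cite{bhatia2000better} as a black box, so the appendix stays self-contained. The starting point is the one-sided support constraint: every component satisfies $0 \le r^{\text{draft}}_i \le r^{\text{draft}}_{i_{\max}}$, so both factors of the product $(r^{\text{draft}}_{i_{\max}} - r^{\text{draft}}_i)(r^{\text{draft}}_i - 0)$ are nonnegative, and therefore $(r^{\text{draft}}_{i_{\max}} - r^{\text{draft}}_i)\,r^{\text{draft}}_i \ge 0$ holds termwise for every $i \in \{1,\dots,n\}$.

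Next I would average this family of scalar inequalities over the $n$ indices, which preserves nonnegativity, giving $r^{\text{draft}}_{i_{\max}}\,\bar{r}^{\text{draft}} - \tfrac{1}{n}\sum_{i=1}^n (r^{\text{draft}}_i)^2 \ge 0$; equivalently, the raw second moment is controlled by $\tfrac{1}{n}\sum_{i=1}^n (r^{\text{draft}}_i)^2 \le r^{\text{draft}}_{i_{\max}}\,\bar{r}^{\text{draft}}$. Then I would substitute this into the population-variance identity $s_{r^{\text{draft}}}^2 = \tfrac{1}{n}\sum_{i=1}^n (r^{\text{draft}}_i)^2 - (\bar{r}^{\text{draft}})^2$ to obtain $s_{r^{\text{draft}}}^2 \le r^{\text{draft}}_{i_{\max}}\,\bar{r}^{\text{draft}} - (\bar{r}^{\text{draft}})^2 = \bar{r}^{\text{draft}}(r^{\text{draft}}_{i_{\max}} - \bar{r}^{\text{draft}})$, which is exactly the claimed bound.

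Finally, I would record (since it is what makes the lemma useful in the proof of Theorem~\ref{thm.1}) that equality holds if and only if each $r^{\text{draft}}_i$ equals $0$ or $r^{\text{draft}}_{i_{\max}}$, i.e.\ the data is supported on the two endpoints, giving the tightness of the estimate. There is no genuine obstacle here — the argument is two lines — so the only thing to be deliberate about is the variance convention: since Lemma~\ref{lemma:cv} already uses the \emph{population} variance $s^2 = \tfrac{1}{n}\sum_i r_i^2 - \bar r^2$ (and writes $\bar{r}^{\text{draft}} = \tfrac{n_1}{n}\mu_{d1}$ in terms of the $I_1$-restricted mean), I would keep that same convention throughout so this bound plugs directly into the coefficient-of-variation chain and the subsequent case split on $r^{\text{solution}}_i \in \{0,1\}$ that completes the proof of Theorem~\ref{thm.1}.
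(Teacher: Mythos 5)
Your proposal is correct and follows essentially the same route as the paper's proof: both start from the termwise nonnegativity of $(r^{\text{draft}}_{i_{\max}} - r^{\text{draft}}_i)\,r^{\text{draft}}_i$, average over the $n$ indices, and substitute into the population-variance identity $s_{r^{\text{draft}}}^2 = \tfrac{1}{n}\sum_i (r^{\text{draft}}_i)^2 - (\bar{r}^{\text{draft}})^2$ to obtain the Bhatia--Davis bound. Your added remark on the equality condition (data supported on the two endpoints) is a harmless and correct refinement not present in the paper.
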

\begin{proof}
From assumption, it is evident that both the term $(r^{\text{draft}}_i - 0)$ and the term $(r^{\text{draft}}_{i_{\max}} - r^{\text{draft}}_i)$ are non-negative. Therefore, their product must also be non-negative:
\begin{equation*}
    (r^{\text{draft}}_{i_{\max}} - r^{\text{draft}}_i)(r^{\text{draft}}_i - 0) \ge 0.
\end{equation*}
This inequality holds for every element $i=1, \dots, n$. We can therefore sum this expression over all $n$ data points, and the resulting sum will also be non-negative:
$$ \sum_{i=1}^{n} \left( r^{\text{draft}}_{i_{\max}} \cdot r^{\text{draft}}_i - (r^{\text{draft}}_i)^2 \right) \ge 0. $$
By the linearity of summation, we can distribute the sum:
$$ r^{\text{draft}}_{i_{\max}} \sum_{i=1}^{n} r^{\text{draft}}_i - \sum_{i=1}^{n} (r^{\text{draft}}_i)^2 \ge 0. $$
To introduce the mean ($\bar{r}^{\text{draft}}$) and variance ($s_{r^{\text{draft}}}^2$), we divide the entire inequality by $n$:
$$ r^{\text{draft}}_{i_{\max}} \left(\frac{1}{n}\sum_{i=1}^{n} r^{\text{draft}}_i\right) - \left(\frac{1}{n}\sum_{i=1}^{n} (r^{\text{draft}}_i)^2\right) \ge 0. $$
We know that the population variance is defined as $s_{r^{\text{draft}}}^2 = \left(\frac{1}{n}\sum_{i=1}^{n} (r^{\text{draft}}_i)^2\right) - (\bar{r}^{\text{draft}})^2$. This can be rearranged to $\left(\frac{1}{n}\sum_{i=1}^{n} (r^{\text{draft}}_i)^2\right) = s_{r^{\text{draft}}}^2 + (\bar{r}^{\text{draft}})^2$. Substituting this into our inequality gives:
$$ r^{\text{draft}}_{i_{\max}} \cdot \bar{r}^{\text{draft}} - (s_{r^{\text{draft}}}^2 + (\bar{r}^{\text{draft}})^2) \ge 0. $$
Finally, we rearrange the terms to isolate the variance $s_{r^{\text{draft}}}^2$:
$$ s_{r^{\text{draft}}}^2 \le r^{\text{draft}}_{i_{\max}} \cdot \bar{r}^{\text{draft}} - (\bar{r}^{\text{draft}})^2. $$
Factoring the right-hand side gives the desired result:
$$ s_{r^{\text{draft}}}^2 \le \bar{r}^{\text{draft}}(r^{\text{draft}}_{i_{\max}} - \bar{r}^{\text{draft}}). $$
\end{proof}

Next, we formally prove the \hyperref[thm.1]{Theorem 3.2}. To make our proofs briefly, with a slight abuse of symbols, all the advantages that appear in the following proofs are the original advantage that is before evenly distributed each token.

\begin{proof}
\textbf{Part 1:}
For any $i \in I_0$, $r_i^{\text{solution}} = r_i^{\text{draft}} = 0$. The advantages are 

\begin{equation*}
    \hat{A}_i^{\text{solution}} = -\frac{1}{\text{CV}(\boldsymbol{r}^{\text{solution}})} \quad \text{and} \quad \hat{A}_i^{\text{draft}} = -\frac{1}{\text{CV}(\boldsymbol{r}^{\text{draft}})}.
\end{equation*}
The inequality $\hat{A}_i^{\text{solution}} \le \hat{A}_i^{\text{draft}}$ is thus equivalent to $\text{CV}(\boldsymbol{r}^{\text{draft}}) \ge \text{CV}(\boldsymbol{r}^{\text{solution}}) $. This is proven by \hyperref[lemma:cv]{Lemma B.2}.

\textbf{Part 2:}
For any $i \in I_1$, there is
\begin{equation*}
    r_{i}^{\text{solution}} = \bar{r}^{\text{solution}}_{I_1} \geq \frac{|I_1|}{|I_0|+|I_1|} \cdot \bar{r}^{\text{solution}}_{I_1} = \bar{r}^{\text{solution}},
\end{equation*}
which indicates that $\hat{A}_{i}^{\text{solution}} \geq 0$. The inequality $\hat{A}_{i}^{\text{draft}} \leq \hat{A}_{i}^{\text{solution}}$ is thus equivalent to $r_{i}^{\text{draft}} \leq \bar{r}^{\text{draft}}$, assumed below.

\textbf{Part 3:}
For $i_{\max} \in I_1$, $\hat{A}_{i_{\max}}^{\text{solution}} = (1 - \bar{r}^{\text{solution}})(s_{r^{\text{solution}}}) = \sqrt{n_0/n_1}$. We need to prove $\sqrt{n_0/n_1} \le \hat{A}_{i_{\max}}^{\text{draft}}$. Squaring both sides yields the equivalent inequality:
$$ \frac{n_0}{n_1} \le (\hat{A}_{i_{\max}}^{\text{draft}})^2 = \frac{(r^{\text{draft}}_{i_{\max}} - \bar{r}^{\text{draft}})^2}{s_{r^{\text{draft}}}^2}. $$
We apply \hyperref[lemma:bhatia_davis]{Lemma B.3}, which gives $s_{r^{\text{draft}}}^2 \le \bar{r}^{\text{draft}}(r^{\text{draft}}_{i_{\max}} - \bar{r}^{\text{draft}})$. This implies a lower bound on the squared advantage:
$$ (\hat{A}_{i_{\max}}^{\text{draft}})^2 \ge \frac{(r^{\text{draft}}_{i_{\max}} - \bar{r}^{\text{draft}})^2}{\bar{r}^{\text{draft}}(r^{\text{draft}}_{i_{\max}} - \bar{r}^{\text{draft}})} = \frac{r^{\text{draft}}_{i_{\max}}}{\bar{r}^{\text{draft}}} - 1. $$
The theorem holds if this lower bound is greater than or equal to $n_0/n_1$:
$$ \frac{r^{\text{draft}}_{i_{\max}}}{\bar{r}^{\text{draft}}} - 1 \ge \frac{n_0}{n_1} \iff \frac{r^{\text{draft}}_{i_{\max}}}{\bar{r}^{\text{draft}}} \ge \frac{n_0+n_1}{n_1} = \frac{n}{n_1}. $$
Substituting $\bar{r}^{\text{draft}} = \frac{1}{n}\sum_{j \in I_1} r^{\text{draft}}_j$, this simplifies to:
$$ r^{\text{draft}}_{i_{\max}} \ge \frac{1}{n_1}\sum_{j \in I_1} r^{\text{draft}}_j. $$
This final inequality is true, as the maximum of a set is always greater than or equal to its mean.
\end{proof}

\section{Reward Router} \label{app.c}
Existing works in Agentic RL \citep{chen2025learning,zheng2025deepresearcher,jin2025search} have often utilized dense outcome rewards based on lexical overlap metrics like the F1-score. This approach is favored for its computational simplicity and independence from manual annotation. However, when applied to the domain of scientific papers, our analysis indicates that this reliance on surface-level text matching exhibits significant limitations. This shortcoming is illustrated by the following example:

\begin{mybox}[An Example for F1 Score]
\textbf{Golden Answer:} The proposed focus on taking a significant step forward in learning high-performance generalist agents.
    
    ~\\
    
    \textbf{Predicted Answer 1:} This paper concentrates on taking a important step forward in learning outperformed generalist agents.
    ~\\
    \textbf{Predicted Answer 2:} The proposed focus on taking a significant step forward in learning high-performance expert agents.
    ~\\
    \textbf{Predicted Answer 3:} In learning high-performance generalist agents In learning high-performance generalist agents In learning high-performance generalist agents.
\end{mybox}

From a semantic standpoint, Predicted Answer 1 is clearly the most appropriate response. However, calculating the F1-score for each predicted answer leads to a different conclusion:
\begin{equation*}
    \text{F1-Score} = \frac{2*\text{IN}}{\text{PN}+\text{GN}}.
\end{equation*}
Here, PN denotes the number of tokens in the predicted answer, GN denotes the number of tokens in the golden answer, and IN represents the number of overlapping tokens between the two. This yields F1 scores of $0.643$, $0.929$, and $0.357$ for the three predicted answers, respectively. This example illustrates the significant drawbacks of using the F1 score directly as a reward signal:

\begin{itemize}
    \item \textbf{Failure to Distinguish Between High-Quality and Low-Quality Answers:} Predicted Answer 2, which is factually incorrect and a result of agent hallucination, nevertheless receives a very high reward. Conversely, Predicted Answer 1, a semantically equivalent paraphrase of the golden answer, is assigned a comparatively low score.

    \item \textbf{Susceptibility to Reward Hacking:} Predicted Answer 3 consists of nonsensical repetitions, yet it still manages to obtain a non-trivial score. This demonstrates that an agent can easily exploit the F1 metric to gain rewards for meaningless outputs.
\end{itemize}

To account for diverse question types, we introduce a reward router that provides a more accurate evaluation of the agent’s output quality, thus yielding higher-fidelity gradient signals. Reward router selects an appropriate reward function for each problem type from a predefined suite of 17 evaluation functions based on \cite{cao2025neusym}. In cases where the chosen function yields a continuous score, a threshold is applied to binarize the output for use as a final reward signal.

\begin{table}[htbp]
\centering
\caption{The checklist of the $17$ used evaluation functions, including their categories, names, and descriptions.}
\resizebox{\textwidth}{!}{
\begin{tabular}{c|c|m{10.5em}|m{19.5em}}
    \hline
    \textbf{Eval Type} & \textbf{Sub-Type} & \textbf{Function} & \textbf{Description} \\
    \hline
    \multirow{17}[0]{*}{\textit{objective}} & \multirow{10}[0]{*}{match} & eval\_bool\_exact\_match & Evaluate the output against the answer using exact boolean match. \\
    \cline{3-4}
    ~ & ~ & eval\_float\_exact\_match & Evaluate the output against the answer using exact float match with variable precision or tolerance. \\
    \cline{3-4}
    ~ & ~ & eval\_int\_exact\_match & Evaluate the output against the answer using exact integer match. \\
    \cline{3-4}
    ~ & ~ & eval\_string\_exact\_match & Evaluate the output against the answer using exact string match. \\
    \cline{3-4}
    ~ & ~ & eval\_structured\_object
        \_exact\_match & Evaluate the output against the answer recursively by parsing them both as Python-style lists or dictionaries. \\
    \cline{2-4}
    ~ & \multirow{5}[0]{*}{set} & eval\_element\_included & Evaluate whether the output is included in the answer list. \\
    \cline{3-4}
    ~ & ~ & eval\_element\_list\_included & Evaluate whether each element in the output list is included in the answer list. \\
    \cline{3-4}
    ~ & ~ & eval\_element\_list\_overlap & Evaluate whether the output list overlaps with the answer list. \\
    \cline{2-4}
    ~ & retrieval & eval\_paper\_relevance\_with
        \_reference\_answer & Evaluate whether the retrieved paper is the same as the reference answer. \\
    \hline \hline
    \multirow{9}[0]{*}{\textit{subjective}} & \multirow{5}[0]{*}{semantic} & eval\_reference\_answer
        \_with\_llm & Evaluate the output against the reference answer using LLMs. \\
    \cline{3-4}
    ~ & ~ & eval\_scoring\_points\_with
        \_llm & Evaluate whether the scoring points are all mentioned in the output using LLMs. \\
    \cline{3-4}
    ~ & ~ & eval\_partial\_scoring\_points
        \_with\_llm & Evaluate whether the scoring points are partially mentioned in the output using LLMs. \\ 
    \cline{2-4}
    ~ & formula & eval\_complex\_math\_form
        ula\_with\_llm & Evaluate the mathematical equivalence between the output and the answer formatted in Latex using LLMs. \\
    \hline
    \multicolumn{2}{c|}{\multirow{8}{*}{\it logical}} & eval\_conjunction & Evaluate the conjunction of multiple evaluation functions. The output passes the evaluation if and only if all the elements in the output pass the corresponding sub-evaluations. \\
    \cline{3-4}
     \multicolumn{2}{c|}{} & eval\_disjunction & Evaluate the disjunction of multiple evaluation functions. The output passes the evaluation if and only if at least one of the element in the output passes the corresponding sub-evaluation. \\
    \cline{3-4}
     \multicolumn{2}{c|}{}   & eval\_negation & Evaluate the negation of an evaluation function. The output passes the evaluation if and only if it doesn't pass the original evaluation function. \\
    \hline\hline
    \multicolumn{2}{c|}{\multirow{1}{*}{\it others}} & eval\_scidqa & Evaluate examples in dataset SciDQA with the encapsulated original LLM-based function. \\
    
    \hline
\end{tabular}
}
\end{table}

\section{Trajectory Entropy \& Negative Sample Masking} \label{app.d}
Our implementation of PaperGuide incorporates two key techniques: Trajectory entropy \citep{agarwal2025unreasonable} as a quality metric and negative sample masking to discourage the generation of low-quality drafts. This section presents a qualitative analysis to investigate the impact of these components on agent behavior.

\subsection{Policy Update} \label{app.c.1}
\begin{proposition}[Stochastic Parameter Update for Tabular Softmax Policy] \label{pro.2}
Let $\pi_\theta(a|s)$ be a tabular softmax policy parameterized by $\theta_{s,a} \in \mathbb{R}$ for each state-action pair $(s,a)$. The policy is defined as:
$$
\pi_\theta(a|s) = \frac{\exp(\theta_{s,a})}{\sum_{b \in \mathcal{A}(s)} \exp(\theta_{s,b})}
$$
Consider a stochastic gradient ascent update at step $k$ on the policy gradient objective, using a learning rate $\alpha > 0$. The update is based on a single transition experience $(s_t, a_t)$ and an associated advantage estimate $A^k(s_t, a_t)$. The resulting change in the parameters, $\Delta \theta_{s_t, a'} := \theta^{k+1}_{s_t, a'} - \theta^k_{s_t, a'}$, for any action $a' \in \mathcal{A}(s_t)$ is given by:
\begin{equation}
\Delta \theta_{s_t, a'} = \left\{
\begin{aligned}
    & \alpha \left(1 - \pi_\theta^k(a_t|s_t)\right) A^k(s_t, a_t), & \text{if } a' = a_t \\
    & -\alpha \, \pi_\theta^k(a'|s_t) A^k(s_t, a_t), & \text{if } a' \neq a_t
\end{aligned}
\right.
\end{equation}
\end{proposition}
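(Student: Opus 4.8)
The plan is to reduce the statement to the elementary computation of the gradient of the log-policy with respect to the softmax logits, and then substitute it into the definition of a one-step stochastic gradient-ascent update.

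First I would make the objective and the update rule explicit. The policy-gradient objective is the expected return $J(\theta)=\mathbb{E}\big[\sum_t r_t\big]$, and by the policy gradient theorem its gradient admits the unbiased single-sample (REINFORCE-style) estimator $\hat{g}^k = A^k(s_t,a_t)\,\nabla_\theta \log \pi_{\theta^k}(a_t\mid s_t)$ associated with the observed transition $(s_t,a_t)$. One step of stochastic gradient ascent with learning rate $\alpha>0$ is therefore $\theta^{k+1} = \theta^k + \alpha\,\hat{g}^k$, so for every index $(s,a')$ we have $\Delta\theta_{s,a'} = \alpha\, A^k(s_t,a_t)\, \partial_{\theta_{s,a'}} \log \pi_{\theta^k}(a_t\mid s_t)$.

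Next I would compute the Jacobian of the log-policy. Writing $\log \pi_\theta(a_t\mid s_t) = \theta_{s_t,a_t} - \log\!\big(\sum_{b\in\mathcal{A}(s_t)} e^{\theta_{s_t,b}}\big)$, one sees immediately that this expression involves only parameters in the row indexed by $s_t$, so $\partial_{\theta_{s,a'}}\log\pi_\theta(a_t\mid s_t)=0$ whenever $s\neq s_t$; consequently the update leaves all other state rows untouched, matching the scope of the claimed formula. For $s=s_t$, differentiating the first term gives $\mathbb{I}[a'=a_t]$ and differentiating the log-partition term gives $-\,e^{\theta_{s_t,a'}}\big/\sum_b e^{\theta_{s_t,b}} = -\pi_\theta(a'\mid s_t)$, hence
\[
\partial_{\theta_{s_t,a'}}\log\pi_\theta(a_t\mid s_t) = \mathbb{I}[a'=a_t] - \pi_\theta(a'\mid s_t).
\]
Substituting this into the update at $\theta=\theta^k$ yields $\Delta\theta_{s_t,a_t} = \alpha\big(1-\pi_{\theta^k}(a_t\mid s_t)\big)A^k(s_t,a_t)$ and $\Delta\theta_{s_t,a'} = -\alpha\,\pi_{\theta^k}(a'\mid s_t)\,A^k(s_t,a_t)$ for $a'\neq a_t$, which is exactly the asserted two-case expression.

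I do not expect a genuine obstacle here: the argument is a standard softmax-score calculation. The only points that deserve care are (i) being precise that the ``policy gradient objective'' is optimized through the single-transition estimator, so that the advantage factor $A^k(s_t,a_t)$ and the score $\nabla_\theta\log\pi_{\theta^k}(a_t\mid s_t)$ appear in the update precisely as written; and (ii) observing that the softmax score is supported on the single state row $s_t$, which is what makes the displayed two-case formula exhaustive over all parameters $\theta_{s,a'}$. If one instead wanted the statement for a trajectory- or minibatch-averaged gradient, the same identity would apply termwise and one would just sum the per-transition contributions.
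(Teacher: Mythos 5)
Your proposal is correct and follows essentially the same route as the paper's proof: a single-sample policy-gradient ascent step combined with the standard softmax score identity $\partial_{\theta_{s_t,a'}}\log\pi_\theta(a_t\mid s_t)=\mathbb{I}[a'=a_t]-\pi_\theta(a'\mid s_t)$, then a case split on $a'=a_t$ versus $a'\neq a_t$. The only (harmless) addition is your explicit remark that parameters for states other than $s_t$ are unchanged, which the paper leaves implicit.
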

Focusing on the draft generation process, the application of negative sample masking modifies the draft advantage term from \hyperref[pro.2]{Proposition D.1} as follows:
\begin{equation*}
    A^k(s_t,a_t) = \hat{A}_{i}^{\text{draft}} = \left\{
    \begin{aligned}
        & \hat{A}_{i}^{\text{solution}}, & \text{if} \ r^{\text{solution}}_i=0 \\
        & \frac{\rho(d_i) - \text{mean}_{j}(\rho(d_j))}{\text{std}_j(\rho(d_j))}, & \text{if} \ r^{\text{solution}}_i=1
    \end{aligned}
    \right.
\end{equation*}
Where $-\rho(\cdot)$ is trajectory entropy. To align the settings with those used in PaperGuide, we analyze the parameter update direction under the assumption that \textit{negative sample masking is applied}. It can be divided into the following situations:

\ding{182}\textbf{Correct solution; Low trajectory entropy.} As a result of this update, the new policy assigns a higher probability to generating the target draft and, conversely, a lower probability to other potential drafts. This process establishes a positive feedback loop: drafts that are both high-quality and generated with high confidence receive a substantial reward, the magnitude of which is scaled by that confidence. This, in turn, reinforces the policy, increasing the likelihood that similar effective drafts will be generated in the future.

\ding{183}\textbf{Wrong solution; Low trajectory entropy.} For a wrong solution, our advantage calculation leads to $\hat{A}^{\text{draft}}_i = \hat{A}_i^{\text{solution}} \leq 0$. If the group contains a correct solution, the relative advantage term becomes negative, and the policy update consequently suppresses the associated low-quality draft, thereby promoting exploration. Moreover, negative sample masking addresses the worst-case scenario: if a group consists entirely of negative samples, this mechanism prevents any policy update, thus avoiding the erroneous reinforcement of poor-quality drafts.

\ding{184}\textbf{Correct solution; High trajectory entropy.} In contrast to \ding{182}, a draft that leads to a correct solution but is generated with low confidence (i.e., high trajectory entropy) is penalized. The policy update suppresses the probability of this particular draft, redistributing the likelihood across other potential plans. The rationale for this counter-intuitive mechanism is to discourage the agent from relying on overly general or lucky guess, which might succeed on simple quesions but do not reflect a robust understanding of the task. Ultimately, the goal is to incentivize the generation of drafts that are not only effective but also well-reasoned and produced with high confidence.

\ding{185}\textbf{Wrong solution; High trajectory entropy.} Similar to \ding{183}, the advantage for such a draft is guaranteed to be non-positive under negative sample masking. Given that the draft was generated with low confidence, the magnitude of the resulting suppressive policy gradient is amplified.

\subsection{Policy Entropy}
\begin{lemma}[Policy Entropy Update, \cite{zhang2025no}] \label{lem.3}
    Let the actor policy $\pi_{\theta}$ be a tabular softmax policy, the difference of information entropy $\mathcal{H}$ given state $s$ between two consecutive steps satisfies
\begin{equation}
    \mathcal{H}(\pi_{\theta}^{k+1}|s) - \mathcal{H}(\pi_{\theta}^{k}|s) \approx -\text{Cov}_{a \sim \pi_{\theta}^{k}(\cdot|s)} \left( \log \pi^k_{\theta}(a|s) \,,\, \Delta\theta_{s,a} \right).
\end{equation}
\end{lemma}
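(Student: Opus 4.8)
The plan is to obtain the identity from a first-order Taylor expansion of the entropy functional $\mathcal{H}(\pi_\theta|s) = -\sum_{a\in\mathcal{A}(s)} \pi_\theta(a|s)\log\pi_\theta(a|s)$ in the logit block $\theta_{s,\cdot}$, treating the single-step increment $\Delta\theta_{s,\cdot}$ --- whose components are supplied by \hyperref[pro.2]{Proposition D.1} --- as small; throughout, the symbol $\approx$ means precisely ``up to $O(\|\Delta\theta_{s,\cdot}\|^2)$'', i.e.\ up to terms that are quadratic in the learning rate $\alpha$. The first step is therefore to write
\begin{equation*}
\mathcal{H}(\pi_{\theta}^{k+1}|s) - \mathcal{H}(\pi_{\theta}^{k}|s) = \sum_{a'\in\mathcal{A}(s)} \left.\frac{\partial \mathcal{H}(\pi_\theta|s)}{\partial \theta_{s,a'}}\right|_{\theta = \theta^k}\!\!\Delta\theta_{s,a'} + O\!\left(\|\Delta\theta_{s,\cdot}\|^2\right),
\end{equation*}
so that the whole task reduces to computing $\partial\mathcal{H}/\partial\theta_{s,a'}$.

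For that, I would use the softmax Jacobian $\partial \pi_\theta(a|s)/\partial\theta_{s,a'} = \pi_\theta(a|s)\bigl(\mathbb{I}[a=a'] - \pi_\theta(a'|s)\bigr)$ together with $\tfrac{d}{dx}(x\log x) = \log x + 1$, obtaining
\begin{equation*}
\frac{\partial \mathcal{H}(\pi_\theta|s)}{\partial\theta_{s,a'}} = -\sum_{a} \pi_\theta(a|s)\bigl(\mathbb{I}[a=a'] - \pi_\theta(a'|s)\bigr)\bigl(\log\pi_\theta(a|s) + 1\bigr).
\end{equation*}
The key simplification is that the constant ``$+1$'' contribution vanishes, because $\sum_{a}\pi_\theta(a|s)\bigl(\mathbb{I}[a=a'] - \pi_\theta(a'|s)\bigr) = \pi_\theta(a'|s) - \pi_\theta(a'|s) = 0$; what survives, after using $\sum_a \pi_\theta(a|s)\log\pi_\theta(a|s) = -\mathcal{H}(\pi_\theta|s)$, collapses to $\partial\mathcal{H}(\pi_\theta|s)/\partial\theta_{s,a'} = -\pi_\theta(a'|s)\bigl(\log\pi_\theta(a'|s) + \mathcal{H}(\pi_\theta|s)\bigr)$.

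Substituting this gradient back into the expansion, the leading term is $-\sum_{a'}\pi_\theta^k(a'|s)\log\pi_\theta^k(a'|s)\,\Delta\theta_{s,a'} - \mathcal{H}(\pi_\theta^k|s)\sum_{a'}\pi_\theta^k(a'|s)\,\Delta\theta_{s,a'}$; using $-\mathcal{H}(\pi_\theta^k|s) = \mathbb{E}_{a\sim\pi_\theta^k(\cdot|s)}[\log\pi_\theta^k(a|s)]$ I would rewrite this as
\begin{align*}
&-\,\mathbb{E}_{a\sim\pi_\theta^k}\!\bigl[\log\pi_\theta^k(a|s)\,\Delta\theta_{s,a}\bigr] + \mathbb{E}_{a\sim\pi_\theta^k}\!\bigl[\log\pi_\theta^k(a|s)\bigr]\,\mathbb{E}_{a\sim\pi_\theta^k}\!\bigl[\Delta\theta_{s,a}\bigr] \\
&\qquad\qquad = -\,\text{Cov}_{a\sim\pi_\theta^k(\cdot|s)}\!\bigl(\log\pi_\theta^k(a|s),\,\Delta\theta_{s,a}\bigr),
\end{align*}
which is the stated identity. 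For consistency I would also remark that if $s$ is not the state visited at step $k$, then \hyperref[pro.2]{Proposition D.1} forces $\Delta\theta_{s,\cdot}=0$, so both sides vanish trivially.

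The main obstacle is bookkeeping rather than depth: one must be explicit that ``$\approx$'' is purely first-order (the discarded remainder is quadratic in $\Delta\theta$, hence $O(\alpha^2)$), and one must carry out the cancellation of the ``$+1$'' term carefully --- this is exactly the step that turns a generic $\pi$-weighted sum into an honest covariance. A minor supporting point worth stating is that the expansion is legitimate because the softmax entropy is smooth in $\theta$ on all of $\mathbb{R}^{|\mathcal{A}(s)|}$, so no boundary issues arise.
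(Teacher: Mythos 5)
Your derivation is correct, and it is the standard argument: the softmax Jacobian, the cancellation of the $+1$ term by $\sum_a \pi_\theta(a|s)\bigl(\mathbb{I}[a=a'] - \pi_\theta(a'|s)\bigr)=0$, the resulting gradient $\partial\mathcal{H}/\partial\theta_{s,a'} = -\pi_\theta(a'|s)\bigl(\log\pi_\theta(a'|s)+\mathcal{H}(\pi_\theta|s)\bigr)$, and the regrouping into a covariance via $-\mathcal{H}=\mathbb{E}[\log\pi]$ all check out, and your reading of $\approx$ as first-order in $\Delta\theta$ (hence $O(\alpha^2)$ remainder) is exactly what the statement intends. Note that the paper itself does not prove this lemma --- it imports it verbatim from \cite{zhang2025no} and only proves the companion Proposition D.1 giving the explicit form of $\Delta\theta_{s,a}$ --- so there is no in-paper proof to diverge from; your blind reconstruction matches the derivation in the cited source.
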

According to our previous analysis, the policy we concern is the likelihood of draft, that is, $\pi^k_{\theta}(a|s) = \pi^k_{\theta}(d_{i,t}|q,d_{i,<t})$. Combining the results from \hyperref[app.c.1]{Appendix D.1} and \hyperref[lem.3]{Lemma D.2}, we observe that the policy entropy decreases only when the agent generates a high-quality `draft' with high confidence (encouraging exploitation), and increases in all other cases (promoting exploration). Our experiments further confirm that although the entropy of successful drafts decreases, the policy does not suffer from mode collapse. This behavior reflects the intended role of our technical choices: in particular, the external verifiable signal and negative sample masking, which is introduced to prevent reinforcing erroneous drafts and thereby stabilize the optimization dynamics. They differentiate our framework from many Reinforcement Learning from Internal Feedback (RLIF) methods \citep{agarwal2025unreasonable,zhao2025learning}.

\subsection{Sketch Proofs}
\begin{proof}
With the policy gradient theorem \citep{sutton1999policy, shao2024deepseekmath}, which states that the gradient of the objective function $\mathcal{J}(\theta)$ can be expressed using the advantage function $A^\pi(s,a)$ as:
$$
\nabla_\theta \mathcal{J}(\theta) = \mathbb{E}_{s \sim d^\pi, a \sim \pi_\theta} \left[ \nabla_\theta \log \pi_\theta(a|s) A^\pi(s,a) \right]
$$
For a stochastic gradient ascent (SGA) update based on a single sample $(s_t, a_t)$ with learning rate $\alpha$, the parameter vector $\theta$ is updated in the direction of the gradient estimate:
$$
\theta^{k+1} = \theta^k + \alpha \nabla_\theta \log \pi_\theta^k(a_t|s_t) A^k(s_t, a_t)
$$
The change for a specific parameter $\theta_{s_t, a'}$, denoted by $\Delta \theta_{s_t, a'}$, is the component of this update vector corresponding to that parameter:
$$
\Delta \theta_{s_t, a'} = \alpha \frac{\partial}{\partial \theta_{s_t, a'}} \log \pi_\theta^k(a_t|s_t) \cdot A^k(s_t, a_t)
$$
For any state $s$ and action $a$, the log-policy is:
\begin{align*}
\log \pi_\theta(a|s) &= \log\left(\frac{\exp(\theta_{s,a})}{\sum_{b \in \mathcal{A}(s)} \exp(\theta_{s,b})}\right) \\
&= \theta_{s,a} - \log\left(\sum_{b \in \mathcal{A}(s)} \exp(\theta_{s,b})\right)
\end{align*}
Differentiating with respect to a parameter $\theta_{s,a'}$ yields:
\begin{align*}
\frac{\partial}{\partial \theta_{s,a'}} \log \pi_\theta(a|s) &= \frac{\partial \theta_{s,a}}{\partial \theta_{s,a'}} - \frac{\frac{\partial}{\partial \theta_{s,a'}} \sum_{b \in \mathcal{A}(s)} \exp(\theta_{s,b})}{\sum_{b \in \mathcal{A}(s)} \exp(\theta_{s,b})} \\
&= \mathbb{I}(a=a') - \frac{\exp(\theta_{s,a'})}{\sum_{b \in \mathcal{A}(s)} \exp(\theta_{s,b})} \\
&= \mathbb{I}(a=a') - \pi_\theta(a'|s)
\end{align*}
where $\mathbb{I}(\cdot)$ is the indicator function.

Substituting this result back into the expression for $\Delta \theta_{s_t, a'}$, with $s = s_t$ and $a = a_t$, we get:
$$
\Delta \theta_{s_t, a'} = \alpha \left( \mathbb{I}(a_t=a') - \pi_\theta^k(a'|s_t) \right) A^k(s_t, a_t)
$$
This single expression can be split into the two cases:
\begin{itemize}
    \item If $a' = a_t$, the indicator $\mathbb{I}(a_t=a_t) = 1$, so $\Delta \theta_{s_t, a_t} = \alpha \left( 1 - \pi_\theta^k(a_t|s_t) \right) A^k(s_t, a_t)$.
    \item If $a' \neq a_t$, the indicator $\mathbb{I}(a_t=a') = 0$, so $\Delta \theta_{s_t, a'} = \alpha \left( 0 - \pi_\theta^k(a'|s_t) \right) A^k(s_t, a_t) = -\alpha \pi_\theta^k(a'|s_t) A^k(s_t, a_t)$.
\end{itemize}
\end{proof}

\section{Benchmarks} \label{app.e}
\subsection{Datasets and Metrics}
\begin{itemize}
    \item \textbf{AirQA-Real} \citep{cao2025neusym} is a dataset for evaluating question-answering capabilities over full-length academic documents. It consists of 553 challenging question-answer pairs, all of which are manually annotated by 16 AI researchers based on a corpus of 6,797 recent scientific papers. The questions in the AirQA-Real dataset cover a wide variety of task types such as single-document detail extraction, multi-document analysis, and paper retrieval. Due to its complexity, which spans categories like text, tables, images, and formulas, it has become a valuable benchmark for evaluating the performance of advanced RAG systems, and is widely used to test an agent's ability to handle realistic research scenarios.

    \item \textbf{SciDQA} \citep{singh2024scidqa} is a dataset for evaluating the deep reading comprehension ability of models on scientific literature. It consists of 2,937 challenging question-answer pairs. All of them are naturally derived from the peer-review discussions on the OpenReview platform, featuring questions from expert reviewers and answers from paper authors. The questions in the SciDQA dataset cover a wide variety of information sources such as figures, tables, equations, appendices, and even require reasoning across multiple documents. Due to the high quality and depth of its questions, it has become a popular benchmark for evaluating complex scientific text understanding, and is widely used to facilitate research beyond surface-level comprehension. For our evaluation, we select a subset of 775 instances and reformatted them to align with the data structure of the AirQA-Real benchmark.
\end{itemize}

For AirQA-Real, we implement instance-specific, execution-based evaluation by designing 18 functions with optional parameters, also utilized for reward router. To evaluate performance on SciDQA, we utilize an LLM-as-a-judge to score agent responses against reference answers. The scoring is conducted on a 0-10 scale in 0.5-point increments and is then normalized to a 0-100 scale for reporting in \hyperref[tab.1]{Table 1}.

\subsection{Categories}
Questions within the benchmarks are categorized based on their primary data modality into five types: \textbf{text}, \textbf{table}, \textbf{image}, \textbf{formula}, and \textbf{metadata}. Following table presents illustrative examples from each category.

\renewcommand{\tabularxcolumn}[1]{m{#1}}
\renewcommand{\arraystretch}{1.5}
\begin{table}[h!]
    \centering
    \caption{Examples from datasets.}
    \begin{tabularx}{\linewidth}{l|X|X}
        \hline
        \textbf{Category} & \textbf{Question} & \textbf{Answer Format} \\
        \hline
        text & What are the main components of ERRA model? & Your answer should be a python list of strings, every element of the list is the name of the component directly mentionned in this paper. \\
        \hline
        table & On which language does LLaMA-2 13B with no removal reaches its second highest perplexity? & Your answer should be a word DIRECTLY FROM THE PDF WITHOUT ANY EXPLANATION. \\
        \hline
        image & In Figure 1, the presence of what operation divides the discretization process of continuous speech into two categories? & Your answer should be a python string. \\
        \hline
        formula & According to the methods proposed by this paper,how to calculate the bias Scores when aggregating attributions for tokens, instances and instructions respectively? & Your answer should be a python list of three elements, every element is a formula string in latex format.\\
        \hline
        metadata & Who is the corresponding author of this paper? & Your answer should be a python string about the name of the corresponding author.\\
        \hline
    \end{tabularx}
\end{table}

\section{Case Study}
\begin{tcolorbox}[breakable,title=Case Trajectory in AirQA-Real (PaperGuide-7B)]
\textcolor{black}{\textbf{[Question]:}  What research advances are incorporated into the generative language model that used to generate associations in different languages in the anchor\_pdf? \\
\textbf{[Answer Format]:} Your answer should be a python list of several strings.\\
\textbf{[Anchor PDF]}: `fd81f90f-555d-5e99-835b-153c2cdb7303'\\
\textbf{[Reference Answer (Score Points)]:}\footnote{Unseen for agent.} ["Compute-optimal scaling", "Improved dataset mixtures", "Architectural and objective improvements"]}\\

\textcolor{green1}{\textbf{[Draft]}:\\
1. You should first use the `ClassicRetrieve` tool to search for relevant context about research advances incorporated into the generative language model that generates associations in different languages. \\
2. Next, you should analyze the search results to identify the most relevant `pdf\_id` and use the `RetrieveFromDatabase` tool to get the title and abstract of the paper from the database. \\
3. If the retrieved title and abstract do not directly answer the question, you should refine your search by using the `ClassicRetrieve` tool again with a more specific query focusing on the generative language model and research advances. \\
4. Finally, you should use the `GenerateAnswer` tool to provide the final answer based on the most relevant information you have identified. \\
} 

~\\
\colorbox{gray!70}{%
    \parbox{\dimexpr\textwidth-2\fboxsep}{%
    \centering
        \textbf{Interaction Turn 1}%
    }%
}

~\\
\textcolor{orange1}{\textbf{[Thought]:} \\
To answer the question, I will first perform a similarity search in the vectorstore to find relevant context about research advances incorporated into the generative language model that generates associations in different languages. I will use the "text\_sentence\_transformers\_all\_minilm\_l6\_v2" collection for this purpose.}\\
\textcolor{black}{\textbf{[Action]:} \\ 
ClassicRetrieve(query='What research advances are incorporated into the generative language model that used to generate associations in different languages?', limit=5)}\\
\textcolor{black}{\textbf{[Observation]:}\\
\{"score":0.7122,"pdf\_id":"fd81f90f-555d-5e99-835b-153c2cdb7303","page\_number":2,\\"table\_name":"chunks","column\_name":"text\_content","primary\_key":"acafbdad-5c...\}\\
\{"score":0.6914,"pdf\_id":"a6ef6048-ad5a-5bec-bd71-050211a1a772","page\_number":11,\\"table\_name":"chunks","column\_name":"text\_content","primary\_key":"1e1a8633-2c...\}\\
\{"score":0.6858,"pdf\_id":"690324ff-819c-50e1-9e0e-212ff0e4288a","page\_number":13,\\"table\_name":"chunks","column\_name":"text\_content","primary\_key":"647610d9-aa...\} \\
\{"score":0.6796,"pdf\_id":"f92054f4-7190-53ad-9861-3cb1dc663ec0","page\_number":7,\\"table\_name":"chunks","column\_name":"text\_content","primary\_key":"410764aa-fb...\} \\
\{"score":0.6661,"pdf\_id":"a44a8437-adfe-5b69-90a7-7f7b3d88050d","page\_number":1,\\"table\_name":"chunks","column\_name":"text\_content","primary\_key":"0b877a14-6c...\}
}

~\\
\colorbox{gray!70}{%
    \parbox{\dimexpr\textwidth-2\fboxsep}{%
    \centering
        \textbf{Interaction Turn 2}%
    }%
}

~\\
\textcolor{orange1}{\textbf{[Thought]:} \\
The most relevant result is the first one, which mentions PaLM-2 (Anil et al., 2023) as a generative language model used for generating associations in different languages. I will retrieve the title and abstract of the paper with the corresponding `pdf\_id` from the database to get more context.}\\
\textcolor{black}{\textbf{[Action]:} \\ 
RetrieveFromDatabase(sql="SELECT title, abstract FROM metadata WHERE paper\_id = 'fd81f90f-555d-5e99-835b-153c2cdb7303';")}\\
\textcolor{black}{\textbf{[Observation]:}\\
\{"title":"SeeGULL Multilingual: a Dataset of Geo-Culturally Situated Stereotypes","abstract":"While generative multilingual models are rapidly being deployed, their safety and fairness evaluations are largely limited to resources collected in English. This is especially problematic for evaluations targeting inherently socio-cultural phenomena such as stereotyping, where it is important to build multilingual resources that reflect the stereotypes prevalent in respective language communities. However, gathering these resources, at scale, in varied languages and regions pose a significant challenge as it requires broad socio-cultural knowledge and can also be prohibitively expensive. To overcome this critical gap, we employ a recently introduced approach that couples LLM generations for scale with culturally situated validations for reliability, and build SeeGULL Multilingual, a global-scale multilingual dataset of social stereotypes, containing over 25K stereotypes, spanning 23 pairs of languages and regions they are common in, with human annotations, and demonstrate its utility in identifying gaps in model evaluations."\}
}

~\\
\colorbox{gray!70}{%
    \parbox{\dimexpr\textwidth-2\fboxsep}{%
    \centering
        \textbf{Interaction Turn 3}%
    }%
}

~\\
\textcolor{orange1}{\textbf{[Thought]:} \\
The retrieved title and abstract do not directly answer the question about research advances incorporated into the generative language model. I will refine my search by using a more specific query focusing on the generative language model and research advances.}\\
\textcolor{black}{\textbf{[Action]:} \\ 
ClassicRetrieve(query='What research advances are incorporated into the generative language model PaLM-2 for generating associations in different languages?', limit=5)}\\
\textcolor{black}{\textbf{[Observation]:}\\
\{"score":0.7419,"pdf\_id":"fd81f90f-555d-5e99-835b-153c2cdb7303","page\_number":2,\\"table\_name":"chunks","column\_name":"text\_content","primary\_key":"acafbdad-5c..."\}
\{"score":0.7357,"pdf\_id":"82faa706-bd3f-5fad-9180-b96a624b3dfb","page\_number":10,\\"table\_name":"chunks","column\_name":"text\_content","primary\_key":"4df57063-0d..."\}
\{"score":0.7095,"pdf\_id":"1e6eeeab-ba5c-508e-a693-62a9b39f2d92","page\_number":1,\\"table\_name":"chunks","column\_name":"text\_content","primary\_key":"a3b70ca0-c5..."\}
\{"score":0.6903,"pdf\_id":"eb787b77-5188-5411-b0f8-406356623bac","page\_number":3,\\"table\_name":"chunks","column\_name":"text\_content","primary\_key":"004cf773-bc..."\}
\{"score":0.6810,"pdf\_id":"f175c474-44cb-532e-bba1-d2316790de1c","page\_number":7,\\"table\_name":"chunks","column\_name":"text\_content","primary\_key":"62c2c2a7-44..."\}
}

~\\
\colorbox{gray!70}{%
    \parbox{\dimexpr\textwidth-2\fboxsep}{%
    \centering
        \textbf{Interaction Turn 4}%
    }%
}

~\\
\textcolor{orange1}{\textbf{[Thought]:} \\
The most relevant result is the fourth one, which mentions PaLM-2 incorporating a diverse set of research advances. I will retrieve the title and abstract of the paper with the corresponding `pdf\_id` from the database to get more context.}\\
\textcolor{black}{\textbf{[Action]:} \\ 
RetrieveFromDatabase(sql="SELECT title, abstract FROM metadata WHERE paper\_id = 'eb787b77-5188-5411-b0f8-406356623bac';")}\\
\textcolor{black}{\textbf{[Observation]:}\\
\{"title":"PaLM 2 Technical Report","abstract":"We introduce PaLM 2, a new state-of-the-art language model that has better multilingual and reasoning capabilities and is more compute-efficient than its predecessor PaLM. PaLM 2 is a Transformer-based model trained using a mixture of objectives. Through extensive evaluations on English and multilingual language, and reasoning tasks, we demonstrate that PaLM 2 has significantly improved quality on downstream tasks across different model sizes, while simultaneously exhibiting faster and more efficient inference compared to PaLM. This improved efficiency enables broader deployment while also allowing the model to respond faster, for a more natural pace of interaction. PaLM 2 demonstrates robust reasoning capabilities exemplified by large improvements over PaLM on BIG-Bench and other reasoning tasks. PaLM 2 exhibits stable performance on a suite of responsible AI evaluations, and enables inference-time control over toxicity without additional overhead or impact on other capabilities. Overall, PaLM 2 achieves state-of-the-art performance across a diverse set of tasks and capabilities.   When discussing the PaLM 2 family, it is important to distinguish between pre-trained models (of various sizes), fine-tuned variants of these models, and the user-facing products that use these models. In particular, user-facing products typically include additional pre- and post-processing steps. Additionally, the underlying models may evolve over time. Therefore, one should not expect the performance of user-facing products to exactly match the results reported in this report."\}
}

~\\
\colorbox{gray!70}{%
    \parbox{\dimexpr\textwidth-2\fboxsep}{%
    \centering
        \textbf{Interaction Turn 5}%
    }%
}

~\\
\textcolor{orange1}{\textbf{[Thought]:} \\
The retrieved title and abstract provide a comprehensive overview of the research advances incorporated into PaLM-2. The abstract mentions that PaLM 2 has better multilingual and reasoning capabilities, is more compute-efficient, and demonstrates robust reasoning capabilities and stable performance on responsible AI evaluations. I will use this information to generate the answer.}\\
\textcolor{black}{\textbf{[Action]:} \\ 
GenerateAnswer(answer=['PaLM-2 has better multilingual and reasoning capabilities, is more compute-efficient, and demonstrates robust reasoning capabilities and stable performance on responsible AI evaluations.', 'PaLM 2 has significantly improved quality on downstream tasks across different model sizes while exhibiting faster and more efficient inference compared to PaLM.', 'PaLM 2 exhibits stable performance on a suite of responsible AI evaluations, and enables inference-time control over toxicity without additional overhead or impact on other capabilities.'])}\\
\textcolor{red}{\textbf{[Answer]:}\\{}
['PaLM-2 has better multilingual and reasoning capabilities, is more compute-efficient, and demonstrates robust reasoning capabilities and stable performance on responsible AI evaluations.', 'PaLM 2 has significantly improved quality on downstream tasks across different model sizes while exhibiting faster and more efficient inference compared to PaLM.', 'PaLM 2 exhibits stable performance on a suite of responsible AI evaluations, and enables inference-time control over toxicity without additional overhead or impact on other capabilities.']} \checkmark
\end{tcolorbox}

\begin{tcolorbox}[breakable,title=Case Trajectory in AirQA-Real (Qwen2.5-14B-Instruct)]
\textcolor{black}{\textbf{[Question]:}  What research advances are incorporated into the generative language model that used to generate associations in different languages in the anchor\_pdf? \\
\textbf{[Answer Format]:} Your answer should be a python list of several strings.\\
\textbf{[Anchor PDF]}: `fd81f90f-555d-5e99-835b-153c2cdb7303'\\
\textbf{[Reference Answer (Score Points)]:} ["Compute-optimal scaling", "Improved dataset mixtures", "Architectural and objective improvements"]}\\

~\\
\colorbox{gray!70}{%
    \parbox{\dimexpr\textwidth-2\fboxsep}{%
    \centering
        \textbf{Interaction Turn 1}%
    }%
}

~\\
\textcolor{orange1}{\textbf{[Thought]:} \\
I need to find the research advances incorporated into the generative language model that used to generate associations in different languages from the anchor\_pdf. I will use the metadata and sections tables in the DuckDB database to retrieve this information.}\\
\textcolor{black}{\textbf{[Action]:} \\ 
RetrieveFromDatabase(sql="SELECT sections.section\_title, sections.section\_content FROM sections JOIN metadata ON sections.ref\_paper\_id = metadata.paper\_id WHERE metadata.paper\_id = 'fd81f90f-555d-5e99-835b-153c2cdb7303' AND sections.section\_title LIKE '\%language model\%' AND sections.section\_content LIKE '\%different languages\%';")}\\
\textcolor{black}{\textbf{[Observation]:}\\{}
[Warning]: The SQL execution result is empty, please check the SQL first.
}

~\\
\colorbox{gray!70}{%
    \parbox{\dimexpr\textwidth-2\fboxsep}{%
    \centering
        \textbf{Interaction Turn 2}%
    }%
}

~\\
\textcolor{orange1}{\textbf{[Thought]:} \\
The SQL query did not return any results, which means there are no sections directly titled or content containing phrases related to "language model" and "different languages". I will try a more general approach by searching for sections that discuss generative language models and multilingual associations.}\\
\textcolor{black}{\textbf{[Action]:} \\ 
ClassicRetrieve(query='What research advances are incorporated into the generative language model that used to generate associations in different languages?', limit=5)}\\
\textcolor{black}{\textbf{[Observation]:}\\
\{"score":0.7122,"pdf\_id":"fd81f90f-555d-5e99-835b-153c2cdb7303","page\_number":2,\\"table\_name":"chunks","column\_name":"text\_content","primary\_key":"acafbdad-5c...\}\\
\{"score":0.6914,"pdf\_id":"a6ef6048-ad5a-5bec-bd71-050211a1a772","page\_number":11,\\"table\_name":"chunks","column\_name":"text\_content","primary\_key":"1e1a8633-2c...\}\\
\{"score":0.6858,"pdf\_id":"690324ff-819c-50e1-9e0e-212ff0e4288a","page\_number":13,\\"table\_name":"chunks","column\_name":"text\_content","primary\_key":"647610d9-aa...\} \\
\{"score":0.6796,"pdf\_id":"f92054f4-7190-53ad-9861-3cb1dc663ec0","page\_number":7,\\"table\_name":"chunks","column\_name":"text\_content","primary\_key":"410764aa-fb...\} \\
\{"score":0.6661,"pdf\_id":"a44a8437-adfe-5b69-90a7-7f7b3d88050d","page\_number":1,\\"table\_name":"chunks","column\_name":"text\_content","primary\_key":"0b877a14-6c...\}
}

~\\
\colorbox{gray!70}{%
    \parbox{\dimexpr\textwidth-2\fboxsep}{%
    \centering
        \textbf{Interaction Turn 3}%
    }%
}

~\\
\textcolor{orange1}{\textbf{[Thought]:} \\
From the retrieved results, the first entry mentions that the generative language model used to generate associations in different languages is PaLM-2, which is trained on large multilingual text across hundreds of languages. This information is sufficient to answer the question.}\\
\textcolor{black}{\textbf{[Action]:} \\ 
GenerateAnswer(answer=['PaLM-2 is a generative language model trained on large multilingual text across hundreds of languages.'])}\\
\textcolor{red}{\textbf{[Answer]:}\\{}
['PaLM-2 is a generative language model trained on large multilingual text across hundreds of languages.']} \ding{55}
\end{tcolorbox}

\begin{tcolorbox}[breakable,title=Case Trajectory in SciDQA (PaperGuide-7B)]
\textcolor{black}{\textbf{[Question]:}  What are the results of CyCLIP on ImageNet-A/R/V2/Sketch datasets? \\
\textbf{[Answer Format]:} Your answer should be concise free-form text string, directly answering the question(s).\\
\textbf{[Anchor PDF]}:`6a0a1c4d-3b62-56cb-a6e9-48b790274828'\\
\textbf{[Reference Answer]:} CyCLIP outperforms CLIP on all these datasets on zero-shot evaluation.}\\

\textcolor{green1}{\textbf{[Draft]}:\\
1. You should first use the `ClassicRetrieve` tool to search for the results of CyCLIP on ImageNet-A/R/V2/Sketch datasets by querying the vectorstore with the relevant text. \\
2. Next, you should analyze the search results to identify the most relevant entries that contain the results of CyCLIP on the specified datasets. \\
3. If the initial search does not yield the specific results, you should refine the query to focus more specifically on the datasets and the performance metrics of CyCLIP. \\
4. If the refined query still does not yield the specific results, you should use the `RetrieveFromDatabase` tool to search for the results of CyCLIP on ImageNet-A/R/V2/Sketch datasets in the database, focusing on the relevant sections or tables. \\
5. Once you have identified the relevant results, you should use the `GenerateAnswer` tool to provide the final answer based on the identified results. \\
} 

~\\
\colorbox{gray!70}{%
    \parbox{\dimexpr\textwidth-2\fboxsep}{%
    \centering
        \textbf{Interaction Turn 1}%
    }%
}

~\\
\textcolor{orange1}{\textbf{[Thought]:} \\
To answer the question, I will first use the ClassicRetrieve action to search for the results of CyCLIP on ImageNet-A/R/V2/Sketch datasets by querying the vectorstore with the relevant text. I will use the collection "text\_sentence\_transformers\_all\_minilm\_l6\_v2" for this purpose, as it is a good general-purpose text embedding model.}\\
\textcolor{black}{\textbf{[Action]:} \\ 
ClassicRetrieve(query='results of CyCLIP on ImageNet-A/R/V2/Sketch datasets', limit=5)}\\
\textcolor{black}{\textbf{[Observation]:}\\
\{"score":0.6317,"pdf\_id":"6a0a1c4d-3b62-56cb-a6e9-48b790274828","page\_number":5,\\"table\_name":"chunks","column\_name":"text\_content","primary\_key":"6f1f2264-97...\}\\
\{"score":0.6300,"pdf\_id":"6a0a1c4d-3b62-56cb-a6e9-48b790274828","page\_number":5,\\"table\_name":"chunks","column\_name":"text\_content","primary\_key":"84631a60-03...\}\\
\{"score":0.6166,"pdf\_id":"6a0a1c4d-3b62-56cb-a6e9-48b790274828","page\_number":2,\\"table\_name":"chunks","column\_name":"text\_content","primary\_key":"9961b917-7b...\} \\
\{"score":0.6128,"pdf\_id":"6a0a1c4d-3b62-56cb-a6e9-48b790274828","page\_number":6,\\"table\_name":"chunks","column\_name":"text\_content","primary\_key":"f9e3a833-53...\} \\
\{"score":0.6063,"pdf\_id":"6a0a1c4d-3b62-56cb-a6e9-48b790274828","page\_number":10,\\"table\_name":"chunks","column\_name":"text\_content","primary\_key":"00448cf5-37...\}
}

~\\
\colorbox{gray!70}{%
    \parbox{\dimexpr\textwidth-2\fboxsep}{%
    \centering
        \textbf{Interaction Turn 2}%
    }%
}

~\\
\textcolor{orange1}{\textbf{[Thought]:} \\
The retrieved results provide some context about the performance of CyCLIP on the ImageNet-A, ImageNet-R, ImageNetV2, and ImageNetSketch datasets. However, the exact results for ImageNet-A/R/V2/Sketch datasets are not explicitly mentioned. I will now refine the query to focus more specifically on the datasets and the performance metrics of CyCLIP.}\\
\textcolor{black}{\textbf{[Action]:} \\ 
ClassicRetrieve(query='performance of CyCLIP on ImageNet-A R V2 Sketch datasets', limit=5)}\\
\textcolor{black}{\textbf{[Observation]:}\\
\{"score":0.6420,"pdf\_id":"6a0a1c4d-3b62-56cb-a6e9-48b790274828","page\_number":5,\\"table\_name":"chunks","column\_name":"text\_content","primary\_key":"6f1f2264-97...\}\\
\{"score":0.6386,"pdf\_id":"6a0a1c4d-3b62-56cb-a6e9-48b790274828","page\_number":2,\\"table\_name":"chunks","column\_name":"text\_content","primary\_key":"9961b917-7b...\}\\
\{"score":0.6083,"pdf\_id":"6a0a1c4d-3b62-56cb-a6e9-48b790274828","page\_number":5,\\"table\_name":"chunks","column\_name":"text\_content","primary\_key":"84631a60-03...\} \\
\{"score":0.6009,"pdf\_id":"6a0a1c4d-3b62-56cb-a6e9-48b790274828","page\_number":10,\\"table\_name":"chunks","column\_name":"text\_content","primary\_key":"00448cf5-37...\} \\
\{"score":0.5993,"pdf\_id":"6a0a1c4d-3b62-56cb-a6e9-48b790274828","page\_number":6,\\"table\_name":"chunks","column\_name":"text\_content","primary\_key":"f9e3a833-53...\}
}

~\\
\colorbox{gray!70}{%
    \parbox{\dimexpr\textwidth-2\fboxsep}{%
    \centering
        \textbf{Interaction Turn 3}%
    }%
}

~\\
\textcolor{orange1}{\textbf{[Thought]:} \\
The refined query has provided more specific information about the performance of CyCLIP on the ImageNet-A, ImageNet-R, ImageNetV2, and ImageNetSketch datasets. However, the exact results for ImageNet-A/R/V2/Sketch are not explicitly mentioned. I will now use the RetrieveFromDatabase action to search for the specific results in the database.}\\
\textcolor{black}{\textbf{[Action]:} \\ 
RetrieveFromDatabase(sql="SELECT text\_content FROM chunks WHERE ref\_paper\_id = '6a0a1c4d-3b62-56cb-a6e9-48b790274828' AND text\_content LIKE '\%CYCLIP\%' AND text\_content LIKE '\%ImageNet\%' AND text\_content LIKE '\%A\%' AND text\_content LIKE '\%R\%' AND text\_content LIKE '\%V2\%' AND text\_content LIKE '\%Sketch\%';")}\\
\textcolor{black}{\textbf{[Observation]:}\\
\{"text\_content":"generated using attacks on trained ImageNet models. In Table 2, we evaluate the zero-shot classification accuracy of CYCLIP on four natural distribution shift benchmarks for the ImageNet dataset: ImageNetV2 [48], ImageNetSketch [57], ImageNet-A [27], and ImageNet-R [25].  For most of the distribution shift benchmarks, both CLIP and CYCLIP undergo a significant reduction in their zero-shot performance compared to the original ImageNet1K dataset (last three columns in"\} \\
\{"text\_content":"Table 2: Zeroshot Classification on Natural Distribution Shifts (\%) ImageNetV2   ImageNetSketch   ImageNet-A   ImageNet-R   Top1   Top3   Top5   Top1   Top3   Top5   Top1   Top3   Top5   Top1   Top3   Top5   CLIP   16.91 29.28 34.99 10.37 19.15 24.20 4.23   11.35 16.88 24.32 39.69 47.20   CYCLIP 19.22 32.29 38.41 12.26 22.56 28.17 5.35   13.53 19.51 26.79 42.31 50.03   \%GAIN   +13.7 +10.3 +9.8   +18.2 +17.8 +16.4 +26.5 +19.2 +15.6 +10.2 +6.6   +6.0   this experiment by a significant margin of improvement (10 - 27\%). This result indicates that having"\}
}

~\\
\colorbox{gray!70}{%
    \parbox{\dimexpr\textwidth-2\fboxsep}{%
    \centering
        \textbf{Interaction Turn 4}%
    }%
}

~\\
\textcolor{orange1}{\textbf{[Thought]:} \\
The retrieved results provide the exact performance metrics of CyCLIP on the ImageNet-A, ImageNet-R, ImageNetV2, and ImageNetSketch datasets. The results indicate that CYCLIP outperforms CLIP with an average relative gain of +17\% on ImageNet natural distribution shift benchmarks. Based on this information, I can now generate the final answer.}\\
\textcolor{black}{\textbf{[Action]:} \\ 
GenerateAnswer(answer='CYCLIP outperforms CLIP with an average relative gain of +17\% on ImageNet natural distribution shift benchmarks.')}\\
\textcolor{red}{\textbf{[Answer]:}\\{}
CYCLIP outperforms CLIP with an average relative gain of +17\% on ImageNet natural distribution shift benchmarks.} \quad Score\footnote{Evaluated by gpt-4o-mini-2024-07-18}: \textbf{8.5}
\end{tcolorbox}

\begin{tcolorbox}[breakable,title=Case Trajectory in SciDQA (Qwen2.5-14B-Instruct)]
\textcolor{black}{\textbf{[Question]:}  What are the results of CyCLIP on ImageNet-A/R/V2/Sketch datasets? \\
\textbf{[Answer Format]:} Your answer should be concise free-form text string, directly answering the question(s).\\
\textbf{[Anchor PDF]}:`6a0a1c4d-3b62-56cb-a6e9-48b790274828'\\
\textbf{[Reference Answer]:} CyCLIP outperforms CLIP on all these datasets on zero-shot evaluation.}\\

~\\
\colorbox{gray!70}{%
    \parbox{\dimexpr\textwidth-2\fboxsep}{%
    \centering
        \textbf{Interaction Turn 1}%
    }%
}

~\\
\textcolor{orange1}{\textbf{[Thought]:} \\
To answer the question, I need to find the results of CyCLIP on the ImageNet-A/R/V2/Sketch datasets. I will first try to search the metadata table for any mentions of these datasets and CyCLIP. If no relevant information is found, I will then try to search the vectorstore using the text modalities.}\\
\textcolor{black}{\textbf{[Action]:} \\ 
ClassicRetrieve(query='CyCLIP and ImageNet-A/R/V2/Sketch datasets', limit=5)}\\
\textcolor{black}{\textbf{[Observation]:}\\
\{"score":0.5952,"pdf\_id":"65d774b3-6bb6-5475-926a-0dfa3a2e9100","page\_number":4,\\"table\_name":"chunks","column\_name":"text\_content","primary\_key":"1c8430f1-e3...\}\\
\{"score":0.5878,"pdf\_id":"6a0a1c4d-3b62-56cb-a6e9-48b790274828","page\_number":2,\\"table\_name":"chunks","column\_name":"text\_content","primary\_key":"9961b917-7b...\}\\
\{"score":0.5790,"pdf\_id":"a017fa2d-2f53-5616-b5a4-2aca563cc758","page\_number":5,\\"table\_name":"chunks","column\_name":"text\_content","primary\_key":"011300f4-e0...\} \\
\{"score":0.5608,"pdf\_id":"6a0a1c4d-3b62-56cb-a6e9-48b790274828","page\_number":5,\\"table\_name":"chunks","column\_name":"text\_content","primary\_key":"2d851c74-f4...\} \\
\{"score":0.5579,"pdf\_id":"6a0a1c4d-3b62-56cb-a6e9-48b790274828","page\_number":6,\\"table\_name":"chunks","column\_name":"text\_content","primary\_key":"f9e3a833-53...\}
}

~\\
\colorbox{gray!70}{%
    \parbox{\dimexpr\textwidth-2\fboxsep}{%
    \centering
        \textbf{Interaction Turn 2}%
    }%
}

~\\
\textcolor{orange1}{\textbf{[Thought]:} \\
From the search results, I found that CyCLIP has been evaluated on ImageNet-A and ImageNet-V2 datasets. However, there is no direct mention of ImageNet-R and ImageNet-Sketch datasets. I will now try to search the vectorstore using the image modality to find any relevant images or regions that might contain information about these datasets.}\\
\textcolor{black}{\textbf{[Action]:} \\ 
ClassicRetrieve(query='CyCLIP and ImageNet-A/R/V2/Sketch datasets', limit=5, modality='image')}\\
\textcolor{black}{\textbf{[Observation]:}\\{}
[Error]: Failed to parse the parameters for action ClassicRetrieve from the response. ClassicRetrieve.\_\_init\_\_() got an unexpected keyword argument 'modality'.
}

~\\
\colorbox{gray!70}{%
    \parbox{\dimexpr\textwidth-2\fboxsep}{%
    \centering
        \textbf{Interaction Turn 3}%
    }%
}

~\\
\textcolor{orange1}{\textbf{[Thought]:} \\
It seems that the 'modality' parameter is not supported in the current implementation of the ClassicRetrieve action. I will proceed with the text-based search and manually check if there is any mention of ImageNet-R and ImageNet-Sketch datasets in the retrieved results.}\\
\textcolor{black}{\textbf{[Action]:} \\ 
GenerateAnswer(answer='CyCLIP has been evaluated on ImageNet-A and ImageNet-V2 datasets. However, there is no direct mention of ImageNet-R and ImageNet-Sketch datasets in the retrieved results.')}\\
\textcolor{red}{\textbf{[Answer]:}\\{}
CyCLIP has been evaluated on ImageNet-A and ImageNet-V2 datasets. However, there is no direct mention of ImageNet-R and ImageNet-Sketch datasets in the retrieved results.} \quad Score\footnote{Evaluated by gpt-4o-mini-2024-07-18}: \textbf{1.5}
\end{tcolorbox}

\end{document}